\newcommand{\vv}[1]{\mathbf{#1}}
\newtheorem{theorem}{Theorem}
\newtheorem{corollary}{Corollary}
\newtheorem{lemma}{Lemma}
\newtheorem{example}{Example}
\newtheorem{proposition}{Proposition}
\newtheorem{remark}{Remark}
\newcommand{\X} {{\bf X}}
\newcommand{\x} {{\bf x}}
\newcommand{\y} {{\bf y}}
\newcommand{\E}{{ \mathbb{E}}}
\author[1]{Yue Xing}
\author[2]{Ruizhi Zhang}
\author[1]{Guang Cheng}
\affil[1]{Department of Statistics, Purdue University}
\affil[2]{Department of Statistics, University of Nebraska-Lincoln}
\title{
Adversarially Robust Estimate and Risk Analysis in Linear Regression
}
\begin{document}

\maketitle
\begin{abstract}
Adversarially robust learning aims to design algorithms that are robust to small adversarial perturbations on input variables. Beyond the existing studies on the predictive performance to adversarial samples, our goal is to understand statistical properties of adversarially robust estimates and analyze adversarial risk in the setup of linear regression models. By discovering the statistical minimax rate of convergence of adversarially robust estimators, we emphasize the importance of incorporating model information, e.g., sparsity, in adversarially robust learning. Further, we reveal an explicit connection of adversarial and standard estimates, and propose a straightforward two-stage adversarial learning framework, which facilitates to utilize model structure information to improve adversarial robustness. In theory, the consistency of the adversarially robust estimator is proven and its Bahadur representation is also developed for the statistical inference purpose. The proposed estimator converges in a sharp rate under either low-dimensional or sparse scenario. Moreover, our theory confirms two phenomena in adversarially robust learning: adversarial robustness hurts generalization, and unlabeled data help improve the generalization. In the end, we conduct numerical simulations to verify our theory.
\end{abstract}

\section{Introduction}\label{sec:intro}

The development of machine/deep learning methods has led to breakthrough performance in various areas of application. However, some recent research revealed that these powerful but delicate models are vulnerable to random perturbation and adversarial attacks. For example, well-designed malicious adversarial input may induce wrong decision making when filtering junk emails or detecting malicious binary programs \cite{DolphinAttack2017, papernot2017practical}. On the other hand, by studying adversarial samples, one can in turn improve adversarial robustness of algorithms in practice. The existing literature focus on generating adversarial samples, e.g., \cite{papernot2016crafting,papernot2017practical}, adversarial training, e.g., \cite{goodfellow2014explaining,kurakin2016adversarial,wang2019convergence},  invariance/interpretability to detect adversarial samples, e.g., \cite{xu2017feature,tao2018attacks,ma2019nic,etmann2019connection,carmon2019unlabeled} and  theoretical studies of adversarially robust learning, e.g., \cite{xu2009robust,xu2009robustness,xu2012robustness}. In particular, some studies \cite{yin2018rademacher,raghunathan2019adversarial} showed that adversarial training leads to a worse generalization performance, while  \cite{schmidt2018adversarially,zhai2019adversarially,najafi2019robustness} argued that the adversarial robustness requires more (labeled/unlabeled) data to enhance generalization performance. In addition, the trade-off between standard performance and adversarial performance is carefully characterized in \cite{zhang2019theoretically,adel2020precise}. 
	
Adversarially robust estimation in the literature is often formulated as an empirical ``min-max" problem: minimizing the empirical risk under the worst-case attack (which maximizes the loss) on the training data. Unfortunately, this formulation has not directly taken into account the structural information of the model such as sparsity and grouping, e.g., \cite{shaham2015understanding,sinha2018certifying,wang2019convergence}, which may be utilized to improve adversarial robustness. This is particularly needed in the high-dimensional regime, i.e., data dimension $p$ is much larger than sample size $n$, where the empirical (adversarial) risk may no longer converge to the population risk \cite{mei2018landscape}. 

The above concern raises two questions: (1) whether the statistical minimax\footnote{In this paper, ``min-max" refers to the optimization problem considered in adversarially robust learning, while ``minimax" refers to the statistical lower bound on the estimation error.} rate of the estimation error of \textit{any} linear adversarial estimator will get changed given certain structure information for the standard model, and (2) whether we can utilize these information to get better adversarially robust estimator.

Our contributions can be summarized as follows:
\begin{itemize}
    \item In Section \ref{sec:lower_bound}, by studying the form of adversarial risk, we figure out the minimax lower bound of estimation error, which reveals the potential to improve the estimation efficiency through using model information. 
    \item In Section \ref{sec:low}, we design a two-stage adversarially robust learning framework that nicely connects adversarially robust estimation with standard estimation. The model structure information can be easily embedded into the standard estimator, and is further carried over to the adversarially robust estimate through this two-stage learning procedure. For the purpose of statistical inference, we develop the Bahadur representation result \cite{he1996general} that implies the asymptotic normality of the proposed estimate under certain conditions. In addition, by analyzing the upper bound for the estimation error, we reveal the benefit of incorporating sparsity information into the adversarial estimation procedure, in which the estimator reaches the minimax optimal rate of convergence. 
    \item Besides the above two main contributions, in Section \ref{sec:two}, we utilize our theory to verify two arguments in adversarially robust learning: adversarially robust learning hurts generalization, and adversarial robustness can be improved using unlabeled data. 
\end{itemize}

There are two related works appearing very recently. The first one \cite{adel2020precise} mainly investigated the trade-off between adversarial risk and standard risk under an isotropic condition of the covariate. Rather, we focus on how to improve adversarial robustness through utilizing prior knowledge on the model, and study statistical properties of the adversarially robust estimate itself, in contrast with the generalization studies by \cite{schmidt2018adversarially,zhang2019theoretically,zhai2019adversarially,najafi2019robustness}. Another recent work \cite{chen2020sharp} studied the sharp statistical bound in adversarially robust {\em classification}. In the regression setup, our theorems reveal that an adversarially robust estimate is different from a standard estimate even in the {rate} of convergence: for noiseless case, standard model estimators can exactly recover the correct model, but the lower bound for adversarially robust model is always nonzero. Our lower bound for sparse model is also new. 

	{\bf Notation.} We use boldface font for vectors, e.g., $ \vv x$, and capital letters for matrices, e.g., $\vv A$. The $\ell_2$ norm of a vector $\vv u$ is denoted as $\|\vv u\|_2$ (or $\|\vv u\|$ for simplicity). The $p\times p$ identity matrix is denoted by $\vv I_p.$The induced spectral norm of a matrix $\vv A\in \mathbb{R}^{p\times p}$ is denoted by $\|\vv A\|$, i.e., $\|\vv A\| := \sup\{\|\vv A\vv x\|: \|\vv x\| = 1\}.$ We denote by $\lambda_i(\vv A),i\in\{1,2,\cdots,p\}$, its eigenvalues in decreasing
	order. For any symmetric matrix $\vv A,$ denote $\|\vv x\|_{\vv A}^2={\vv x^{\top}\vv A\vv x}.$ For two matrices $\vv A,\vv B,$ we denote $\left\langle \vv A,\vv B\right\rangle_{F}$ as the Frobenius inner product, which is the sum of component-wise inner product of two matrices. The Frobenius norm of a matrix $\vv A$ is denoted by $\|\vv A\|_{F}$.
	
	\section{Properties of Adversarial Risk}\label{sec:population}
	Consider a linear regression model
	\begin{eqnarray}\label{eqn:model}
	y=\x^{\top}\theta_0+\epsilon,
	\end{eqnarray}
	where $\mathbb{E}\x=\vv 0$, Var$(\x)=\Sigma$, and $\epsilon$ is a noise term (independent of $\x$) with $\E(\epsilon) =0$ and Var$(\epsilon)=\sigma^2$. Throughout this paper, we assume that $\x\in\mathbb{R}^{p}$ follows a $p$-dimensional Gaussian distribution and $\Sigma$ has a bounded largest eigenvalue (away from $\infty$) and a bounded smallest eigenvalue (away from $0$) as $p$ increases. The noise variance $\sigma^2$ and $\|\theta_0\|$ are allowed to diverge in $p$, and the signal-to-noise ratio $\|\theta_0\|_{\Sigma}/\sigma$ needs to be large enough, say bounded away from 0.
	
	The (population) adversarial risk is defined as follows
	\begin{eqnarray} \label{eqn:risk}
	R_0(\theta,\delta):=\E_{\vv x} \underset{\|\vv x^*-\vv x\|_2\le \delta}{\max}\left[(\vv (\x^*)^{\top}\theta-\vv x^{\top}\theta_0)^2\right]=\|\theta-\theta_0\|_{\Sigma}^2+2\delta c_0 \|\theta-\theta_0\|_{\Sigma}\|\theta\|+\delta^2\|\theta\|^2,
	\end{eqnarray}
	where $c_0:= \sqrt{2/\pi} $. The corresponding minimizer of (\ref{eqn:risk}) is denoted by $\theta^*(\delta),$ i.e., $$\theta^*(\delta):=\arg\min_{\theta}  R_0(\theta, \delta).$$ We may just use $\theta^*$ when no confusion arises.
	
	In the proposition below, we study the shape of $R_0$, and establish an analytical form of $\theta^*(\delta)$, which suggests the construction of adversarially robust estimator (to be specified later). Define $$\theta(\lambda):=(\Sigma+\lambda\vv I_p)^{-1}\Sigma\theta_0,$$ and two thresholds of $\delta$:
	\begin{eqnarray*}
		\delta_1=\frac{ c_0\|\theta_0\|}{\|\theta_0\|_{\Sigma^{-1}}}\;\;\mbox{and}\;\;\delta_2=\frac{\|\theta_0\|_{\Sigma^2}}{ c_0  \|\theta_0\|_{\Sigma}}.
	\end{eqnarray*}
	\begin{proposition}\label{thm:opt}
		The risk $R_0(\theta,\delta)$ is a convex function w.r.t. $\theta$, and has positive definite Hessian for any $\theta\neq \vv 0, \theta\neq \theta_0$. In addition, the global minimizer of $R_0(\theta,\delta)$ can be written as
		\begin{eqnarray}\label{thetas}
		\theta^*(\delta):=\theta(\lambda^*(\delta))
		\end{eqnarray}
		where $\lambda^*(\delta)$  depends on $(\delta,\Sigma,\theta_0)$. (1) If $\delta\leq\delta_1$, then $\lambda^*(\delta)=0$ such that $\theta^*=\theta_0$, and there is no stationary point for $R_0(\theta,\delta)$. (2) If $\delta\geq \delta_2$, then $\lambda^*(\delta)=\infty$ such that $\theta^*=\vv 0$, and there is no stationary point for $R_0(\theta,\delta)$. (3) If $\delta_1<\delta<\delta_2$, then there is a unique stationary point  $\theta(\lambda^*(\delta))$ of $R_0(\theta,\delta),$ which is the global optimum. Here $\lambda^*(\delta)$ is the solution of the following equation w.r.t. $\lambda$:
		\begin{eqnarray}
		\lambda\left(1+\delta  c_0  \frac{\|\theta(\lambda)\|}{\|\theta(\lambda)-\theta_0\|_{\Sigma}}\right)= \left(\delta  c_0  \frac{\|\theta(\lambda)-\theta_0\|_{\Sigma}}{\|\theta(\lambda)\|}+\delta^2\right).\label{eqn:cond}
		\end{eqnarray}

	\end{proposition}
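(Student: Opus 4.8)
The plan is to view $R_0(\theta,\delta)$ through the two scalar quantities $u(\theta):=\|\theta-\theta_0\|_{\Sigma}$ and $v(\theta):=\|\theta\|$, so that $R_0=u^2+2\delta c_0 uv+\delta^2 v^2$, and to exploit $c_0=\sqrt{2/\pi}<1$. The key identity for convexity is $R_0=(u+\delta c_0 v)^2+\delta^2(1-c_0^2)v^2$. Because $u$ and $v$ are norms of $\theta$ they are convex and nonnegative, so $u+\delta c_0 v$ is a nonnegative convex function whose square is convex, while $\delta^2(1-c_0^2)\|\theta\|^2$ is convex since $1-c_0^2>0$; hence $R_0$ is convex on $\mathbb{R}^p$, and in fact strictly convex (for $\delta>0$ through the last term, and for $\delta=0$ since $R_0=\|\theta-\theta_0\|_{\Sigma}^2$), so its global minimizer $\theta^*$ is unique. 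For the Hessian, $u$ is smooth for $\theta\neq\theta_0$ and $v$ for $\theta\neq\vv 0$, so $R_0$ is $C^2$ there; since the square of a nonnegative convex function has positive semidefinite Hessian and $\nabla^2[\delta^2(1-c_0^2)\|\theta\|^2]=2\delta^2(1-c_0^2)\vv I_p$, we get $\nabla^2 R_0\succeq 2\delta^2(1-c_0^2)\vv I_p\succ 0$ for $\delta>0$, and $\nabla^2 R_0=2\Sigma\succ 0$ for $\delta=0$.

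Next I would write the first-order condition in the smooth region. Using $\nabla u=\Sigma(\theta-\theta_0)/u$ and $\nabla v=\theta/v$, the equation $\nabla R_0=\vv 0$ becomes $(1+\delta c_0 v/u)\,\Sigma(\theta-\theta_0)+(\delta c_0 u/v+\delta^2)\,\theta=\vv 0$. Writing $\alpha=1+\delta c_0 v/u$ and $\beta=\delta c_0 u/v+\delta^2$, this is $(\alpha\Sigma+\beta\vv I_p)\theta=\alpha\Sigma\theta_0$, i.e. $\theta=(\Sigma+\lambda\vv I_p)^{-1}\Sigma\theta_0=\theta(\lambda)$ with $\lambda=\beta/\alpha>0$. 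Substituting $u=\|\theta(\lambda)-\theta_0\|_{\Sigma}$ and $v=\|\theta(\lambda)\|$ into $\lambda\alpha=\beta$ reproduces exactly equation (\ref{eqn:cond}). Thus any interior critical point has the claimed form $\theta(\lambda^*)$ with $\lambda^*$ solving (\ref{eqn:cond}), and by strict convexity it is the unique global optimum whenever it exists.

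It then remains to decide, as $\delta$ varies, whether $\theta^*$ sits at the nonsmooth point $\theta_0$ ($\lambda=0$), at $\vv 0$ ($\lambda=\infty$), or in the interior, which I would settle via one-sided directional derivatives. Along $\theta=\theta_0+tw$ with $\|w\|=1$ one computes $\frac{d}{dt}R_0|_{t=0^+}=2\delta c_0\|\theta_0\|\,\|w\|_{\Sigma}+2\delta^2\theta_0^{\top}w$; by convexity $\theta_0$ minimizes $R_0$ iff this is nonnegative for all $w$, and since $\max_{w}|\theta_0^{\top}w|/\|w\|_{\Sigma}=\|\theta_0\|_{\Sigma^{-1}}$ (substitute $z=\Sigma^{1/2}w$) this holds iff $\delta\le\delta_1$, giving case (1). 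Symmetrically, along $\theta=tw$ one gets $\frac{d}{dt}R_0|_{t=0^+}=-2\theta_0^{\top}\Sigma w+2\delta c_0\|\theta_0\|_{\Sigma}\|w\|$, so $\vv 0$ is optimal iff $\delta c_0\|\theta_0\|_{\Sigma}\ge\max_{\|w\|=1}\theta_0^{\top}\Sigma w=\|\theta_0\|_{\Sigma^2}$, i.e. iff $\delta\ge\delta_2$, giving case (2). When $\delta_1<\delta<\delta_2$ neither corner is optimal, so the unique minimizer lies in the smooth region and is therefore the interior stationary point of the previous paragraph, which is case (3); and the absence of a stationary point in (1) and (2) follows because any interior critical point would, by strict convexity, have to be the unique global minimizer, contradicting optimality of $\theta_0$ (resp. $\vv 0$).

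The main obstacle I anticipate is the careful treatment of the two corners $\vv 0$ and $\theta_0$: away from them the analysis is ordinary calculus, but the thresholds $\delta_1,\delta_2$ are precisely where the subdifferential of $R_0$ first contains $\vv 0$ at these nonsmooth points, so the argument must go through directional derivatives (equivalently subgradients) rather than gradients, and one must verify the two extremal identities $\max_{w}|\theta_0^{\top}w|/\|w\|_{\Sigma}=\|\theta_0\|_{\Sigma^{-1}}$ and $\max_{\|w\|=1}\theta_0^{\top}\Sigma w=\|\theta_0\|_{\Sigma^2}$. The convexity identity, by contrast, is the clean structural ingredient that delivers both uniqueness of $\theta^*$ and the reduction of the whole problem to the behavior at these two corners.
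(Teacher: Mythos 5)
Your proof is correct, and it takes a genuinely different route from the paper's. For convexity, the paper computes the Hessian of $R_0$ explicitly and splits it into three matrices whose positive (semi-)definiteness is checked by Cauchy--Schwarz together with $c_0<1$; your completion of squares $R_0=(u+\delta c_0 v)^2+\delta^2(1-c_0^2)v^2$ delivers convexity, \emph{strict} convexity, and the Hessian bound $\nabla^2 R_0\succeq 2\delta^2(1-c_0^2)\vv I_p$ in a few lines, and makes transparent that $c_0<1$ is the only fact being used. For the case analysis, the paper never invokes global convexity: it reduces to diagonal $\Sigma$ by orthogonal diagonalization, notes the minimizer must be $\vv 0$, $\theta_0$, or a stationary point, restricts $R_0$ to the curve $\theta_\eta=(\vv I_p-(\eta\Sigma+\vv I_p)^{-1})\theta_0$, and determines the sign of $r'(\eta)$ through $g(\eta)$, which costs three auxiliary lemmas (bounds and monotonicity of $H(\eta)$, proved with rearrangement inequalities). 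You instead treat $\theta_0$ and $\vv 0$ as the two kinks of a nonsmooth strictly convex function and test their optimality by one-sided directional derivatives, so that $\delta_1$ and $\delta_2$ emerge as exact dual-norm thresholds at which the subdifferential at each corner first contains $\vv 0$, and case (3) follows by elimination. Your route is shorter, avoids the diagonalization and the rearrangement lemmas, and yields uniqueness of $\theta^*$ for free; the paper's curve-restriction route, besides proving the proposition, produces monotonicity information along the curve ($r(\eta)$ decreasing then increasing, $H(\eta)$ decreasing) that gives a more constructive handle on $\lambda^*(\delta)$, and that style of one-dimensional $\lambda$-analysis is echoed in later proofs such as that of Theorem \ref{coro:generalization}. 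One small detail to add to your write-up: the proposition asserts that $\lambda^*(\delta)$ is \emph{the} solution of (\ref{eqn:cond}), so after establishing that the stationary point is unique you should note that $\lambda\mapsto\theta(\lambda)$ is injective (from $\Sigma(\theta(\lambda)-\theta_0)=-\lambda\theta(\lambda)$ and $\theta(\lambda)\neq\vv 0$ when $\theta_0\neq\vv 0$), so uniqueness of the stationary point indeed forces uniqueness of the root of (\ref{eqn:cond}); this is one line, but it should be said.
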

	
	The proof of Proposition \ref{thm:opt} is postponed to Appendix \ref{sec:appendix:population}.  
	
For a general $\Sigma$, it is hard to obtain an explicit solution for $\theta^*$ by solving (\ref{eqn:cond}). However, when $\Sigma=\vv I_p$, one can write down the explicit formula of $\theta^*(\delta)$, which is actually a re-scaled version of $\theta_0$. In this case, 		 $\delta_1=c_0$, $\delta_2=1/c_0$, and $\lambda^*(\delta)=(\delta^2-\delta c_0)/(1-\delta c_0)$ when $\delta\in(\delta_1,\delta_2)$. Moreover, the adversarial risk and standard risk of the adversarially robust model become
		\begin{eqnarray*}
			R_0(\theta^*(\delta),\delta)=\left\{
			\begin{array}{ll}
				\delta^2\|\theta_0\|^2& \hbox{$\delta\leq c_0 $ } \\
				\frac{\delta^2(1-c_0^2)}{\delta^2+1-2\delta c_0 }\|\theta_0\|^2& \hbox{$c_0 \le\delta\leq  1/c_0$ }\\
				\|\theta_0\|^2& \hbox{$\delta\ge 1/c_0$ }
			\end{array}
			\right.
		\end{eqnarray*}
	\begin{eqnarray*}
	    R_0(\theta^*(\delta),0)=\left\{
			\begin{array}{ll}
				0& \hbox{$\delta\le c_0 $ } \\
				\frac{\delta^2(\delta- c_0)^2}{(\delta^2+1-2\delta c_0)^2 }\|\theta_0\|^2& \hbox{$ c_0 \le\delta\leq  1/c_0$ }\\
			\|\theta_0\|^2& \hbox{$\delta\ge 1/c_0$ }
			\end{array}
			\right. .
	\end{eqnarray*}
Similar as $R_0(\theta^*(\delta),\delta)$, the standard risk of the adversarially robust model $R_0(\theta^*(\delta),0)$ also increases as $\delta$ and reaches the same level as $R_0(\theta^*(\delta),\delta)$ when $\delta>1/c_0$; see Figure \ref{fig:coro} below. This result echoes with \cite{adel2020precise,raghunathan2019adversarial} that the adversarially robust model leads to a worse performance when testing data is un-corrupted.
	\begin{figure}[!ht]
		\centering
		\includegraphics[scale=0.3]{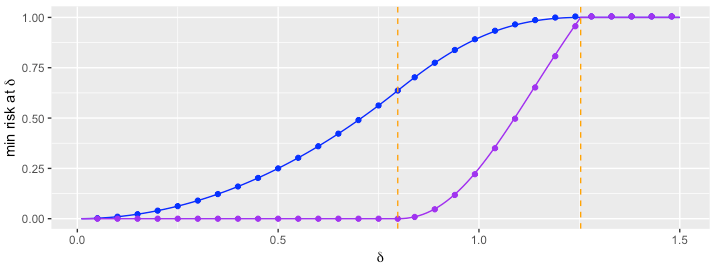}
		\caption{$R_0(\theta^*(\delta),\delta)$ and $R_0(\theta^*(\delta),0)$ correspond to blue and purple curves, respectively. Here, $\Sigma=\vv I_p$ and $\|\theta_0\|^2=1$. Dashed lines represent the two thresholds $\delta_1= c_0 $ (left) and $\delta_2=1/c_0$ (right). The curves are theoretical values for minimal adversarial risk solved from (\ref{eqn:cond}). The dots were obtained from simulations with $p=10$ and $n=10000$.}\label{fig:coro}
	\end{figure}
	
	\begin{remark}
		Besides adversarial risk, we define adversarial {\bf prediction} risk as $$R(\theta,\delta):=\E_{\vv x,y} \underset{\|\vv x^*-\vv x\|\le \delta}{\max}\left[\left(\vv (\x^*)^{\top}\theta-y\right)^2\right].$$
		The properties of $R$ are similar as $R_0$ when $\epsilon\sim N(0,\sigma^2)$, and we focus on $R_0$ in this paper.
	\end{remark}
	
\section{Minimax Lower Bound}\label{sec:lower_bound}
In this section, through figuring out the minimax lower bounds of the estimation error, we argue that it is essential to incorporate sparsity information of $(\theta_0,\Sigma)$ in $(\widehat\theta_0,\widehat\Sigma)$ in sparse model. For minimax lower bound in standard learning problems, studies can be found in 
\cite{dicker2016ridge,mourtada2019exact} for dense case and  \cite{verzelen2010high,ye2010rate,raskutti2011minimax} for sparse case.
	
	The following two theorems present the lower bounds of $\mathbb{E}\|\hat{\theta}-\theta^*\|^2$ 
	for dense/sparse models respectively. 
	
\begin{theorem}\label{thm:minimax:dense}
	When $\sigma/\|\theta_0\|<\infty$, $\sigma^2p/(\|\theta_0\|^2n)\rightarrow 0$, and $(p\log^2 n)/n\rightarrow 0$, if $\|\theta_0\|\leq R$, $0<c_1\leq\lambda_{\min}(\Sigma)\leq\lambda_{\max}(\Sigma)\leq c_2<\infty$, $\delta>0$, then there exists some constant $\delta>0$ such that
	\begin{eqnarray*}
	\inf_{\widehat{\theta}}\sup_{\Sigma,\theta_0,\delta} \mathbb{E}\|\widehat{\theta}-\theta^*\|^2=\Omega\left(  \frac{p\sigma^2}{n}\vee \frac{pR^2}{n}  \right),
	\end{eqnarray*}
	The estimator $\widehat{\theta}$ refers to any estimator $\widehat{\theta}(X,Y,\delta)$, and $\theta^*$ is a function of $(\theta_0,\Sigma,\delta)$.  
\end{theorem}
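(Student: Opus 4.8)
The plan is to prove the two terms in the maximum separately, by exhibiting two hard instances of $(\Sigma,\theta_0,\delta)$; since the risk is a supremum over these parameters, a lower bound obtained from either instance is inherited by the left-hand side, and the $\vee$ follows by retaining the larger of the two. Both instances are handled by reducing estimation of $\theta^*$ to a hypercube testing problem and invoking Assouad's lemma, so in each case I must supply (i) a family of parameters whose induced population minimizers $\theta^*$ are well separated, and (ii) a matching upper bound on the per-coordinate Kullback--Leibler divergence between the corresponding sampling distributions of $(\X,\y)$.

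For the term $p\sigma^2/n$ I would freeze $\Sigma$ at a well-conditioned baseline and a small $\delta$, and vary only $\theta_0$. Choosing $\delta\le c_0\sqrt{c_1}$ forces $\delta\le\delta_1$ uniformly over the family: since $\|\theta_0\|_{\Sigma^{-1}}^2=\theta_0^\top\Sigma^{-1}\theta_0\le\|\theta_0\|^2/c_1$, we get $\delta_1=c_0\|\theta_0\|/\|\theta_0\|_{\Sigma^{-1}}\ge c_0\sqrt{c_1}$, so by case (1) of Proposition \ref{thm:opt} we have $\theta^*=\theta_0$ throughout and the problem collapses to the classical Gaussian linear-regression lower bound. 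Taking a hypercube $\theta_0=\theta_0^{(0)}+(\sigma/\sqrt n)\sum_j\tau_j e_j$ with $\tau\in\{0,1\}^p$, the KL divergence between neighboring models is $O(n\|\Delta\|_\Sigma^2/\sigma^2)=O(1)$ per coordinate, while the separation in $\theta^*=\theta_0$ is $\asymp\sigma^2/n$ per coordinate, so Assouad yields $\Omega(p\sigma^2/n)$. The growth conditions $\sigma^2p/(\|\theta_0\|^2n)\to0$ and $(p\log^2n)/n\to0$ keep the family inside $\{\|\theta_0\|\le R\}$ and the regime $\delta\le\delta_1$, and guarantee the requisite concentration of the random design.

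The term $pR^2/n$ is the genuinely adversarial part and requires perturbing $\Sigma$ rather than $\theta_0$. Here I fix $\theta_0$ with $\|\theta_0\|\asymp R$ and a $\delta$ in the interior regime $(\delta_1,\delta_2)$, so that by case (3) of Proposition \ref{thm:opt} the minimizer is $\theta^*=\theta(\lambda^*)=\theta_0-\lambda^*(\Sigma+\lambda^*\vv I_p)^{-1}\theta_0$, which now depends on $\Sigma$. Writing $A=\Sigma+\lambda^*\vv I_p$ and $w=A^{-1}\theta_0$ and differentiating at fixed $\lambda$ gives $\partial_\Sigma\theta^*[\Delta]=\lambda^*A^{-1}\Delta\,w$. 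I would take $p-1$ Frobenius-orthogonal symmetric directions $\Delta_j=u_j w^\top+w u_j^\top$, where $\{u_j\}$ is an orthonormal basis of $w^\perp$, so that $\Delta_j w=\|w\|^2u_j$ and the induced shift in $\theta^*$ is $\lambda^*\|w\|^2A^{-1}u_j$. Since $A$ is well-conditioned and $\|w\|\asymp R$, these shifts are images of orthonormal vectors under a well-conditioned operator, hence near-orthogonal and of dimension $\Omega(p)$. Scaling each perturbation by $\gamma\asymp 1/(R\sqrt n)$ makes the per-coordinate KL between $N(0,\Sigma)^{\otimes n}$ and its perturbation of order $n\|\Sigma^{-1/2}(\gamma\Delta_j)\Sigma^{-1/2}\|_F^2\asymp n\gamma^2R^2\asymp1$, while each $\theta^*$-shift has norm $\asymp\gamma\|w\|^2\asymp R/\sqrt n$; Assouad over the $\Omega(p)$ coordinates then delivers $\Omega(pR^2/n)$, which survives even when $\sigma=0$.

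The main obstacle is that $\lambda^*$ itself depends on $\Sigma$ through the fixed-point equation (\ref{eqn:cond}), so the honest total derivative carries an extra term $\partial_\lambda\theta^*\cdot(d\lambda^*/dt)$ of the same order $\asymp\gamma R^2$ as the leading shift, which could conceivably cancel it. The observation that rescues the argument is that this correction always points along the single fixed direction $\partial_\lambda\theta^*=(\lambda^*A^{-1}-\vv I_p)A^{-1}\theta_0$, independent of the perturbation index $j$; only its scalar coefficient $d\lambda^*/dt$ varies with $j$. Hence across the $p-1$ perturbations the implicit $\lambda^*$-adjustment lives in a one-dimensional subspace and can remove at most one direction from the spread. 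Projecting the shifts onto the orthogonal complement of $\partial_\lambda\theta^*$ still leaves $\Omega(p)$ directions $\{P_{(\partial_\lambda\theta^*)^\perp}A^{-1}u_j\}$ with singular values bounded below, which is exactly what certifies $\|\theta^*_\tau-\theta^*_{\tau'}\|^2\ge c\,(R^2/n)\,\mathrm{Ham}(\tau,\tau')$ and closes the Assouad bound. Verifying that this projected family is genuinely well-conditioned, checking that the perturbed $\Sigma$ stays within $[c_1,c_2]$, and establishing differentiability of $\lambda^*(\Sigma)$ via the implicit function theorem applied to (\ref{eqn:cond}) constitute the bulk of the remaining technical work.
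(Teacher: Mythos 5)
Your proposal is correct in its overall architecture and reaches the right bound, but by a genuinely different route from the paper's. The decomposition is the same: one hard family varying $\theta_0$ (with $\delta$ small enough that $\theta^*$ coincides with $\theta_0$, or is a fixed scalar multiple of it) for the $p\sigma^2/n$ term, and one varying $\Sigma$ at fixed $\theta_0$ and interior $\delta$ for the $pR^2/n$ term. The paper, however, argues Bayesianly (Lemmas \ref{lem:minimax:dense:theta} and \ref{lem:minimax:dense:Sigma}): it places a Gaussian prior on $\theta_0$ with $\Sigma=\vv I_p$ (so that $\theta^*=(1-\kappa(\delta))\theta_0$) and an inverse-Wishart prior on $\Sigma$, lower-bounds the minimax risk by the posterior variance of $\theta^*$ computed through conjugacy, and then restores the constraints $\|\theta_0\|\le R$ and $c_1\le\lambda_{\min}(\Sigma)\le\lambda_{\max}(\Sigma)\le c_2$ by conditioning/truncation arguments; your route instead uses Assouad hypercubes with explicit KL control, which keeps every parameter inside the constraint set by construction and makes the persistence of the $pR^2/n$ term at $\sigma=0$ transparent. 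Notably, both proofs hinge on the same resolution of the central obstacle you isolate, the implicit dependence $\lambda^*(\Sigma)$: the correction to $\theta^*$ it induces is a scalar multiple of a single fixed vector. The paper freezes $\lambda$ at its value for the posterior center and shows the residual $(\lambda^*-\lambda)(\Sigma+\lambda^*\vv I_p)^{-1}(\Sigma+\lambda\vv I_p)^{-1}\Sigma\theta_0$ is a negligible remainder (their display (\ref{eqn:minimax:dense:sigma:1})), whereas you project the shifts onto $(\partial_\lambda\theta^*)^\perp$. Two points of care on your side: (i) the first-order expansion of $\theta^*(\Sigma)$ must be made uniform over the hypercube, since the base point moves by $O(\sqrt{p/n})$ in operator norm, which perturbs both the leading shifts and the direction $\partial_\lambda\theta^*$ (this costs only a relative $o(1)$, so it works); and (ii) your claim that projection leaves $\Omega(p)$ directions ``with singular values bounded below'' is slightly too strong as stated, because projecting the well-conditioned frame $\{A^{-1}u_j\}$ onto a hyperplane can collapse one direction entirely; the fix is to note $\sum_j\|P_{\partial_\lambda\theta^*}A^{-1}u_j\|^2\le\|A^{-1}\|^2=O(1)$, discard the $O(1)$ bad coordinates, and run Assouad on the remaining $\Omega(p)$ ones. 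In exchange for these verifications, your argument is more elementary and self-contained (standard Gaussian KL bounds only), while the paper's conjugate-prior computation yields exact posterior covariances but needs inverse-Wishart moment formulas and truncation of unbounded priors.
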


For sparse model, the sparsity of $\theta_0$ is directly controlled through the size of active set of $\theta_0$. In terms of the sparsity of $\Sigma$, we follow \cite{cai2016optimal} to consider a family of sparse covariance matrix as follows:
	\begin{eqnarray*}
	\mathcal{F}_{\alpha}=\bigg\{ \Sigma:\max_j\sum_{i}\{ |\sigma_{ij}|:|i-j|>k \} \leq Mk^{-\alpha}\;\forall k,\lambda_{\max}(\Sigma)\leq M_0,\;\lambda_{\min}(\Sigma)\geq m_0>0 \bigg\}.
	\end{eqnarray*}
	\begin{theorem}\label{thm:minimax:sparse}
When $\sigma/\|\theta_0\|<\infty$, if $\|\theta_0\|\leq R$ and $\|\theta_0\|_0\leq s$, $0<c_1\leq\lambda_{\min}(\Sigma)\leq\lambda_{\max}(\Sigma)\leq c_2<\infty$, $\delta>0$, then for any $0<s<p$ and $\alpha>0$, there exists some constant $\delta>0$ such that
	\begin{eqnarray*}
	\inf_{\widehat{\theta}}\sup_{\Sigma\in\mathcal{F}_{\alpha},\theta_0,\delta}\mathbb{E}\|\widehat{\theta}-\theta^*\|^2=\Omega\left( s\sigma^2\frac{1+\log (p/s)}{n}\vee R^2n^{-\frac{2\alpha}{2\alpha+1}} \right).
	\end{eqnarray*}
\end{theorem}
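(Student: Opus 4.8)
The plan is to prove the two terms inside the maximum separately, via two independent reductions over disjoint pieces of the parameter family, and then observe that the supremum over the whole family dominates each sub-family bound. Both reductions rest on Fano's inequality: I will exhibit a finite collection of nuisance parameters that is simultaneously \emph{indistinguishable} from the data (pairwise KL divergence small relative to $\log$ of the cardinality) and \emph{well-separated} in the target functional $\theta^*$. The only structural input I need is the explicit form $\theta^*(\delta)=(\Sigma+\lambda^*\vv I_p)^{-1}\Sigma\theta_0$ from Proposition \ref{thm:opt}, together with the fact that, for a single favorable $\delta$ fixed strictly between the thresholds $\delta_1,\delta_2$ (the bounded-eigenvalue and bounded-SNR assumptions guarantee a common such interval as $\Sigma$ varies), the map from the nuisance parameters to $\theta^*$ has derivatives bounded away from $0$ and $\infty$.

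\emph{First term (sparse regression).} Fix $\Sigma=\vv I_p$, which lies in $\mathcal F_\alpha$, and fix $\delta$ in the intermediate regime, and let only $\theta_0$ vary over an $s$-sparse, radius-$R$ packing built from the Varshamov--Gilbert bound, giving $\log N\gtrsim s\log(p/s)$ hypotheses with pairwise $\ell_2$-separation of order $\rho\sqrt s$. Because $\Sigma$ is held fixed, changing $\theta_0$ affects only the conditional law $\y\mid\X\sim N(\X^{\top}\theta_0,\sigma^2)$, so the per-sample KL divergence equals $\tfrac{1}{2\sigma^2}\|\theta_0^{(i)}-\theta_0^{(j)}\|_{\Sigma}^2\asymp\tfrac{1}{2\sigma^2}\|\theta_0^{(i)}-\theta_0^{(j)}\|^2$. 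With $\Sigma$ isotropic, $\theta^*=c(\delta,\|\theta_0\|)\,\theta_0$ is a fixed rescaling with $c$ bounded away from $0$, so separation in $\theta_0$ transfers directly to separation in $\theta^*$. Balancing the total KL $\tfrac{n}{2\sigma^2}\rho^2 s$ against $\log N$ and invoking Fano yields $\inf_{\widehat\theta}\sup\E\|\widehat\theta-\theta^*\|^2\gtrsim s\sigma^2(1+\log(p/s))/n$, the additive $1$ absorbing the $s\asymp p$ boundary case.

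\emph{Second term (covariance).} Now fix a sparse $\theta_0$ with $\|\theta_0\|=R$ and fix $\delta$, and let only $\Sigma$ range over the least-favorable bandable packing of $\mathcal F_\alpha$ from \cite{cai2016optimal}; here changing $\Sigma$ alters only the marginal $\X\sim N(\vv 0,\Sigma)$, and for bounded-eigenvalue Gaussians $\mathrm{KL}(N(\vv0,\Sigma_i)\,\|\,N(\vv 0,\Sigma_j))\asymp\|\Sigma_i-\Sigma_j\|_F^2$. The crux is transferring $\Sigma$-separation into $\theta^*$-separation. Writing $\theta^*(\Sigma)=\theta_0-\lambda^*(\Sigma+\lambda^*\vv I_p)^{-1}\theta_0$ and using the resolvent identity, to leading order (freezing $\lambda^*$),
\begin{eqnarray*}
\theta^*(\Sigma_i)-\theta^*(\Sigma_j)=\lambda^*(\Sigma_i+\lambda^*\vv I_p)^{-1}(\Sigma_i-\Sigma_j)(\Sigma_j+\lambda^*\vv I_p)^{-1}\theta_0+(\text{lower-order terms from }\lambda^*_i-\lambda^*_j).
\end{eqnarray*}
Since $(\Sigma+\lambda^*\vv I_p)^{-1}$ is well-conditioned, this gives $\|\theta^*(\Sigma_i)-\theta^*(\Sigma_j)\|\gtrsim\|(\Sigma_i-\Sigma_j)\theta_0\|$. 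I will adapt the packing so that its off-diagonal ``bump'' perturbations live on coordinates in the support of $\theta_0$, which reduces the problem to recovering the relevant columns of $\Sigma$ and produces $\|(\Sigma_i-\Sigma_j)\theta_0\|\gtrsim R\,n^{-\alpha/(2\alpha+1)}$ while keeping KL of constant order; this is precisely where the $R^2$ prefactor originates. Fano then yields $\gtrsim R^2 n^{-2\alpha/(2\alpha+1)}$.

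\emph{Main obstacle.} The delicate step is the second reduction. The Cai--Zhou packing is engineered for operator-norm separation, whereas I need $\|(\Sigma_i-\Sigma_j)\theta_0\|$ large for a fixed, norm-$R$, $s$-sparse $\theta_0$; I must therefore verify that the packing can be realized inside $\mathcal F_\alpha$ with perturbation directions that are \emph{not} annihilated by $\theta_0$, and that the bandable off-diagonal decay caps the achievable column separation at exactly the rate $n^{-\alpha/(2\alpha+1)}$. Simultaneously I must control the induced fluctuation $\lambda^*_i-\lambda^*_j$: since $\lambda^*$ solves the implicit equation (\ref{eqn:cond}), I would bound $|\lambda^*_i-\lambda^*_j|$ by the implicit function theorem and show this contribution is of smaller order than the resolvent term, so the genuine $\ell_2$-separation of the $\theta^*$'s is preserved. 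Taking the maximum of the two displayed bounds then completes the proof.
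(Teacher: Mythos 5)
Your treatment of the first term is essentially the paper's own route: fix $\Sigma=\vv I_p\in\mathcal F_\alpha$, use that $\theta^*$ is then a deterministic rescaling of $\theta_0$ (indeed $\theta^*=\theta_0$ for $\delta<\delta_1=c_0$), and reduce to the standard $s$-sparse regression lower bound. The paper imports this bound from Proposition 4.3 of \cite{verzelen2010high}, whereas you re-derive it via Varshamov--Gilbert plus Fano; the reduction and the transfer of separation through the rescaling are the same, and that part is sound.

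The covariance term is where there is a genuine, fatal gap. Your crux is a packing $\{\Sigma_i\}\subset\mathcal F_\alpha$ with $\|(\Sigma_i-\Sigma_j)\theta_0\|\ge c\,R\,n^{-\alpha/(2\alpha+1)}$ ``while keeping KL of constant order.'' This is impossible, and it contradicts the KL formula you yourself invoke: with eigenvalues bounded in $[c_1,c_2]$ one has $\mathrm{KL}\asymp n\|\Sigma_i-\Sigma_j\|_F^2$, so constant KL forces $\|\Sigma_i-\Sigma_j\|_F=O(n^{-1/2})$, whence $\|(\Sigma_i-\Sigma_j)\theta_0\|\le\|\Sigma_i-\Sigma_j\|\,\|\theta_0\|\le R\,\|\Sigma_i-\Sigma_j\|_F=O(Rn^{-1/2})$, which is $o\bigl(Rn^{-\alpha/(2\alpha+1)}\bigr)$ for every $\alpha>0$. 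Switching to Fano with $N$ hypotheses does not rescue this: the same inequality caps the separation at $R\sqrt{(\log N)/n}$, so you would need $\log N\asymp n^{1/(2\alpha+1)}$; but $(\Sigma_i-\Sigma_j)\theta_0$ depends only on the $s$ columns of $\Sigma_i-\Sigma_j$ hit by $\theta_0$ (take $s=1$), and the set of admissible single columns under the $\mathcal F_\alpha$ decay constraint does not have metric entropy of order $n^{1/(2\alpha+1)}$ at scale $n^{-\alpha/(2\alpha+1)}$ --- estimating one column is a strictly easier problem than operator-norm estimation of $\Sigma$, which is what the Cai--Zhou packing is engineered for. Relatedly, your division into ``main term'' and ``lower-order term'' is exactly backwards relative to the paper's proof. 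In the paper's two-point argument ($\Sigma_1=\vv I_p$, $\Sigma_2=\vv I_p+\vv D$ with $k=n^{1/(2\alpha+1)}$ cross entries of size $a=n^{-(\alpha+1)/(2\alpha+1)}$, $\theta_0=e_1$), the resolvent term you rely on has size $\|\vv D\theta_0\|=\sqrt k\,a=n^{-1/2}$ and is negligible; the entire separation of order $ka=n^{-\alpha/(2\alpha+1)}$ comes from the shift $\varepsilon=\lambda_2^*-\lambda_1^*$ of the implicit regularization level --- precisely the contribution you propose to bound away --- because the scalar stationarity condition (\ref{eqn:cond}) aggregates the $k$ bump entries linearly (the paper's $\Delta_1\asymp ka$), and this shift moves $\theta^*$ by $\Theta(\varepsilon)$ along the fixed direction $\theta_0$. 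Had you succeeded in showing the $\lambda^*$-fluctuation is dominated by the resolvent term, you would have established a separation of order $n^{-1/2}$ and hence only an $R^2/n$ lower bound. Any correct argument must make the $\Sigma$-dependence of $\lambda^*(\delta)$ the source of the separation, as the paper's proof does.
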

 The proof of the above two theorems utilize some tools in \cite{mourtada2019exact,verzelen2010high,cai2016optimal}. A difficulty compared with existing literature in standard learning is that the relationship between $\theta_0$ and $\theta^*$ is nonlinear, and $\theta^*$ further depends on $\Sigma$.  The details are in Appendix \ref{sec:appendix:low}.

To compare Theorem \ref{thm:minimax:dense} and \ref{thm:minimax:sparse}, the lower bound for sparse model is much smaller than the one for dense model. This indicates that there is a potential improvement for adversarially robust estimators if the algorithm can utilize the sparsity information (if there is). As discussed in \cite{belkin2019two,xing2020generalization}, for high-dimensional model, if we do not consider the sparsity information, the resulting model is not consistent in both standard and adversarially robust learning problems.

 To compare with standard learning problem, the results in Theorem \ref{thm:minimax:dense} and \ref{thm:minimax:sparse} are different from those in standard learning. Such a difference implies it is hard to train adversarially robust models. In standard learning, under either dense or sparse model, when $\sigma^2=0$, the lower bound is exact zero since some estimators of $\theta_0$ can achieve zero estimation error. However, when $\delta>0$, even if $\sigma^2=0$, the lower bound is not zero. 
 
\begin{remark}
Similar to our results, \cite{chen2020sharp} provided minimax lower bound of generalization error under the adversarially robust classification setup. However, they only considered the dense case corresponding to our Theorem \ref{thm:minimax:dense}, but not for the sparse case.
\end{remark}

\begin{remark}
Similar as for adversarial risk, to minimize adversarial prediction risk, the estimators have lower bounds in the same rate as in Theorem \ref{thm:minimax:dense} and \ref{thm:minimax:sparse}.
\end{remark}

	\section{Two-stage Adversarially Robust Estimator}\label{sec:low}
In this section, we propose a two-stage procedure for constructing adversarially robust estimators based on the explicit relation pointed out in the previous section. This relation allows us to incorporate specific model information, such as sparsity, into adversarially robust estimates  through standard estimate. The idea of the proposed method is similar to the estimators in \cite{chen2020sharp,carmon2019unlabeled} and the method is straightforward. We emphasize that such a simple two-stage method is powerful enough to achieve minimax optimal.

\subsection{Estimator description}
	There are two stages in the proposed method. In the first stage, consistent estimators of the true parameter $\theta_0$, denoted as  $\widehat{\theta}_0$, and  matrix $\Sigma$, denoted as $\widehat{\Sigma}$, are obtained from standard statistical procedures. In the second stage, the robust estimator of $\theta^*$, which minimizes the adversarial risk, is constructed as follows: 
	\begin{eqnarray}\label{eqn:our_estimator}
	\widehat\theta(\delta):=\widehat{\theta}(\widehat\lambda^*(\delta)):=(\widehat{\Sigma}+\widehat\lambda^*(\delta) \vv I_p)^{-1}\widehat\Sigma\widehat{\theta}_0,
	\end{eqnarray}
	where $\widehat\lambda^*(\delta)$ is a plug-in estimate of $\lambda^*(\delta)$ depending on $\widehat\theta_0$ and $\widehat\Sigma$. Alternatively speaking, $\widehat\theta(\delta)$ may be obtained by minimizing an empirical version of (\ref{eqn:risk}):
	\begin{eqnarray}\label{eqn:empirical}
	\widehat{R}_0(\theta,\delta):=\widehat R_0(\theta,\widehat{\theta}_0,\widehat{\Sigma},\delta)=\|\theta-\widehat{\theta}_0\|_{\widehat{\Sigma}}^2+2\delta  c_0  \|\theta-\widehat\theta_0\|_{\widehat{\Sigma}}\|\theta\|+\|\theta\|^2.
	\end{eqnarray}
According to the proof of Proposition \ref{thm:opt}, the empirical risk $\widehat{R}_0(\theta,\delta)$ shares similar properties as adversarial risk ${R}_0(\theta,\delta)$ in Proposition \ref{thm:opt}. 	We may simply use $\widehat{\theta}$ instead of $\widehat\theta(\delta)$  when no confusion arises. 
	
	\subsection{Consistency}
We first show that for any level of attack $\delta,$ the adversarial excess risk converges to zero, i.e., (\ref{eqn:thm3}), as long as the standard estimates of $\theta_0$ and $\Sigma$ are consistent with proper rates and $p$ does not grow too fast. Next, combining the convex properties of $R_0$, the upper bound in (\ref{eqn:thm3}) implies the consistency of $\widehat{\theta}$ in estimating $\theta^*$; see Theorem \ref{thm:bah}. This consistency result will be used in deriving the generalization error in Theorem \ref{coro:generalization} later.

	\begin{theorem}\label{thm:lsm}
		For any consistent estimators $\widehat{\theta}_0$ and $\widehat{\Sigma}$, with probability tending to 1,
		\begin{eqnarray}\label{eqn:thm3}
		\sup\limits_{\delta\geq 0} \left|R_0(\theta^*(\delta),\delta)-R_0(\widehat{\theta}(\delta),\delta)\right|=O\left( \|\widehat{\theta}_0-\theta_0\|\|\theta_0\| \right)+O\left(\|\theta_0\|^2\sqrt{\|\widehat{\Sigma}-\Sigma\|}\right).\nonumber
		\end{eqnarray}
	\end{theorem}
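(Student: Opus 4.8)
The plan is to treat this as an \emph{excess-risk} bound driven by the perturbation of the risk functional when the population pair $(\theta_0,\Sigma)$ is replaced by the estimates $(\widehat\theta_0,\widehat\Sigma)$. Since $\theta^*(\delta)$ minimizes $R_0(\cdot,\delta)$ and $\widehat\theta(\delta)$ minimizes $\widehat R_0(\cdot,\delta)$, the quantity inside the supremum is nonnegative and admits the standard two-sided comparison
$$0\le R_0(\widehat\theta,\delta)-R_0(\theta^*,\delta)\le \big[R_0(\widehat\theta,\delta)-\widehat R_0(\widehat\theta,\delta)\big]+\big[\widehat R_0(\theta^*,\delta)-R_0(\theta^*,\delta)\big],$$
where the term $\widehat R_0(\widehat\theta,\delta)-\widehat R_0(\theta^*,\delta)\le 0$ has been dropped using optimality of $\widehat\theta$. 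Hence it suffices to bound $|R_0(\theta,\delta)-\widehat R_0(\theta,\delta)|$ at the two specific points $\theta\in\{\theta^*(\delta),\widehat\theta(\delta)\}$, uniformly in $\delta\ge 0$. Subtracting the two risk formulas, the $\delta^2\|\theta\|^2$ terms cancel and we are left with a \emph{squared} piece $\|\theta-\theta_0\|_\Sigma^2-\|\theta-\widehat\theta_0\|_{\widehat\Sigma}^2$ and a \emph{cross} piece $2\delta c_0\|\theta\|\big(\|\theta-\theta_0\|_\Sigma-\|\theta-\widehat\theta_0\|_{\widehat\Sigma}\big)$.

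Before bounding these, I would record the a priori bounds that tame the explicit factor $\delta$. Because $\theta^*(\delta)=(\Sigma+\lambda^*\vv I_p)^{-1}\Sigma\theta_0$ with $\lambda^*\ge 0$, this matrix is symmetric with spectral norm below $1$, so $\|\theta^*\|\le\|\theta_0\|$; likewise $\|\widehat\theta\|\le\|\widehat\theta_0\|=O(\|\theta_0\|)$ on an event of probability tending to $1$. More importantly, comparing the minimizer to $\theta=\vv{0}$ gives $\delta^2\|\theta^*\|^2\le R_0(\theta^*,\delta)\le R_0(\vv{0},\delta)=\|\theta_0\|_\Sigma^2$, so that $\delta\|\theta^*\|=O(\|\theta_0\|)$ \emph{uniformly in} $\delta$; the empirical analogue $\delta\|\widehat\theta\|=O(\|\theta_0\|)$ follows from $\delta^2\|\widehat\theta\|^2\le\widehat R_0(\widehat\theta,\delta)\le\widehat R_0(\vv{0},\delta)=\|\widehat\theta_0\|_{\widehat\Sigma}^2$. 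These facts, together with the eigenvalue bounds on $\Sigma$ (inherited by $\widehat\Sigma$ on a high-probability event), make every norm appearing below $O(\|\theta_0\|)$ at the evaluation points, with $\delta\|\theta\|$ also $O(\|\theta_0\|)$.

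The two pieces are then estimated by the same telescoping device, inserting an intermediate term that changes only $\theta_0\to\widehat\theta_0$ or only $\Sigma\to\widehat\Sigma$. For the squared piece, $\|a\|_\Sigma^2-\|b\|_\Sigma^2=(a-b)^\top\Sigma(a+b)$ with $a-b=\widehat\theta_0-\theta_0$ yields an $O(\|\theta_0\|\,\|\widehat\theta_0-\theta_0\|)$ contribution, while $\|v\|_\Sigma^2-\|v\|_{\widehat\Sigma}^2=v^\top(\Sigma-\widehat\Sigma)v$ yields an $O(\|\theta_0\|^2\|\Sigma-\widehat\Sigma\|)$ contribution, the latter dominated by $\|\theta_0\|^2\sqrt{\|\widehat\Sigma-\Sigma\|}$ since $\|\widehat\Sigma-\Sigma\|\to0$. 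For the cross piece I would split $\|\theta-\theta_0\|_\Sigma-\|\theta-\widehat\theta_0\|_{\widehat\Sigma}$ into $(\|\theta-\theta_0\|_\Sigma-\|\theta-\widehat\theta_0\|_\Sigma)$ and $(\|\theta-\widehat\theta_0\|_\Sigma-\|\theta-\widehat\theta_0\|_{\widehat\Sigma})$; the first is controlled linearly by the reverse triangle inequality, $|\cdot|\le\|\theta_0-\widehat\theta_0\|_\Sigma\le\sqrt{\lambda_{\max}(\Sigma)}\,\|\widehat\theta_0-\theta_0\|$, whereas the second uses $|\sqrt{a}-\sqrt{b}|\le\sqrt{|a-b|}$ to produce $\|\theta-\widehat\theta_0\|\sqrt{\|\Sigma-\widehat\Sigma\|}=O(\|\theta_0\|)\sqrt{\|\Sigma-\widehat\Sigma\|}$. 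Multiplying by $2\delta c_0\|\theta\|=O(\|\theta_0\|)$ turns these into $O(\|\theta_0\|\,\|\widehat\theta_0-\theta_0\|)$ and $O(\|\theta_0\|^2\sqrt{\|\widehat\Sigma-\Sigma\|})$, which is the asserted rate. The main obstacle, and the reason the covariance error enters as $\sqrt{\|\widehat\Sigma-\Sigma\|}$ rather than linearly, is precisely the uniform-in-$\delta$ control of the cross piece: one cannot bound $\|\theta-\theta_0\|_\Sigma-\|\theta-\widehat\theta_0\|_{\widehat\Sigma}$ linearly in $\|\widehat\Sigma-\Sigma\|$ without dividing by a norm that may vanish, so the crude $\sqrt{\cdot}$ inequality is forced, and it is the a priori bound $\delta\|\theta\|=O(\|\theta_0\|)$ that prevents the explicit $\delta$ from destroying uniformity over the unbounded range $\delta\ge0$.
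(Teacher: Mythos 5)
Your proposal is correct and takes essentially the same route as the paper's proof: the same optimality-based decomposition $R_0(\widehat{\theta},\delta)-R_0(\theta^*,\delta)\le\bigl[R_0(\widehat{\theta},\delta)-\widehat{R}_0(\widehat{\theta},\delta)\bigr]+\bigl[\widehat{R}_0(\theta^*,\delta)-R_0(\theta^*,\delta)\bigr]$, the same telescoping of the squared and cross pieces (changing $\theta_0\to\widehat{\theta}_0$ and $\Sigma\to\widehat{\Sigma}$ one at a time), and the same $|\sqrt{a}-\sqrt{b}|\le\sqrt{|a-b|}$ device that forces the $\sqrt{\|\widehat{\Sigma}-\Sigma\|}$ rate. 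The only minor (and arguably cleaner) deviation is how uniformity over $\delta\ge 0$ is secured: you use the a priori bound $\delta\|\theta^*\|\le\|\theta_0\|_{\Sigma}$ obtained by comparing the minimizer with $\theta=\vv 0$, whereas the paper truncates at a constant $\delta'$ beyond which $\widehat{\theta}=\theta^*=\vv 0$ with probability tending to one and treats $\delta$ as bounded on the remaining range; both arguments suffice.
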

	
	To illustrate Theorem~\ref{thm:lsm} in details, we use $\widehat\theta_0=(\X^{\top}\X)^{-1}\X^{\top}\y$ and $\widehat\Sigma=\X^{\top}\X/n$ to construct $\widehat\theta.$ Based on Theorem 2 in \cite{hsu2012random} (taking ridge penalty as zero) and Theorem \ref{thm:lsm},  we have with probability tending to 1,
	\begin{equation}\label{eg0}
	   \frac{ R_0(\widehat{\theta},\delta)-R_0(\theta^*,\delta)}{\|\theta_0\|_{\Sigma}^2+\sigma^2}=o(1),
	\end{equation}
which implies the adversarial excess risk of $\widehat{\theta}$ converges to zero as long as $(p\log n)/n\rightarrow 0.$	

The proof of Theorem~\ref{thm:lsm} is postponed to Appendix \ref{sec:appendix:low}. We also postpone an analog of Theorem \ref{thm:lsm} for the adversarial prediction risk $R$ to Appendix \ref{sec:appendix:more} (for the statement) and \ref{sec:appendix:low} (for the proof). Note that 
the upper bound in (\ref{eqn:thm3}) is not tight, but enough to justify the adversarial risk consistency of $\widehat{\theta}(\delta)$.

We next use an example to illustrate how sparsity information can be utilized in the proposed framework.

	\begin{example}[Sparse Standard Estimates]\label{example:sparse}
	Assume  matrix belongs to the family $\mathcal{F}_{\alpha}$, then using the sparse estimator $\widehat\Sigma$ in \cite{cai2016optimal}, we have
	\begin{eqnarray*}
	\mathbb{E}\|\widehat{\Sigma}-\Sigma\|^2= O\left(n^{-\frac{2\alpha}{2\alpha+1}}+\frac{\log p}{n}\right).
	\end{eqnarray*}
Assume $\widehat\theta_0$ is the LASSO estimate obtained under proper penalization. Denote $s<n$ as the number of nonzero coefficients in $\theta_0$. When $\x$ follows Gaussian and the noise $\epsilon$ satisfies $\mathbb{E}\exp\{ t\epsilon^2 \}<\infty$ for some $t>0$, based on \cite{bickel2009simultaneous,jeng2018post}, we have with probability tending to 1,
\begin{eqnarray*}
	\|\widehat{\theta}_0-\theta_0\|=O\left(\sigma\sqrt{\frac{s\log p}{n}}\right).
	\end{eqnarray*}
Therefore, (\ref{eg0}) holds under weaker conditions, say  $(\sigma s\log p)/n\rightarrow 0$ and $(\log p)/n\rightarrow 0$. On the other hand, we point out that $\widehat\theta$ $(\theta^*)$ does not inherit the sparsity of $\widehat\theta_0$ $(\theta_0)$ according to (\ref{eqn:our_estimator}) and (\ref{thetas}).	
\end{example}

\subsection{Bahadur representation and convergence rate}
We next study statistical properties of the adversarially robust estimator $\widehat\theta$ by establishing its Bahadur representation \cite{he1996general} that implies asymptotic normality in some cases. 
	
	\begin{theorem}\label{thm:bah}
		 Assume both $\|\widehat{\theta}_0-\theta_0\|/\|\theta_0\|$ and $\|\widehat{\Sigma}-\Sigma\|$ converge to zero in probability.
		
		(1) If $\delta\in(\delta_1,\delta_2)$, then $\widehat{\theta}-\theta^*$ is a linear combination of $\widehat{\theta}_0-\theta_0$ and $\widehat{\Sigma}-\Sigma$ in the main term:
		\begin{eqnarray*}
			\widehat{\theta}-\theta^*
			&=&\vv M_1(\theta^*,\theta_0,\Sigma)(\widehat{\theta}_0-\theta_0)+(\theta^*-\theta_0)^{\top}(\widehat{\Sigma}-\Sigma)(\theta^*-\theta_0)\vv M_2(\theta^*,\theta_0,\Sigma) \\&&+\vv M_3(\theta^*,\theta_0,\Sigma) (\widehat{\Sigma}-\Sigma)(\theta^*-\theta_0)+o_p(\|\widehat{\theta}-\theta^*\|),
		\end{eqnarray*}
		where $\vv M_1$, $\vv M_2$, and $\vv M_3$ are functions of $(\delta,\theta_0,\Sigma,\theta^*)$, and detailed formulas are postponed to Appendix \ref{sec:appendix:more}.
	
		(2) If $\delta <\delta_1$, then
		$
		\widehat{\theta}-\theta^*=\widehat{\theta}_0-\theta_0+o_p(\|\widehat{\Sigma}-\Sigma\|)+o_p(\|\widehat{\theta}_0-\theta_0\|).
		$
		
		(3) If $\delta>\delta_2$, we have
		$
		\widehat{\theta}-\theta^*=o_p(\|\widehat{\Sigma}-\Sigma\|)+o_p(\|\widehat{\theta}_0-\theta_0\|).
		$
		
	\end{theorem}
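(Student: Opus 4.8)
The plan is to characterize both $\theta^*$ and $\widehat\theta$ through their first-order (stationarity) conditions and then run a perturbation/implicit-function expansion. Writing $a=a(\theta,\theta_0,\Sigma):=\|\theta-\theta_0\|_{\Sigma}$ and $b=b(\theta):=\|\theta\|$, the gradient of $R_0$ (divided by $2$) gives the estimating function
\begin{eqnarray*}
F(\theta;\theta_0,\Sigma):=\Big(1+\delta c_0\tfrac{b}{a}\Big)\Sigma(\theta-\theta_0)+\Big(\delta c_0\tfrac{a}{b}+\delta^2\Big)\theta,
\end{eqnarray*}
so that $\theta^*$ solves $F(\theta^*;\theta_0,\Sigma)=\vv 0$, and since $\widehat R_0$ has the same structural form with $(\theta_0,\Sigma)$ replaced by $(\widehat\theta_0,\widehat\Sigma)$, the minimizer $\widehat\theta$ solves $F(\widehat\theta;\widehat\theta_0,\widehat\Sigma)=\vv 0$. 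Differentiating $F=\vv 0$ directly, rather than the closed form (\ref{eqn:our_estimator}), is the key simplification: it bypasses the implicit dependence of $\lambda^*$ on $(\theta_0,\Sigma)$ through (\ref{eqn:cond}), which is already ``integrated out'' in the stationarity condition.

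I would first establish the consistency $\widehat\theta\to_p\theta^*$ needed to license the expansion. For case (1), $\delta\in(\delta_1,\delta_2)$, Proposition \ref{thm:opt} guarantees that $\theta^*$ is the unique stationary point with $\theta^*\neq\vv 0$ and $\theta^*\neq\theta_0$, so $a(\theta^*,\theta_0,\Sigma)>0$ and $b(\theta^*)>0$ are bounded away from zero, $F$ is continuously differentiable near $(\theta^*,\theta_0,\Sigma)$, and the Hessian $\vv H:=\partial F/\partial\theta|_{\theta^*}$ is positive definite there. Combining this local strong convexity with the vanishing excess-risk bound of Theorem \ref{thm:lsm} (equivalently, argmin continuity of the convex $\widehat R_0\to R_0$ under $\widehat\theta_0\to_p\theta_0$ and $\widehat\Sigma\to_p\Sigma$) yields $\|\widehat\theta-\theta^*\|\to_p 0$.

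With consistency in hand, I would Taylor-expand $\vv 0=F(\widehat\theta;\widehat\theta_0,\widehat\Sigma)$ about $(\theta^*,\theta_0,\Sigma)$. Using $F(\theta^*;\theta_0,\Sigma)=\vv 0$ and the invertibility of $\vv H$,
\begin{eqnarray*}
\widehat\theta-\theta^*=-\vv H^{-1}\Big[\tfrac{\partial F}{\partial\theta_0}(\widehat\theta_0-\theta_0)+\tfrac{\partial F}{\partial\Sigma}:(\widehat\Sigma-\Sigma)\Big]+o_p(\|\widehat\theta-\theta^*\|),
\end{eqnarray*}
the remainder being second order in the increments. Computing the two Jacobians then produces the three announced terms: $\partial F/\partial\theta_0$ acts linearly on $\widehat\theta_0-\theta_0$, giving the $\vv M_1$ term; and $\Sigma$ enters $F$ only through the bilinear vector $\Sigma(\theta-\theta_0)$ and through the scalar $a^2=(\theta-\theta_0)^{\top}\Sigma(\theta-\theta_0)$. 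Perturbing the former contributes $(\widehat\Sigma-\Sigma)(\theta^*-\theta_0)$ pre-multiplied by a fixed matrix (the $\vv M_3$ term), while perturbing the latter gives the scalar $(\theta^*-\theta_0)^{\top}(\widehat\Sigma-\Sigma)(\theta^*-\theta_0)$ times a fixed vector (the $\vv M_2$ term). Matching $-\vv H^{-1}$ composed with each Jacobian against these shapes identifies $\vv M_1,\vv M_2,\vv M_3$ as the stated functions of $(\delta,\theta_0,\Sigma,\theta^*)$.

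Cases (2) and (3) are boundary regimes handled by continuity. The thresholds $\delta_1,\delta_2$ are continuous functions of $(\theta_0,\Sigma)$, so their plug-in versions satisfy $\widehat\delta_1\to_p\delta_1$ and $\widehat\delta_2\to_p\delta_2$; since $\delta<\delta_1$ (resp. $\delta>\delta_2$) strictly, with probability tending to one we also have $\delta<\widehat\delta_1$ (resp. $\delta>\widehat\delta_2$), whence $\widehat\lambda^*=0$ and $\widehat\theta=\widehat\theta_0$ (resp. $\widehat\lambda^*=\infty$ and $\widehat\theta=\vv 0$) by Proposition \ref{thm:opt} applied to $\widehat R_0$. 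As $\theta^*=\theta_0$ (resp. $\theta^*=\vv 0$) here, the stated identities follow, the $o_p$ terms absorbing the complementary vanishing-probability event. I expect the main obstacle to be the $\Sigma$-perturbation bookkeeping in case (1): one must cleanly separate the two channels through which $\widehat\Sigma-\Sigma$ propagates—the vector channel $\Sigma(\theta-\theta_0)$ and the scalar channel $a^2$—and verify that the quadratic and cross remainders are genuinely $o_p(\|\widehat\theta-\theta^*\|)$, which hinges on $a$ and $b$ being bounded away from zero precisely because $\delta$ lies strictly inside $(\delta_1,\delta_2)$.
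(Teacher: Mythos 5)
Your proposal is correct and follows essentially the same route as the paper: the paper likewise subtracts the stationarity conditions $F(\widehat\theta;\widehat\theta_0,\widehat\Sigma)=\vv 0$ and $F(\theta^*;\theta_0,\Sigma)=\vv 0$, Taylor-expands $A=\|\theta\|/\|\theta-\theta_0\|_{\Sigma}$ and $1/A$ around $(\theta^*,\theta_0,\Sigma)$ (yielding exactly your scalar channel through $\|\theta-\theta_0\|_{\Sigma}^2$ and vector channel through $\Sigma(\theta-\theta_0)$), and inverts the positive-definite Hessian $\vv H(\theta^*,\theta_0,\Sigma)$ to identify $\vv M_1,\vv M_2,\vv M_3$, with a remainder of order $\|\widehat\theta-\theta^*\|^2/\|\theta_0\|$. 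Your treatment of cases (2) and (3) via consistency of the plug-in thresholds $\widehat\delta_1,\widehat\delta_2$, forcing $\widehat\theta=\widehat\theta_0$ or $\widehat\theta=\vv 0$ with probability tending to one, is also exactly the paper's argument.
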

	
The proof for Theorem \ref{thm:bah} is postponed to Appendix \ref{sec:appendix:low}. We next illustrate how the Bahadur representation can be used to infer the asymptotic normality of $\widehat\theta$. 

\begin{example}[Least Square Estimate]\label{example:1} Consider the least square estimate (OLS)
		\begin{eqnarray*}
	\widehat{\theta}_0=(\X^{\top}\X)^{-1}\X^{\top}\y,\qquad \widehat{\Sigma}=\frac{1}{n}\X^{\top}\X.
		\end{eqnarray*}
		It is trivial to see that $\widehat\theta=\vv 0$ in probability when $\delta>\delta_2$ based on Theorems \ref{thm:opt} and \ref{thm:bah}. When $\delta\in[0,\delta_1)$, the asymptotic normality of $\sqrt{n/p}(\widehat\theta-\theta^*)$ trivially follows the fact that $\widehat\theta=\widehat\theta_0$ in probability and $\theta^*=\theta_0$. When $\delta\in(\delta_1,\delta_2)$,
		\begin{eqnarray*}
			\widehat{\theta}-\theta^*=\vv m_1+\vv m_2 +\vv m_3+o_p(\|\widehat{\theta}-\theta^*\|),
		\end{eqnarray*}
		where
		\begin{eqnarray*}
		   \vv m_1&=&\vv M_1\left[\frac{\Sigma^{-1}}{n}\sum_{i=1}^nx_i \epsilon_i\right], \\
		   \vv m_2&=&\vv M_2\left[\frac{1}{n}\sum_{i=1}^n(\theta^*-\theta_0)^{\top}(x_ix_i^{\top}-\Sigma)(\theta^*-\theta_0)\right],\\
		   \vv m_3&=&\vv M_3\left[\frac{1}{n}\sum_{i=1}^n(x_ix_i^{\top}-\Sigma)(\theta^*-\theta_0)\right] .
		\end{eqnarray*}
		If $p$ is fixed and $\delta\in(\delta_1,\delta_2)$, then $\sqrt{n}(\widehat{\theta}-\theta^*)$ asymptotically converges to a zero-mean Gaussian. For inference purpose, we need to estimate $Var(\widehat\theta)$. Since $x_i\epsilon_i$ and $(x_ix_i^{\top}-\Sigma)$ in $\vv m_1,\vv m_2,\vv m_3$ are both i.i.d. random variables, and  $x_i$ follows Gaussian distribution, one can figure out the variance of $\widehat\theta$. As a result, replacing $(\theta^*,\theta_0,\Sigma,\delta)$ with $(\widehat{\theta},\widehat\theta_0,\widehat\Sigma,\delta)$, one can obtain an estimate of $Var(\widehat\theta)$. As a side remark, if $p$ diverges in $n$, we have $\|\widehat{\theta}-\theta^*\|/\sqrt{\|\theta_0\|_{\Sigma}^2+\sigma^2}=O_p(\sqrt{p/n})$.
	\end{example}

Furthermore, when using dense/sparse estimators of $(\theta_0,\Sigma)$, our proposed two-stage estimator achieves minimax rate optimal in dense/sparse models respectively. The upper bound of  $\mathbb{E}\|\widehat{\theta}-\theta^*\|^2$
can be developed from Theorem \ref{thm:bah}:

\begin{corollary}\label{prop:dense}
Denote $v^2=\|\theta_0\|^2_{\Sigma}+\sigma^2$. When $(p\log n)/n\rightarrow 0$, $\widehat{\theta}_0$ is the OLS estimate, and $\widehat\Sigma$ is the sample  matrix, we have
\begin{eqnarray*}
    &&\mathbb{E}\|\widehat{\theta}-\theta^*\|^2=\Theta\left( \frac{v^2 p}{n} \right).
\end{eqnarray*}
\end{corollary}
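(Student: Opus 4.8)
The plan is to read both bounds off the Bahadur representation of Theorem~\ref{thm:bah}, specialized to the OLS / sample-covariance pair by the explicit expansion already recorded in Example~\ref{example:1}. The generic regime is $\delta\in(\delta_1,\delta_2)$, where $\theta^*$ is a nontrivial shrinkage of $\theta_0$ so that $\|\theta^*-\theta_0\|\asymp\|\theta_0\|$; I would handle this case first, since the boundary cases $\delta<\delta_1$ (where $\theta^*=\theta_0$, $\widehat\theta=\widehat\theta_0$) and $\delta>\delta_2$ (where $\theta^*=\vv 0$, $\widehat\theta=\vv 0$) collapse to a single one of the three terms and produce errors no larger than $v^2p/n$. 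In the interior regime, Theorem~\ref{thm:bah}(1) together with Example~\ref{example:1} gives $\widehat\theta-\theta^*=\vv m_1+\vv m_2+\vv m_3+o_p(\|\widehat\theta-\theta^*\|)$, so the whole computation reduces to the second moments of the three averages plus control of the remainder in $L^2$.

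For the upper bound I would compute each second moment from the Gaussianity of $\x$. Writing $u=\theta^*-\theta_0$, the noise term satisfies $\mathbb{E}\|\vv m_1\|^2=\tfrac{\sigma^2}{n}\operatorname{tr}\!\big(\vv M_1\Sigma^{-1}\vv M_1^\top\big)=\Theta(\sigma^2 p/n)$, using $\mathbb{E}[\epsilon^2\x\x^\top]=\sigma^2\Sigma$ and the bounded spectrum of $\Sigma$. For $\vv m_3$ the Isserlis/Wick identity yields $\operatorname{Cov}\big((\x\x^\top-\Sigma)u\big)=\|u\|_\Sigma^2\,\Sigma+\Sigma uu^\top\Sigma$, whence $\mathbb{E}\|\vv m_3\|^2=\tfrac1n\big[\|u\|_\Sigma^2\operatorname{tr}(\vv M_3\Sigma\vv M_3^\top)+\|\vv M_3\Sigma u\|^2\big]=\Theta(\|u\|^2 p/n)=\Theta(\|\theta_0\|_\Sigma^2 p/n)$, the second summand being lower order. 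The scalar term $\vv m_2$ carries no dimensional amplification: $\operatorname{Var}\big(u^\top\x\x^\top u\big)=2\|u\|_\Sigma^4$, so $\mathbb{E}\|\vv m_2\|^2=\tfrac{2\|u\|_\Sigma^4}{n}\|\vv M_2\|^2=O(\|\theta_0\|^2/n)$ after inserting the explicit $\vv M_2$ from Appendix~\ref{sec:appendix:more}, i.e.\ smaller by a factor $1/p$. Summing by the triangle inequality and absorbing the remainder gives $\mathbb{E}\|\widehat\theta-\theta^*\|^2=O\big((\sigma^2+\|\theta_0\|_\Sigma^2)p/n\big)=O(v^2p/n)$.

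For the matching lower bound the key point is that $\vv m_1$ is the only term depending on the noise $\{\epsilon_i\}$, and $\mathbb{E}[\epsilon_i\mid X]=0$ forces $\mathbb{E}[\vv m_1\mid X]=\vv 0$, so $\vv m_1$ is uncorrelated with the $X$-measurable pair $\vv m_2+\vv m_3$. Hence $\mathbb{E}\|\vv m_1+\vv m_2+\vv m_3\|^2=\mathbb{E}\|\vv m_1\|^2+\mathbb{E}\|\vv m_2+\vv m_3\|^2$, ruling out cancellation between the noise and design contributions. Since $\mathbb{E}\|\vv m_2\|^2=O(\|\theta_0\|^2/n)$ is lower order, Cauchy--Schwarz bounds the residual cross term by $O(\sqrt p/n)$, so $\mathbb{E}\|\vv m_2+\vv m_3\|^2=(1+o(1))\mathbb{E}\|\vv m_3\|^2$. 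Combining this with $\mathbb{E}\|\vv m_3\|^2=\Omega(\|\theta_0\|_\Sigma^2 p/n)$ --- which requires the explicit $\vv M_3$ from the appendix to have singular values bounded away from $0$, so that $\operatorname{tr}(\vv M_3\Sigma\vv M_3^\top)\ge\lambda_{\min}(\Sigma)\|\vv M_3\|_F^2=\Omega(p)$ --- yields $\mathbb{E}\|\widehat\theta-\theta^*\|^2=\Omega(v^2p/n)$.

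The step I expect to be the main obstacle is upgrading the in-probability Bahadur remainder $o_p(\|\widehat\theta-\theta^*\|)$ to an $L^2$ statement, since the corollary concerns $\mathbb{E}\|\widehat\theta-\theta^*\|^2$ rather than a stochastic order. My plan is a truncation argument: on the high-probability event $\mathcal A=\{\|\widehat\Sigma-\Sigma\|\le\eta_n,\ \|\widehat\theta_0-\theta_0\|\le\eta_n\|\theta_0\|\}$, whose complement has $\mathbb{P}(\mathcal A^c)$ polynomially small when $(p\log n)/n\to0$ (Gaussian concentration for $\widehat\Sigma$ and the inverse-Wishart tail for the OLS error), the remainder is genuinely $\le\varepsilon_n\|\widehat\theta-\theta^*\|$ and can be absorbed into the left-hand side; on $\mathcal A^c$ one controls $\mathbb{E}[\|\widehat\theta-\theta^*\|^2\mathbf 1_{\mathcal A^c}]$ by Cauchy--Schwarz against the finite higher moments of the OLS estimator (finite because $p=o(n)$ keeps $\widehat\Sigma$ away from singularity), which is $o(v^2 p/n)$. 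A secondary care point is that the clean $\Theta$ lives on the interior regime $\delta\in(\delta_1,\delta_2)$: when the signal-to-noise ratio is allowed to diverge, the regime $\delta<\delta_1$ contributes only $\Theta(\sigma^2 p/n)$, so the two-sided rate should be read for $\delta$ bounded strictly inside $(\delta_1,\delta_2)$.
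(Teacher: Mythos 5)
Your proposal follows exactly the route the paper intends: Corollary~\ref{prop:dense} is presented as a direct consequence of the Bahadur representation (Theorem~\ref{thm:bah}, specialized in Example~\ref{example:1} to the OLS estimate and the sample covariance matrix), the paper's only stated justification being that the bound ``can be developed from Theorem~\ref{thm:bah}'', and that development --- computing the second moments of $\vv m_1,\vv m_2,\vv m_3$ via Gaussian/Wick identities, using the conditional-mean-zero property of $\vv m_1$ given $\X$ for the lower bound, and truncating to upgrade the $o_p$ remainder to an $L^2$ statement --- is precisely what you carry out, in more detail than the paper records. The only soft spot is your step $\mathbb{E}\|\vv m_2+\vv m_3\|^2=(1+o(1))\mathbb{E}\|\vv m_3\|^2$, which uses that $\mathbb{E}\|\vv m_2\|^2$ is smaller by a factor $1/p$ and hence requires $p\to\infty$; for bounded $p$ with diverging signal-to-noise ratio one should instead lower-bound $\mathbb{E}\|\vv m_2+\vv m_3\|^2$ by projecting onto the $(p-1)$-dimensional complement of the single direction $\vv M_2$, where $\vv m_2$ contributes nothing --- an easy patch that does not change the approach.
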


Combining upper bound result in the above corollary and lower bound in Theorem \ref{thm:minimax:dense} together, one can see that using OLS estimate as $\widehat\theta_0$ and sample covariance matrix as $\widehat\Sigma$ in the two-stage method reaches minimax optimal in dense models. In addition, as stated in the following result, using the sparse estimators in Example \ref{example:sparse}, our proposed two-stage estimator reaches the minimax rate as in Theorem \ref{thm:minimax:sparse}:
	\begin{corollary}\label{prop:sparse}
	For sparse models, when  $(\log p)/n\rightarrow 0$, $\sigma^2(s\log p)/(n\|\theta_0\|^2)\rightarrow 0$, $\widehat{\theta}_0$ is the LASSO estimate and $\widehat{\Sigma}$ is the sparse covariance estimator in \cite{cai2016optimal}, it satisfies that 
	\begin{eqnarray*}
	    &&\mathbb{E}\|\widehat{\theta}-\theta^*\|^2=O\left(\frac{s\sigma^2\log p}{n}+ v^2n^{-\frac{2\alpha}{2\alpha+1}} \right).
	\end{eqnarray*}
If $\log_s(p)>1+c_s$ for some constant $c_s>0$, the above results are minimax-optimal.
	\end{corollary}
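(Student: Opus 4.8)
The plan is to push the convergence rates of the two plug-in estimators through the Bahadur representation of Theorem~\ref{thm:bah}, and then to line the resulting upper bound up against the lower bound of Theorem~\ref{thm:minimax:sparse}. I would organize the argument by the three regimes of $\delta$. When $\delta<\delta_1$, Theorem~\ref{thm:bah}(2) gives $\widehat\theta-\theta^*=\widehat\theta_0-\theta_0+o_p(\cdot)$, so the error is driven solely by the LASSO rate; when $\delta>\delta_2$, Theorem~\ref{thm:bah}(3) makes $\widehat\theta-\theta^*$ negligible relative to the target. The only substantive case is $\delta\in(\delta_1,\delta_2)$, where
\[
\widehat\theta-\theta^*=\vv M_1(\widehat\theta_0-\theta_0)+(\theta^*-\theta_0)^{\top}(\widehat\Sigma-\Sigma)(\theta^*-\theta_0)\,\vv M_2+\vv M_3(\widehat\Sigma-\Sigma)(\theta^*-\theta_0)+o_p(\|\widehat\theta-\theta^*\|),
\]
and the task reduces to controlling the second moment of the three leading terms, using that $\vv M_1,\vv M_2,\vv M_3$ have bounded operator norm (a consequence of the eigenvalue bounds on $\Sigma$ and their explicit forms in the Appendix) and that $\|\theta^*-\theta_0\|=O(\|\theta_0\|)$.

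For the first term I would invoke the LASSO rate of Example~\ref{example:sparse}, giving an $O(\sigma^2 s\log p/n)$ contribution, which is exactly the first target term. For the two covariance terms, bounding them crudely through $\|\widehat\Sigma-\Sigma\|$ and the rate $\mathbb{E}\|\widehat\Sigma-\Sigma\|^2=O(n^{-2\alpha/(2\alpha+1)}+\log p/n)$ of \cite{cai2016optimal} yields $O(v^2(n^{-2\alpha/(2\alpha+1)}+\log p/n))$, where I use $v^2\ge\lambda_{\min}(\Sigma)\|\theta_0\|^2\ge c_1\|\theta_0\|^2$. To match the stated rate (which carries no $v^2\log p/n$ term), I would replace the spectral-norm bound by a direct second-moment computation of the two fixed-direction functionals $(\theta^*-\theta_0)^{\top}(\widehat\Sigma-\Sigma)(\theta^*-\theta_0)$ and $(\widehat\Sigma-\Sigma)(\theta^*-\theta_0)$: for a fixed direction the banding bias squared is $O(\|\theta^*-\theta_0\|^4 n^{-2\alpha/(2\alpha+1)})$ and the sampling variance is $O(\|\theta^*-\theta_0\|^4/n)$, so no $\log p$ inflation survives and the covariance contribution is $O(v^2 n^{-2\alpha/(2\alpha+1)})$. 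Summing the three terms gives the claimed upper bound.

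Two points need genuine care beyond this bookkeeping, and constitute the main obstacle. First, Theorem~\ref{thm:bah} and the LASSO bound are statements in probability, whereas the corollary asserts a bound on the expectation: I would isolate the high-probability event $A_n$ on which both plug-in estimators attain their rates and the remainder coefficient is below $1/2$, bound $\|\widehat\theta-\theta^*\|^2\le 4\|(\text{main term})\|^2$ there by absorbing the $o_p(\|\widehat\theta-\theta^*\|)$ remainder into the left-hand side, and on $A_n^c$ use the deterministic bound $\|\widehat\theta\|\le\|\widehat\theta_0\|$ (valid since $\|(\widehat\Sigma+\widehat\lambda^*\vv I_p)^{-1}\widehat\Sigma\|\le1$) together with Cauchy--Schwarz against the vanishing probability $\Prob(A_n^c)$, which requires sub-exponential tails for the LASSO error and a bounded fourth moment of $\|\widehat\theta_0\|$. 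The delicate step is ensuring $\Prob(A_n^c)$ decays fast enough that the $A_n^c$ contribution is lower order; this is where the Gaussian design and the noise tail condition of Example~\ref{example:sparse} enter.

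Finally, for optimality I would compare the upper bound with the lower bound $\Omega(s\sigma^2(1+\log(p/s))/n\vee R^2 n^{-2\alpha/(2\alpha+1)})$ of Theorem~\ref{thm:minimax:sparse}. The condition $\log_s p>1+c_s$ forces $\log(p/s)=\log p-\log s\ge\frac{c_s}{1+c_s}\log p$, hence $1+\log(p/s)\asymp\log p$, so the two variance terms agree up to constants. For the bias terms, taking the supremum over $\|\theta_0\|\le R$ and using the bounded signal-to-noise ratio with the eigenvalue bounds gives $v^2=\|\theta_0\|_{\Sigma}^2+\sigma^2\asymp\|\theta_0\|^2\asymp R^2$ at the extremal parameter, so $v^2 n^{-2\alpha/(2\alpha+1)}\asymp R^2 n^{-2\alpha/(2\alpha+1)}$. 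The matching of both terms establishes minimax optimality under $\log_s p>1+c_s$.
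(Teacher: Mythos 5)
Your proposal is correct in outline and is in fact more careful than the paper itself: the paper contains no written proof of Corollary~\ref{prop:sparse} (nor of Corollary~\ref{prop:dense}); it only asserts that the bound ``can be developed from Theorem~\ref{thm:bah}'' together with the rates in Example~\ref{example:sparse}, which is precisely the route you take. The two issues you isolate are genuine gaps in that implicit argument, not bookkeeping. First, plugging the spectral-norm rate $\mathbb{E}\|\widehat\Sigma-\Sigma\|^2=O(n^{-2\alpha/(2\alpha+1)}+(\log p)/n)$ crudely into the $\vv M_2,\vv M_3$ terms leaves an extra $v^2(\log p)/n$ contribution that the stated bound does not contain; in the regime $\log p\gg n^{1/(2\alpha+1)}$ with $\|\theta_0\|^2\gg s\sigma^2$ this term dominates both stated terms and exceeds the lower bound of Theorem~\ref{thm:minimax:sparse}, so the minimax-optimality claim would fail under the crude bound. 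Your replacement of the spectral norm by fixed-direction second moments is the right repair and does go through for the banding/tapering-type estimator over $\mathcal{F}_\alpha$: writing $u=\theta^*-\theta_0$ and $k\asymp n^{1/(2\alpha+1)}$, the quadratic form $u^{\top}(\widehat\Sigma-\Sigma)u$ has bias $O(k^{-\alpha}\|u\|^2)$ and variance $O(\|u\|^4/n)$, while $\mathbb{E}\|(\widehat\Sigma-\Sigma)u\|^2=O\bigl(\|u\|^2(k^{-2\alpha}+k/n)\bigr)=O\bigl(\|u\|^2 n^{-2\alpha/(2\alpha+1)}\bigr)$, with no $\log p$ factor because no supremum over directions is taken. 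Second, the passage from the in-probability statement of Theorem~\ref{thm:bah} to a bound on the expectation, via a good event together with the deterministic contraction $\|(\widehat\Sigma+\widehat\lambda^*(\delta)\vv I_p)^{-1}\widehat\Sigma\|\le 1$ on its complement, is also necessary and is silently skipped by the paper. Two caveats on your write-up: the claim that $\vv M_1,\vv M_2,\vv M_3$ have bounded operator norm is not uniform in $\delta$, since $A=\|\theta^*\|/\|\theta^*-\theta_0\|_{\Sigma}$ diverges as $\delta\downarrow\delta_1$; you need to verify the cancellation of these divergences against $\vv H^{-1}$, or read the corollary (as the paper implicitly does) with constants depending on the fixed $\delta$. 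And your fixed-direction moment computation uses the concrete banded structure of the estimator in \cite{cai2016optimal}, so it should be stated for that estimator rather than deduced from its spectral-norm risk alone. Neither caveat changes the conclusion.
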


\section{Properties of the Proposed Method}\label{sec:two}
This section provides additional properties of the proposed method beyond the consistency and convergence rate. In particular, we use theorems associated with our method to verify two arguments in existing literature: (1) generalization of adversarially robust learning is worse than standard learning; (2) one can improve the generalization of adversarially robust learning through utilizing extra unlabeled data.
 	\subsection{Adversarial learning hurts generalization}
    We study the generalization of our proposed estimator. From the minimax lower bound theorems in Section \ref{sec:lower_bound}, it is easy to see that the excess risk when $\delta>0$ may converge in a slower rate than the one when $\epsilon=0$. Besides this, we work on the multiplicative constants of excess risk and generalization error and reveal that those constants are larger when $\epsilon>0$ as well.
    
    Based on Theorem \ref{thm:bah}, 
	the generalization error (\ref{estris}) and the estimation error of minimal adversarial risk (\ref{generr}) can be decomposed as follows: 
	\begin{eqnarray}
	\label{estris}R_0(\widehat{\theta},\delta)-\widehat{R}_0(\widehat{\theta},\delta)&=&e_{1,\Sigma}(\widehat{\Sigma},\delta)+e_{1,\theta_0}(\widehat{\theta}_0,\delta)+o_p(R_0(\widehat{\theta},\delta)-\widehat{R}_0(\widehat{\theta},\delta)),\nonumber\\
\label{generr}	R_0(\theta^*,\delta)-\widehat{R}_0(\widehat{\theta},\delta)&=&e_{2,\Sigma}(\widehat{\Sigma},\delta)+e_{2,\theta_0}(\widehat{\theta}_0,\delta)+o_p(R_0(\theta^*,\delta)-\widehat{R}_0(\widehat{\theta},\delta)).\nonumber
	\end{eqnarray}
	The term $e_{j,\theta_0}$ $(e_{j,\Sigma})$ represents the error component that is {\em only} caused by the estimation error of $\widehat{\theta}_0$ $(\widehat\Sigma)$. We next characterizes the forms of $e_{j,\Sigma}$ and $e_{j,\theta_0}$ with precise multiplicative constants.
	
	\begin{theorem}\label{coro:generalization}
		Under the same conditions as in Proposition \ref{thm:opt}, if $\|\widehat{\Sigma}-\Sigma\|\rightarrow 0$ and $\|\widehat{\theta}_0-\theta_0\|/\|\theta_0\|\rightarrow 0$, then when $\delta< \delta_1$, 
		\begin{eqnarray*}
			e_{1,\Sigma}(\widehat{\Sigma},\delta)&=& o_p(\|\widehat\Sigma-\Sigma\|\|\theta_0\|^2),\\
			e_{1,\theta_0}(\widehat{\theta}_0,\delta)&=& \|\widehat{\theta}_0-\theta_0\|_{\Sigma}^2+2c_0\delta\|\theta_0\|\|\widehat{\theta}_0-\theta_0\|_{\Sigma}\\&&+o_p(\|\widehat\theta_0-\theta_0\|\|\theta_0\|),\\
			e_{2,\Sigma}(\widehat{\Sigma},\delta)&=& o_p(\|\widehat\Sigma-\Sigma\|\|\theta_0\|^2),\\
			e_{2,\theta_0}(\widehat{\theta}_0,\delta)&=&-2\delta^2\theta_0^{\top}(\widehat{\theta}_0-\theta_0)\\&&+o_p(\|\widehat\theta_0-\theta_0\|\|\theta_0\|).
		\end{eqnarray*}
		If $\delta >\delta_1$, we have
		\begin{eqnarray*}
			e_{1,\Sigma}(\widehat{\Sigma},\delta)&=& -c_{\Sigma}(\delta)\frac{(\theta^*-\theta_0)^{\top}(\widehat{\Sigma}-\Sigma)(\theta^*-\theta_0)}{\|\theta^*-\theta_0\|_{\Sigma}^2}\\&&+o_p(\|\widehat\Sigma-\Sigma\|\|\theta_0\|^2),\\
			e_{1,\theta_0}(\widehat{\theta}_0,\delta)&=& 2c_{\theta_0}(\delta) \frac{(\widehat{\theta}_0-\theta_0)^{\top}\Sigma(\theta^*-\theta_0)}{\|\theta^*-\theta_0\|_{\Sigma}}\\&&+o_p(\|\widehat\theta_0-\theta_0\|\|\theta_0\|),\\
			e_{2,\Sigma}(\widehat{\Sigma},\delta)&=&  e_{1,\Sigma}(\widehat{\Sigma},\delta)+o_p(\|\widehat\Sigma-\Sigma\|\|\theta_0\|^2),\\
			e_{2,\theta_0}(\widehat{\theta}_0,\delta)&=& e_{1,\theta_0}(\widehat{\theta}_0,\delta) + +o_p(\|\widehat\theta_0-\theta_0\|\|\theta_0\|).
		\end{eqnarray*}
		where the multiplicative constants $c_{\Sigma}(\delta):=\|\theta^*-\theta_0\|_{\Sigma}^2+\delta c_0 \|\theta^*\|\|\theta^*-\theta_0\|_{\Sigma}$ and $c_{\theta_0}(\delta):= \|\theta^*-\theta_0\|_{\Sigma}+\delta c_0\|\theta^*\|$ are monotone increasing functions in $\delta$. Recall that $\theta^*$ is a function of $\delta$.
		
	\end{theorem}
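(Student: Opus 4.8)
The plan is to view both differences as perturbations of the risk functional in its center parameters $(\theta_0,\Sigma)$ and to extract the first-order contributions of $\widehat{\theta}_0-\theta_0$ and $\widehat{\Sigma}-\Sigma$ separately. Define the auxiliary function $g(\vartheta,\Gamma;\theta):=\|\theta-\vartheta\|_{\Gamma}^2+2\delta c_0\|\theta-\vartheta\|_{\Gamma}\|\theta\|+\delta^2\|\theta\|^2$, so that $R_0(\theta,\delta)=g(\theta_0,\Sigma;\theta)$ and $\widehat{R}_0(\theta,\delta)=g(\widehat{\theta}_0,\widehat{\Sigma};\theta)$. The key simplification for the generalization error is that both functionals are evaluated at the \emph{same} argument $\widehat{\theta}$, so the $\delta^2\|\theta\|^2$ term cancels and $R_0(\widehat{\theta},\delta)-\widehat{R}_0(\widehat{\theta},\delta)=g(\theta_0,\Sigma;\widehat{\theta})-g(\widehat{\theta}_0,\widehat{\Sigma};\widehat{\theta})$ depends only on the center-parameter perturbation. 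Since $\widehat{\theta}\to\theta^*$ by Theorem \ref{thm:bah}, I can replace $\widehat{\theta}$ by $\theta^*$ inside the perturbation coefficients at the cost of an $o_p$ error.

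In the regime $\delta>\delta_1$, Proposition \ref{thm:opt} gives $\theta^*\neq\theta_0$, so $g$ is smooth in $(\vartheta,\Gamma)$ near $(\theta_0,\Sigma)$ and I would Taylor-expand to first order. Writing $u=\theta^*-\theta_0$, one computes $\partial_{\vartheta}g=-2\Gamma u-2\delta c_0\|\theta\|\,\Gamma u/\|u\|_{\Gamma}$ and $\partial_{\Gamma}g=(1+\delta c_0\|\theta\|/\|u\|_{\Gamma})\,uu^{\top}$ at $(\theta_0,\Sigma;\theta^*)$. Contracting $\partial_{\vartheta}g$ with $\widehat{\theta}_0-\theta_0$ and $\partial_{\Gamma}g$ with $\widehat{\Sigma}-\Sigma$ through the Frobenius pairing yields precisely the displayed $e_{1,\theta_0}$ and $e_{1,\Sigma}$: the normalizing factor $1/\|\theta^*-\theta_0\|_{\Sigma}$ comes from differentiating the non-squared norm, and the constants assemble into $c_{\theta_0}(\delta)$ and $c_{\Sigma}(\delta)$. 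The mixed term in $(\widehat{\theta}_0-\theta_0)(\widehat{\Sigma}-\Sigma)$ and the pure quadratic terms are $o_p$ of the respective linear pieces under the assumed rates, which is what licenses attributing the error cleanly to each source.

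To pass to the second quantity I use $\big[R_0(\theta^*,\delta)-\widehat{R}_0(\widehat{\theta},\delta)\big]-\big[R_0(\widehat{\theta},\delta)-\widehat{R}_0(\widehat{\theta},\delta)\big]=R_0(\theta^*,\delta)-R_0(\widehat{\theta},\delta)$. For $\delta\in(\delta_1,\delta_2)$, $\theta^*$ is the global minimizer of $R_0$ with positive-definite Hessian (Proposition \ref{thm:opt}), so this remainder equals $-\tfrac12(\widehat{\theta}-\theta^*)^{\top}\nabla^2 R_0(\theta^*)(\widehat{\theta}-\theta^*)+o_p(\|\widehat{\theta}-\theta^*\|^2)$; since $\|\widehat{\theta}-\theta^*\|$ is linear in the estimation errors by Theorem \ref{thm:bah}(1), its square is negligible against the linear $e_1$ terms, giving $e_{2,\cdot}=e_{1,\cdot}+o_p$. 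The regime $\delta>\delta_2$, where $\theta^*=\vv 0$ and $\widehat{\theta}\to\vv 0$, is handled by the same direct-substitution argument as below. Finally, the asserted monotonicity of $c_{\theta_0}(\delta)$ and $c_{\Sigma}(\delta)$ would be checked by substituting $\theta^*=\theta(\lambda^*(\delta))$ and differentiating in $\delta$ using the characterization of $\lambda^*(\delta)$ from Proposition \ref{thm:opt}.

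The delicate regime is $\delta<\delta_1$, which I expect to be the main obstacle. Here Proposition \ref{thm:opt} forces $\theta^*=\theta_0$, placing the minimizer exactly at the kink of $R_0$ where $\|\theta-\theta_0\|_{\Sigma}$ is non-differentiable and the Hessian degenerates, so both the smooth expansion and the envelope argument fail. Instead I would invoke Theorem \ref{thm:bah}(2), $\widehat{\theta}=\widehat{\theta}_0+o_p(\|\widehat{\theta}_0-\theta_0\|)+o_p(\|\widehat{\Sigma}-\Sigma\|)$, substitute directly into $R_0-\widehat{R}_0$ and into $R_0(\theta^*)-R_0(\widehat{\theta})$, and retain the leading terms by hand. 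Because the kink term is only \emph{linear} in $\|\widehat{\theta}_0-\theta_0\|$, it survives as the leading piece $2c_0\delta\|\theta_0\|\|\widehat{\theta}_0-\theta_0\|_{\Sigma}$ of $e_{1,\theta_0}$, whereas $e_{1,\Sigma}=o_p(\|\widehat{\Sigma}-\Sigma\|\|\theta_0\|^2)$ because the factor $\theta^*-\theta_0$ multiplying $\widehat{\Sigma}-\Sigma$ vanishes. Crucially, $R_0(\theta^*)-R_0(\widehat{\theta})$ no longer vanishes at linear order: the gradient $2\delta^2\theta_0$ of the $\delta^2\|\theta\|^2$ term does not vanish at the kink, producing the extra $-2\delta^2\theta_0^{\top}(\widehat{\theta}_0-\theta_0)$ in $e_{2,\theta_0}$. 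Getting the bookkeeping of exactly which linear terms survive in this non-smooth case is the crux of the argument.
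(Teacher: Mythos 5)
Your treatment of the error decomposition is sound and follows essentially the paper's route: for $\delta>\delta_1$ you Taylor-expand the risk in the center parameters $(\theta_0,\Sigma)$ at the common argument $\theta^*$ (your $\partial_{\vartheta}g$ and $\partial_{\Gamma}g$ computations reproduce exactly the paper's $c_{\theta_0}(\delta)$ and $c_{\Sigma}(\delta)$ coefficients); for $e_2$ you correctly note that $R_0(\theta^*,\delta)-R_0(\widehat{\theta},\delta)$ is quadratic in $\|\widehat{\theta}-\theta^*\|$ in the regime $\delta\in(\delta_1,\delta_2)$, hence negligible; and for $\delta<\delta_1$ your direct substitution $\widehat{\theta}=\widehat{\theta}_0$ (via Theorem \ref{thm:bah}(2)) with the cancellation $\widehat{R}_0(\widehat{\theta}_0,\delta)=\delta^2\|\widehat{\theta}_0\|^2$ is precisely how the paper isolates the kink term $2c_0\delta\|\theta_0\|\|\widehat{\theta}_0-\theta_0\|_{\Sigma}$ in $e_{1,\theta_0}$ and the linear term $-2\delta^2\theta_0^{\top}(\widehat{\theta}_0-\theta_0)$ in $e_{2,\theta_0}$.

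However, there is a genuine gap: the theorem also asserts that $c_{\theta_0}(\delta)=\|\theta^*-\theta_0\|_{\Sigma}+\delta c_0\|\theta^*\|$ and $c_{\Sigma}(\delta)=c_{\theta_0}(\delta)\,\|\theta^*-\theta_0\|_{\Sigma}$ are monotone increasing in $\delta$, and your one-sentence deferral ("checked by substituting $\theta^*=\theta(\lambda^*(\delta))$ and differentiating") does not constitute a proof and hides the actual difficulty. The point is that $\theta^*$ moves with $\delta$: as $\delta$ grows, $\lambda^*(\delta)$ grows, so $\|\theta^*\|$ \emph{decreases} toward zero while the explicit factor $\delta$ increases, and the sign of $\frac{d}{d\delta}\left(\delta\|\theta^*\|\right)$ is not determined by inspection. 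This monotonicity is where the paper spends the bulk of its proof: it inverts the stationarity condition (\ref{eqn:cond}) to solve the quadratic for $\delta$ as an explicit function of $\lambda$ and $A=\|\theta(\lambda)\|/\|\theta(\lambda)-\theta_0\|_{\Sigma}$, namely $\delta=\tfrac12\bigl[\lambda c_0A-\tfrac{c_0}{A}+\sqrt{(\lambda c_0A-\tfrac{c_0}{A})^2+4\lambda}\,\bigr]$, rewrites $c_0\delta\|\theta(\lambda)\|$ accordingly, differentiates in $\lambda$ using a spectral decomposition of $\Sigma$, and then verifies nonnegativity of the resulting expression through a chain of inequalities that crucially exploits $c_0^2=2/\pi<1$. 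Without carrying out (or otherwise replacing) this argument, the monotonicity claim — which is what makes the theorem an "adversarial robustness hurts generalization" statement, since it shows the error constants grow with the attack level — remains unproven in your proposal.
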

	The proof of Theorem \ref{coro:generalization} is postponed to Appendix \ref{sec:appendix:low}.
	
	To better understand Theorem \ref{coro:generalization}, we plot the changes of $|e_{1,\theta_0}|$, $|e_{1,\Sigma}|$, and $|e_{2,\theta_0}|$ w.r.t. $\delta$ by assuming $\Sigma=\vv I_p$ in Figure \ref{fig:diff}. In the left plot, $|e_{1,\theta_0}|$ firstly increases in $\delta$ linearly until $\delta=\delta_1$, then jumps to the second regime and grows until it converges to $2|(\widehat{\theta}_0-\theta_0)^{\top}\Sigma\theta_0|$ after $\delta>\delta_2$. In the middle plot, $|e_{1,\Sigma}|$ is almost zero when $\delta<\delta_1$, then increases when $\delta\in(\delta_1,\delta_2)$ and finally converges when $\delta>\delta_2$. And, $|e_{2,\Sigma}|$ shares a similar pattern. The pattern of $|e_{2,\theta_0}|$ is similar as $|e_{1,\theta_0}|$ except that it smoothly transits into the second regime, as shown in the right plot. The empirical and theoretical curves match very well in Figure \ref{fig:diff}.
	
	\begin{figure*}[!ht]
		\centering
		\includegraphics[scale=0.43]{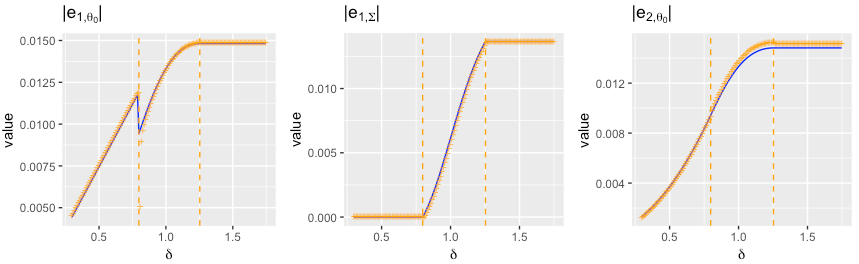}\vspace{-0.1in}
		\caption{The value of $|e_{1,\theta_0}|$, $|e_{1,\Sigma}|$, and $|e_{2,\theta_0}|$ as functions of $\delta$. Assume $\|\theta_0\|=1$, $\Sigma=\vv I_p$. Blue curve is obtained from Theorem \ref{coro:generalization} given $(\widehat{\theta}_0,\widehat{\Sigma})$. Orange points are obtained from simulation. $n=1000$, $\sigma^2=1$. The two vertical dashed lines in each figure represent $\delta_1$ and $\delta_2$.}
		\label{fig:diff}
	\end{figure*}

\subsection{Reducing estimation error through additional unlabeled data}

Unlabeled data is commonly used in semi-supervised learning, e.g. locally-weighted nearest neighbors algorithm \cite{cannings2020local}. Besides, in the context of adversarially robust learning, some studies also observed the benefits of using extra unlabeled data \cite{raghunathan2019adversarial}.

We study the effect of extra unlabeled data on the minimax lower bounds and the upper bounds of our proposed method under different scenarios. With the existence of extra unlabeled data, the minimiax lower bounds become smaller. In addition, these data also help reduce the upper bounds through improving the accuracy of $\widehat\Sigma$.

If we move one more step from Theorem \ref{thm:minimax:dense} and \ref{thm:minimax:sparse} and allow the usage of extra unlabeled data, the lower bounds can be reduced:
\begin{theorem}\label{thm:minimax:extra}
    Under the conditions in Theorem \ref{thm:minimax:dense}, if there are extra $n_1$ samples of unlabeled data, the lower bound becomes $\Omega( (p\sigma^2/n) \vee (pR^2/(n+n_1)) )$.
    
    Under the conditions in Theorem \ref{thm:minimax:sparse}, if there are extra $n_1$ samples of unlabeled data, the lower bound becomes $$\Omega\left( s\sigma^2\frac{\log (p/s)}{n}\vee R^2(n+n_1)^{-\frac{2\alpha}{2\alpha+1}} \right).$$
\end{theorem}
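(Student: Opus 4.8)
The plan is to mirror the structure of the proofs of Theorems \ref{thm:minimax:dense} and \ref{thm:minimax:sparse}, each of which produces a maximum of two terms by analyzing two separate perturbation families: one that varies $\theta_0$ with $\Sigma$ held fixed (producing the $\sigma^2$-term) and one that varies $\Sigma$ with $\theta_0$ held fixed (producing the $R^2$-term). Because the overall lower bound is the larger of the two sub-problem bounds, it suffices to re-examine each sub-problem in the presence of the extra $n_1$ unlabeled covariates $x_{n+1},\dots,x_{n+n_1}\sim N(\vv 0,\Sigma)$ and to track how the effective sample size entering the Fano/Le Cam argument changes.

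First I would treat the $\theta_0$ sub-problem. Here $\Sigma$ is fixed across the packing set, so the marginal law of every covariate — labeled or unlabeled — is identical under all hypotheses. Writing the joint likelihood as $\prod_i p_\Sigma(x_i)\,p_{\theta_0}(y_i\mid x_i)$ for the labeled data and $\prod_j p_\Sigma(x_j)$ for the unlabeled data, the unlabeled factors cancel in every pairwise KL divergence, and only the $n$ conditional factors $p_{\theta_0}(y\mid x)=N(x^\top\theta_0,\sigma^2)$ survive. Hence the unlabeled data are ancillary for this sub-problem, the effective sample size remains $n$, and the $\theta_0$-driven term of the lower bound — $p\sigma^2/n$ in the dense case and $s\sigma^2(1+\log(p/s))/n$ in the sparse case — is unchanged.

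Next I would treat the $\Sigma$ sub-problem, where $\theta_0$ and $\sigma$ are fixed and only $\Sigma$ varies over an admissible packing (respecting the eigenvalue bounds, and the class $\mathcal F_\alpha$ in the sparse case). The key observation is that the conditional law $p(y\mid x)=N(x^\top\theta_0,\sigma^2)$ does not depend on $\Sigma$, so a labeled sample $(x,y)$ and an unlabeled sample $x$ contribute identically to the KL divergence, each equal to the per-sample Gaussian divergence $\mathrm{KL}\!\left(N(\vv 0,\Sigma)\,\|\,N(\vv 0,\tilde\Sigma)\right)$. Consequently the total divergence between any two hypotheses equals $(n+n_1)$ times the per-sample term, so the same reduction used for Theorems \ref{thm:minimax:dense} and \ref{thm:minimax:sparse} goes through verbatim with $n$ replaced by $n+n_1$. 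This upgrades the $\Sigma$-driven term to $pR^2/(n+n_1)$ in the dense case and $R^2(n+n_1)^{-2\alpha/(2\alpha+1)}$ in the sparse case; taking the maximum with the $\theta_0$-term yields the stated rates.

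The main obstacle is the $\Sigma$ sub-problem, specifically transferring a lower bound on $\|\Sigma-\tilde\Sigma\|$ across the packing into a lower bound on $\|\theta^*(\Sigma)-\theta^*(\tilde\Sigma)\|$, since $\theta^*=(\Sigma+\lambda^*\vv I_p)^{-1}\Sigma\theta_0$ depends on $\Sigma$ nonlinearly and through the implicitly defined $\lambda^*(\delta)$ of (\ref{eqn:cond}). As in the proof of Theorem \ref{thm:minimax:dense}, I would fix $\delta$ in the interior regime $(\delta_1,\delta_2)$, perturb $\Sigma$ along directions for which the induced first-order change in $\theta^*$ is bounded below — using that the Hessian of $R_0$ is positive definite at $\theta^*$ by Proposition \ref{thm:opt}, which makes $\Sigma\mapsto\theta^*$ locally bi-Lipschitz — and verify that the perturbed matrices stay within the eigenvalue and $\mathcal F_\alpha$ constraints. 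Once this quantitative lower bound is secured, the reduction to the covariance-estimation minimax problems of \cite{cai2016optimal} (sparse) and the dense analogue of \cite{mourtada2019exact} proceeds exactly as before, now driven by $n+n_1$ samples rather than $n$.
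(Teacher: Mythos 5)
The paper states this theorem without providing any proof (there is no corresponding section in the appendix), so the comparison here is against the evidently intended argument: rerun the proofs of Theorems \ref{thm:minimax:dense} and \ref{thm:minimax:sparse} while tracking which sub-problem sees its effective sample size grow. Your proposal captures exactly the two observations that make this work, and both are correct: (i) in the $\theta_0$ sub-problem the covariance is fixed across hypotheses, so unlabeled covariates are ancillary --- in the paper's Bayesian formulation (Lemma \ref{lem:minimax:dense:theta}) the posterior of $\theta_0$ is unchanged by them, and in your KL formulation the unlabeled factors cancel pairwise --- leaving the $\sigma^2$-terms at rate $1/n$; (ii) in the $\Sigma$ sub-problem the conditional law $p(y\mid x)$ is free of $\Sigma$, so every sample, labeled or not, contributes one covariate's worth of information, upgrading $n$ to $n+n_1$ in Lemma \ref{lem:minimax:dense:Sigma} (the inverse-Wishart posterior conditions on all $n+n_1$ covariates) and in the two-point argument of Theorem \ref{thm:minimax:sparse} (recalibrate $k=(n+n_1)^{1/(2\alpha+1)}$, $a=(n+n_1)^{-(\alpha+1)/(2\alpha+1)}$, with the affinity bound intact because the $y$-factors integrate out of $\|P_{\Sigma_1}\wedge P_{\Sigma_2}\|$).

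One caveat on your final paragraph: the claim that positive definiteness of the Hessian of $R_0$ makes $\Sigma\mapsto\theta^*$ \emph{locally bi-Lipschitz} is false. Via the implicit function theorem, the invertible Hessian gives only an upper (Lipschitz) bound $\|\theta^*(\Sigma_1)-\theta^*(\Sigma_2)\|=O(\|\Sigma_1-\Sigma_2\|)$; the reverse inequality fails in many directions --- e.g.\ with $\theta_0=e_1$ and block-diagonal $\Sigma$, perturbing the block not touching the support of $\theta_0$ leaves $\theta^*$ completely unchanged. A lower bound on the separation $\|\theta^*(\Sigma_1,\delta)-\theta^*(\Sigma_2,\delta)\|$ therefore cannot come from the Hessian alone; it requires the explicit direction-specific computation that the paper carries out in the proof of Theorem \ref{thm:minimax:sparse} (the expansion showing $\lambda_2-\lambda_1=\Theta(ka)$ and hence separation $\Omega(k^2a^2)$ for the specific first-row/column perturbation $\vv D$). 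Since your plan already defers to that computation (``the reduction \ldots proceeds exactly as before''), this is a misattributed mechanism rather than a fatal gap, but you should drop the bi-Lipschitz justification and invoke the paper's perturbation computation (with $n$ replaced by $n+n_1$) directly.
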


In terms of the upper bounds, since the estimation of $\widehat\Sigma$ is only related to $\x$, one can directly utilize these extra unlabeled data into the two-stage framework. The following result is extended from Theorem \ref{thm:bah}:
\begin{corollary}
    Under the conditions in Corollary \ref{prop:dense}, if there are extra $n_1$ samples of unlabeled data, the upper bound becomes $O( (p\sigma^2/n) \vee (pR^2/(n+n_1)) )$.
    
    Under the conditions in Corollary \ref{prop:sparse}, if there are extra $n_1$ samples of unlabeled data, the bound becomes $$O\left( s\sigma^2\frac{\log (p/s)}{n}\vee R^2(n+n_1)^{-\frac{2\alpha}{2\alpha+1}} \right).$$
\end{corollary}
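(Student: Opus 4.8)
The plan is to reuse the Bahadur representation of Theorem \ref{thm:bah} and to exploit the structural fact that $\widehat\Sigma$ is a function of the covariates only, so the $n_1$ additional unlabeled covariate vectors can be folded into its estimation while the estimation of $\theta_0$ (which needs responses) keeps the sample size $n$. Concretely, in the regime $\delta\in(\delta_1,\delta_2)$ the representation gives
\[
\widehat\theta-\theta^* = \vv M_1(\widehat\theta_0-\theta_0) + (\theta^*-\theta_0)^\top(\widehat\Sigma-\Sigma)(\theta^*-\theta_0)\vv M_2 + \vv M_3(\widehat\Sigma-\Sigma)(\theta^*-\theta_0) + o_p(\|\widehat\theta-\theta^*\|),
\]
so by $\|a+b+c\|^2 \le 3(\|a\|^2+\|b\|^2+\|c\|^2)$ the squared estimation error splits into a $\widehat\theta_0$-driven piece and two $\widehat\Sigma$-driven pieces. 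Since I only need an upper bound, I need not track the mild correlation between these pieces induced by the shared labeled covariates.

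For the $\widehat\theta_0$ term, the matrices $\vv M_1,\vv M_2,\vv M_3$ have bounded operator norm, being functions of $(\delta,\theta_0,\Sigma,\theta^*)$ with $\Sigma$ having bounded spectrum and $\|\theta^*\|\le\|\theta_0\|\le R$ (as $(\Sigma+\lambda^*\vv I_p)^{-1}\Sigma$ is symmetric with eigenvalues in $[0,1]$). Hence $\mathbb{E}\|\vv M_1(\widehat\theta_0-\theta_0)\|^2 = O(\mathbb{E}\|\widehat\theta_0-\theta_0\|^2)$, which is \emph{unchanged} by the unlabeled data and equals $O(\sigma^2 p/n)$ for OLS (dense) and $O(\sigma^2 s\log(p/s)/n)$ for LASSO (sparse), exactly as in Corollaries \ref{prop:dense} and \ref{prop:sparse}. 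The whole gain must therefore come from the $\widehat\Sigma$ pieces, and I would treat the two cases separately. In the dense case $\widehat\Sigma$ is the sample covariance built from all $n+n_1$ covariate vectors, and the refined $L^2$ fluctuation of $(\widehat\Sigma-\Sigma)(\theta^*-\theta_0)$ for Gaussian $\x$ scales like $p\,\|\theta^*-\theta_0\|^2/(n+n_1)$, giving an $O(pR^2/(n+n_1))$ contribution because $\|\theta^*-\theta_0\|\le R$. In the sparse case I would instead bound $\|\vv M_3(\widehat\Sigma-\Sigma)(\theta^*-\theta_0)\|^2 \le \|\vv M_3\|^2\,\|\widehat\Sigma-\Sigma\|^2\,\|\theta^*-\theta_0\|^2$ and insert the rate $\mathbb{E}\|\widehat\Sigma-\Sigma\|^2 = O\big((n+n_1)^{-2\alpha/(2\alpha+1)} + \log p/(n+n_1)\big)$ of the \cite{cai2016optimal} estimator applied to $n+n_1$ samples, yielding $O(R^2(n+n_1)^{-2\alpha/(2\alpha+1)})$ once the $R^2\log p/(n+n_1)$ term is absorbed into the polynomial rate under the stated conditions. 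The quadratic-form term carrying $\vv M_2$ is handled the same way and is of strictly smaller order.

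Combining the unchanged $\widehat\theta_0$ contribution with the improved $\widehat\Sigma$ contribution yields $O\big((p\sigma^2/n)\vee(pR^2/(n+n_1))\big)$ in the dense case and $O\big(s\sigma^2\log(p/s)/n \vee R^2(n+n_1)^{-2\alpha/(2\alpha+1)}\big)$ in the sparse case, matching the statement and meeting the lower bounds of Theorem \ref{thm:minimax:extra}. The main obstacle I anticipate is not the bookkeeping but the passage from the \emph{in-probability} remainder $o_p(\|\widehat\theta-\theta^*\|)$ of Theorem \ref{thm:bah} to a genuine bound on $\mathbb{E}\|\widehat\theta-\theta^*\|^2$: I would need a uniform-integrability / truncation argument that controls the estimator on a high-probability event where $\widehat\Sigma$ is well-conditioned and $\widehat\lambda^*(\delta)$ is close to $\lambda^*(\delta)$, while crudely bounding the complementary negligible-probability event, so that the leading-order terms dominate in the $L^2$ sense. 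This is precisely the step that must already underpin the conversion of Theorem \ref{thm:bah} into Corollaries \ref{prop:dense} and \ref{prop:sparse}, and the only new ingredient here is replacing $n$ by $n+n_1$ in the variance accounting for $\widehat\Sigma$.
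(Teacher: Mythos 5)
Your proposal is correct and takes essentially the same route the paper intends: the paper gives no separate proof of this corollary, treating it as an immediate consequence of the Bahadur representation in Theorem \ref{thm:bah} (and Corollaries \ref{prop:dense}, \ref{prop:sparse}) once one observes that $\widehat\Sigma$ depends only on covariates and so can be built from all $n+n_1$ samples, while the $\widehat\theta_0$ term keeps its rate based on the $n$ labeled samples. Your extra care in converting the $o_p(\|\widehat\theta-\theta^*\|)$ remainder into a genuine $\mathbb{E}\|\widehat\theta-\theta^*\|^2$ bound via truncation on a well-conditioned event is a step the paper leaves implicit even in its passage from Theorem \ref{thm:bah} to Corollaries \ref{prop:dense} and \ref{prop:sparse}, so it is a sensible addition rather than a deviation.
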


To summarize, as both lower bounds and upper bounds are reduced, it is essential and effective to utilize extra unlabeled data for adversarially robust learning. A numerical illustration is also given in the next section of experiments.
	
\section{Numerical Experiments}\label{sec:numerical}
In numerical experiments, we consider Example \ref{example:1}, and adopt LASSO/sparse  estimators in the first stage to improve adversarial robustness. 
	
We consider the following specifications of $(\theta_0,\Sigma)$: $\theta_0$ is randomly generated from $\partial B(0,1)$, the sphere of a $\mathcal{L}_2$ ball; the diagonal elements in $\Sigma$ are $\Sigma_{ii}=2r+|\tau_i|$, where $\tau_i$'s follow i.i.d. standard Gaussian, and the other elements in $\Sigma$ are $r$. Under this design of $\Sigma$, coordinates of $\x$ are correlated with each other, and the smallest and largest eigenvalues are within a reasonable range as $p$ increases. Each experiment was repeated 500 times with $\sigma^2=1$. Define $\widehat{\Sigma}=\X^{\top}\X/n$ for non-sparse $\Sigma$. 
	
	\paragraph{Empirical coverage when $p$ is fixed.}
	As mentioned in Example \ref{example:1}, $\sqrt{n}(\widehat{\theta}-\theta^*)$ asymptotically converges to a zero-mean Gaussian when $\delta<\delta_2$. We use empirical coverage to verify this statement. In this experiment, $\theta_0=(1,2)^{\top}$ and $\Sigma_{ii}=i$ for $i=1,2$ with $\Sigma_{12}=0.5$. For each $\delta$, we repeat the experiment of estimating $\theta^*$ for 1000 times using 1000 samples, and calculate the 95\% empirical coverage for $\theta^*_1$ and $\theta^*_2$. In Figure \ref{fig:inference}, when $\delta<1.9$, the magnitude of $\theta^*_i$'s are away from zero, and the empirical coverage for both $\theta^*_i$'s are close to 0.95. When $\delta>1.9$, $\theta^*_i$'s are almost zero, and the corresponding empirical coverages are a little bit away from 0.95.
	\begin{figure}[!ht]
		\centering\vspace{-0.15in}
		\includegraphics[scale=0.75]{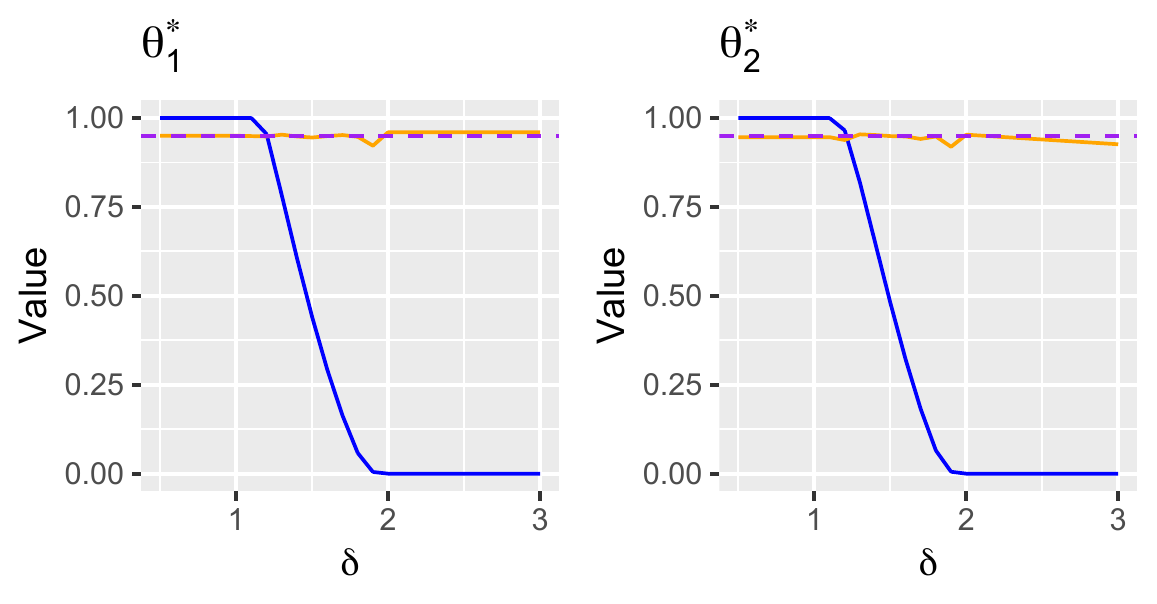}\vspace{-0.15in}
		\caption{Value of $\theta^*_i$ and the 95\% Empirical Coverage. Blue line: $\theta^*_i/\theta_{0i}$. Orange line: 95\% Empirical Coverage. Purple dased line: 0.95. The 95\% coverage for both $\theta^*_1$ and $\theta^*_2$ are close to 0.95 when $\delta<1.9$. }
		\label{fig:inference}
	\end{figure}
	\paragraph{Sparse coefficients.}
	In this experiment, we verify that LASSO helps obtaining better adversarially robust estimate. We take $p=50$, $n=300$, and assume $\Sigma$ is known. Cross validation was applied to choose the penalty that minimizes the (standard) prediction risk. Such a task was implemented by function \texttt{cv.glmnet} in library \texttt{glmnet} in R.  
	
We consider both lower-dimensional dense (Table \ref{tab:lasso_dense}) case with $(p,n)=(50,300)$ and high-dimensional sparse scenario with $(p,n)=(300,200)$. For high-dimensional sparse model, to make it clear on the difference between $\widehat\theta_{OLS}$ and $\widehat\theta_{LASSO}$, we present the results given $\Sigma$ is known/unknown. In the dense coefficient model, although we can select a $\lambda$ such that the LASSO estimator leads to a smaller standard risk than OLS estimator, its corresponding adversarial risk gets worse with an increasing $\delta$. For the sparse model, for all choices of $\delta$, LASSO has a smaller adversarial risk than OLS. The results for unknown $\Sigma$ are similar with the case when $\Sigma$ is known in the sense that the performance of LASSO is also better than OLS. 

In addition, $R_0(\widehat{\theta}_{LASSO},\delta)$ is always smaller when $\Sigma$ is known than when $\Sigma$ is unknown. This also verifies that unlabeled data helps improve the adversarial robustness (the comparison is not applicable to $R_0(\widehat{\theta}_{OLS},\delta)$ since $\widehat\theta_{OLS}$ is not consistent).

	\begin{table*}[!ht]
		\centering
		\caption{Comparison between OLS and LASSO for dense $\theta_0$ with known $\Sigma$.  $p=50$, $n=300$, $r=0.1$, $\sigma^2=1$. $\Sigma$ is known. Standard deviation is provided for $R_0(\theta^*,\delta)-R_0(\widehat{\theta}_{OLS},\delta)$ and  $R_0(\theta^*,\delta)-R_0(\widehat{\theta}_{LASSO},\delta)$. }\label{tab:lasso_dense}
		\begin{tabular}{llllll}
			\hline  $\delta$ & $R_0(\theta^*,\delta)$ & $R_0(\widehat{\theta}_{OLS},\delta)$ & $R_0(\widehat{\theta}_{LASSO},\delta)$ & $R_0(\theta_0,\delta)$ & $R_0(0,\delta)$ \\\hline
			0.5 & 0.2489 & 0.8545(0.1413)& \textbf{0.633}(0.0795) &0.25 & 0.9997 \\
			0.8 & 0.5847 & \textbf{0.8436}(0.0867) & 0.8516(0.0858) & 0.64& 0.9997 \\
			0.9 & 0.6862 & \textbf{0.8715}(0.65) & 0.8888(0.0762) & 0.81 & 0.9997\\\hline
		\end{tabular}
	\end{table*}

	\begin{table*}[!ht]
	\centering
	\caption{Comparison between OLS and LASSO for sparse $\theta_0$ . The first 10 elements of $\theta_0$ are $1/\sqrt{10}$.  $p=300$, $n=200$, $r=0.1$, $\sigma^2=1$.   }\label{tab:lasso_sparse_large}
	\begin{tabular}{lllllll}
		\hline $\Sigma$  &$\delta$ & $R_0(\theta^*,\delta)$ & $R_0(\widehat{\theta}_{OLS},\delta)$ & $R_0(\widehat{\theta}_{LASSO},\delta)$ & $R_0(\theta_0,\delta)$ & $R_0(0,\delta)$ \\\hline
	\multirow{4}{*}{known} &	0.5 & 0.25   & 6.1134(1.0171) & \textbf{0.7486}(0.1200) & 0.25 & 1.8943 \\
	&	1   & 0.7847 & 2.7114(0.4124) & \textbf{0.9941}(0.0752) & 1 & 1.8943\\
	&	2   & 1.3088 & 1.4912(0.0431) & \textbf{1.3684}(0.0453) & 4 & 1.8943 \\
	&	3   & 1.6088 & 1.7522(0.0641) & \textbf{1.6435}(0.1033) & 9 & 1.8943\\\hline
	\multirow{4}{*}{unknown} &0.5 & 0.25   & 2.3533(0.2551) & \textbf{0.8212}(0.0984) & 0.25 & 1.8943 \\
	&1   & 0.7847 & 1.5830(0.1368) & \textbf{1.1414}(0.0732) & 1 & 1.8943\\
	&2   & 1.3088 & 1.5023(0.0341) & \textbf{1.4716}(0.0358) & 4 & 1.8943 \\
	&3   & 1.6088 & 1.7040(0.0250) & \textbf{1.6930}(0.0494) & 9 & 1.8943\\\hline
	\end{tabular}
\end{table*}

		\begin{table*}[!ht]
		\centering
		\caption{Comparison between $\widehat\Sigma$ and $\widehat\Sigma_{sparse}$. $p=300$, $n=200$, $\sigma^2=1$. $\theta_0$ is known. $\widehat\Sigma_{sparse}$ performs slightly better when $\Sigma$ is sparse.}\label{tab:matrix_large}
		\begin{tabular}{lllll}
			\hline $\delta=2$ & $R_0(\theta^*,\delta)$ & $R_0(\theta^*_{ \widehat\Sigma},\delta)$ & $R_0(\theta^*_{ \widehat{\Sigma}_{sparse}},\delta)$ & $R_0(\theta_0,\delta)$  \\\hline
			Dense  &  1.8865 & \textbf{2.0576}(0.1841) & 4.8769(0.1044)  & 4.0000 \\
			Sparse &  2.9807 & 3.0652(0.0279) & \textbf{3.0293}(0.0279) & 4.0000\\\hline
		\end{tabular}
	\end{table*}
\paragraph{Sparse  matrix.} We use sparse matrix estimator to verify that it helps enhancing adversarial robustness. To generate sparse  matrix, we consider $\Sigma$ such that $\Sigma_{ii}=1$, and $\Sigma_{ij}=r|i-j|^{-\alpha-1}$ when $j\neq i$, where $r=0.6$ and $\alpha=0.2$. This choice of $(r,\alpha)$ ensures that all eigenvalues of $\Sigma$ are positive. We take $p=300$, $n=200$ so that the difference between $\widehat{\Sigma}$ and $\widehat{\Sigma}_{sparse}$ is obvious. The attack level $\delta$ is set to be 2 in this comparison. For simplicity, we assume $\theta_0$ is known in the comparison of  matrix estimators. The sparse covariance estimator $\widehat{\Sigma}_{sparse}$ was obtained based on the method in \cite{cai2016optimal}. 
In Table \ref{tab:matrix_large}, the adversarial excess risk is reduced from 0.0845  ($R_0(\theta^*_{\widehat{\Sigma}},\delta)-R_0(\theta^*,\delta)$) to 0.0486 ($R_0(\theta^*_{\widehat{\Sigma}_{sparse}},\delta)-R_0(\theta^*,\delta)$), which shows the effectiveness of $\widehat{\Sigma}_{sparse}$.  
In addition to sparse matrix, we also consider dense covariance matrix generated in the same way as previous experiments by taking $r=0.6$. When the true  martix is dense, using sparse estimate is not appropriate, thus the corresponding adversarial risk is much higher.

\section{Conclusions and Future Directions}\label{sec:conclusion}
In this paper, we figure out the minimax lower bound of estimation error of adversarially robust model in linear regression setup, which indicates the it is important to incorporate model information in adversarially robust learning. In addition, we propose a two-stage adversarially robust learning method based on an explicit relation between adversarially robust estimator and standard estimator. The proposed two-stage estimator can encode model information (e.g. sparsity) into standard estimators, through which the robustness of adversarially robust estimator could be improved and reach minimax optimal convergence rate. Our investigation in the generalization error also verifies that adversarial robustness hurts generalization. 
	
One future direction is to relax the distributional assumption on $(\x,y)$, say $\x$ follows non-Gaussian distribution. Although there is a wide range of data may follow Gaussian assumption, e.g. abalone data and other biological data, many other data may not follow Gaussian, e.g. image data. The constant $c_0$ in our framework currently depends on the Gaussian assumption, and there is potential to relax it. Another one is concerned with sparse adversarially robust learning, say sparse $\widehat\theta$, which could be useful in both compressing and robustifying deep neural networks \cite{guo2018sparse}. The first step is to understand how the sparsity of $\theta_0$ (together with other model assumptions) implies that of $\theta^*$, which in turn determines the sparsity of $\widehat\theta$. An example can be found in \cite{allen2020feature} for linear sparse coding model. However, more careful studies would be needed in the future.

	\bibliographystyle{plain}
	\bibliography{main}
	
	\newpage
	\onecolumn
	\appendix
	\section{More theoretical results}\label{sec:appendix:more}
	This section provides some supplementary theorems.
	
	\subsection{Results regarding to prediction risk (an analog of Theorem \ref{thm:lsm})}
For consistent estimates $(\widehat\theta_0,\widehat\Sigma,\widehat\sigma^2)$, with probability tending to 1, we have\begin{eqnarray*}
	\sup\limits_{\delta\geq 0}\left| R(\theta^*(\delta),\delta)-R(\widetilde\theta(\delta),\delta) \right|&=&O\left(\|\widehat\theta_0-\theta_0\|\|\theta_0\|\right)+O\left(\|\theta_0\|^2\sqrt{\|\widehat\Sigma-\Sigma\|}\right)\\&&+O\left(\widehat\sigma^2-\sigma^2\right)+O\left(\|\theta_0\|(\widehat\sigma-\sigma)\right).
\end{eqnarray*}

	\subsection{Definitions in Theorem \ref{thm:bah}}
	\begin{eqnarray*}
		\vv M_1(\theta^*,\theta_0,\Sigma)&=&\vv H(\theta^*,\theta_0,\Sigma)^{-1}\vv M(\theta^*,\theta_0,\Sigma),\\
		\vv M_2(\theta^*,\theta_0,\Sigma)&=&-\delta c_0\vv H(\theta^*,\theta_0,\Sigma)^{-1}\left( \frac{\Sigma(\theta^*-\theta_0)}{2\|\theta^*-\theta_0\|_{\Sigma}\|\theta^*\|}-\frac{\|\theta^*\|\theta^*}{2\|\theta^*-\theta_0\|_{\Sigma}^3} \right),
		\end{eqnarray*}
		\begin{eqnarray*}
		\vv M_3(\theta^*,\theta_0,\Sigma)&=&-(1+\delta c_0 A(\theta^*,\theta_0,\Sigma)),\\
		\vv M(\theta^*,\theta_0,\Sigma)&=&\Sigma+\delta c_0 A(\theta^*,\theta_0,\Sigma)\Sigma +\frac{\delta c_0 A(\theta^*,\theta_0,\Sigma)}{\|\theta^*-\theta_0\|^2_{\Sigma}}\Sigma(\theta^*-\theta_0)(\theta^*-\theta_0)^{\top}\Sigma \\
		&&+\delta \frac{c_0}{A(\theta^*,\theta_0,\Sigma)\|\theta^*\|^2_2}\theta^*(\theta^*)^{\top},
	\end{eqnarray*}
	where $A(\theta^*,\theta_0,\Sigma)=\|\theta^*\|/\|\theta^*-\theta_0\|_{\Sigma}$. The matrix $\vv H(\theta^*,\theta_0,\Sigma)$ is the Hessian matrix of $R_0$.

	\section{Proofs in Section \ref{sec:population}}\label{sec:appendix:population}
	There are two parts of Proposition \ref{thm:opt}: the statement about Hessian, and the optimal solution $\theta^*$. We prove them separately.
	\subsection{Positive definite Hessian}
	\begin{proof}[Proof of Proposition \ref{thm:opt} for Positive Definite Hessian]
		Expand the adversarial risk at $\x$ as
		\begin{eqnarray*}
			\underset{\|x^*-x\|\le \delta}{\max}\left[(\vv(x^*)^{\top}\theta-\vv x^{\top}\theta_0)^2\right]&=&\underset{\|x^*-x\|\le \delta}{\max}\left[(\vv x^*-\vv x)^{\top}\theta+\vv x^{\top}(\theta-\theta_0)\right]^2\\
			&=&  \left(\delta\|\theta\|+ |\vv x^{\top}(\theta-\theta_0)|\right)^2.
		\end{eqnarray*}
		Since $\x$ follows Gaussian, for any fixed $\theta-\theta_0$, $\x^{\top}(\theta-\theta_0)$ follows Gaussian as well. Let $Z=\vv x^{\top}(\theta-\theta_0).$ Note that $\x\sim N(\vv 0,\Sigma),$ we have $Z\sim N(0,\|\theta-\theta_0\|^2_{\Sigma})$ and
		\begin{eqnarray*}
			R_0(\theta,\delta)&=&\E_{Z}(|Z|+\delta\|\theta\|)^2\\
			&=&\|\theta-\theta_0\|^2_{\Sigma}+2\delta  c_0 \|\theta\|\|\theta-\theta_0\|_{\Sigma}+\delta^2\|\theta\|^2_2.
		\end{eqnarray*}
		Taking the gradient of $R_0(\theta,\delta)$ with respect to $\theta$ yields
		\begin{eqnarray*}
			\nabla_{\theta} R_0(\theta,\delta)&=&2 \left[\Sigma (\theta-\theta_0)+\delta c_0 \frac{\|\theta-\theta_0\|_{\Sigma}}{\|\theta\|}\theta+\delta c_0 \frac{\|\theta\|}{\|\theta-\theta_0\|_{\Sigma}}\Sigma (\theta-\theta_0)+\delta^2 \theta\right]\\
			&=& 2 \left[\left(\vv  I_p+\delta c_0 \frac{\|\theta\|}{\|\theta-\theta_0\|_{\Sigma}}\right)\Sigma (\theta-\theta_0)+\left(\delta c_0 \frac{\|\theta-\theta_0\|_{\Sigma}}{\|\theta\|}+\delta^2\right)\theta\right].
		\end{eqnarray*}
		
		Denote $A=\|\theta\|/\|\theta-\theta_0\|_{\Sigma}$, then $\frac{\partial^2 R_0}{2\partial \theta^2}$ becomes
		\begin{eqnarray}\label{eq:hessian}
		\vv H:=\frac{\partial^2 R_0}{2\partial \theta^2}=
		\Sigma + \delta c_0 A\Sigma +\left( \frac{\delta c_0}{A}+\delta^2 \right)\vv  I_p+\delta c_0\Sigma(\theta-\theta_0)\left(\frac{\partial A}{\partial \theta}\right)^{\top} +\delta c_0\theta\left(\frac{\partial 1/A}{\partial \theta}\right)^{\top}.
		\end{eqnarray}
		To show that $R_0$ is convex, it sufficies to show that $H$ is positive definite.
		
		In $\vv H$, the two partial derivatives are
		\begin{eqnarray*}
			\frac{\partial A}{\partial \theta}&=&\frac{\partial}{\partial \theta}\frac{\|\theta\|}{\|\theta-\theta_0\|_{\Sigma}}=\frac{\theta}{\|\theta\|\|\theta-\theta_0\|_{\Sigma}} -\frac{\|\theta\| \Sigma(\theta-\theta_0)}{\|\theta-\theta_0\|_{\Sigma}^3 },
		\end{eqnarray*}
		and
		\begin{eqnarray*}
			\frac{\partial 1/A}{\partial \theta}=\frac{\Sigma(\theta-\theta_0)}{\|\theta\|\|\theta-\theta_0\|_{\Sigma}} - \frac{\|\theta-\theta_0\|_{\Sigma}\theta}{\|\theta\|^3}.
		\end{eqnarray*}
		Thus
		\begin{eqnarray*}
			\Sigma(\theta-\theta_0)\left(\frac{\partial A}{\partial \theta} \right)^{\top}&=& \frac{1}{\|\theta\|\|\theta-\theta_0\|_{\Sigma}}\Sigma(\theta-\theta_0)\theta^{\top}-\frac{\|\theta\|}{  \|\theta-\theta_0\|_{\Sigma}^3 }\Sigma(\theta-\theta_0)(\theta-\theta_0)^{\top}\Sigma,\\
			&=&  \frac{1}{\|\theta\|\|\theta-\theta_0\|_{\Sigma}}\Sigma(\theta-\theta_0)\theta^{\top}-\frac{A}{  \|\theta-\theta_0\|_{\Sigma}^2 }\Sigma(\theta-\theta_0)(\theta-\theta_0)^{\top}\Sigma,\\
			\theta \left(\frac{\partial 1/A}{\partial \theta} \right)^{\top}&=&\frac{ 1 }{\|\theta\|\|\theta-\theta_0\|_{\Sigma}} \theta(\theta-\theta_0)^{\top}\Sigma-\frac{\|\theta-\theta_0\|_{\Sigma}}{\|\theta\|^3}\theta\theta^{\top},\\
			&=&\frac{ 1 }{\|\theta\|\|\theta-\theta_0\|_{\Sigma}} \theta(\theta-\theta_0)^{\top}\Sigma-\frac{1}{A\|\theta\|^2}\theta\theta^{\top}.
		\end{eqnarray*}
		Then $\vv H$ can be represented  as
		\begin{eqnarray*}
			&&\Sigma + \delta c_0 A\Sigma +\left( \frac{\delta c_0}{A}+\delta^2 \right)\vv  I_p+\delta c_0\Sigma(\theta-\theta_0)\left(\frac{\partial A}{\partial \theta}\right)^{\top} +\delta c_0\theta\left(\frac{\partial 1/A}{\partial \theta}\right)^{\top}\\
			&=&\left(\Sigma -\frac{1}{\|\theta-\theta_0\|^2_{\Sigma}}\Sigma(\theta-\theta_0)(\theta-\theta_0)^{\top}\Sigma\right)A c_0 \delta\\
			&&+\left(\vv I_p-\frac{1}{\|\theta\|^2}\theta\theta^{\top}\right)\frac{c_0\delta}{A}\\
			&&+\left(\Sigma+\delta^2\vv  I_p+\frac{ \delta c_0 }{\|\theta\|\|\theta-\theta_0\|_{\Sigma}} \theta(\theta-\theta_0)^{\top}\Sigma+\frac{\delta c_0}{\|\theta\|\|\theta-\theta_0\|_{\Sigma}}\Sigma(\theta-\theta_0)\theta^{\top}\right)\\
			&:=&\vv M_1 A c_0\delta +\vv M_2 \frac{c_0\delta}{A}+\vv M_3.
		\end{eqnarray*}
		Since $\Sigma$ is positive definite, for any vector $\vv a\neq \vv 0,$ and $\theta\neq \theta_0, \theta\neq \vv 0,$  by Cauchy inequality,
		\begin{eqnarray*}
			\vv a^{\top} \vv M_1\vv a&=&	\vv a^{\top}\Sigma \vv a-\frac{ \left( \vv a^{\top}\Sigma(\theta-\theta_0) \right)^2. }{\|\theta-\theta_0\|_{\Sigma}^2}\ge 0,\\
			\vv a^{\top} \vv M_2\vv a&=&  \vv a^{\top}\vv a-\frac{(\vv a^{\top}\theta)^2}{\|\theta\|^2}\ge0,
		\end{eqnarray*}
		which imply $\vv M_1$ and $\vv M_2$ are positive semi-definite.
		
		For $\vv M_3$, we have
		\begin{eqnarray*}
			\vv M_3&=&\left[ \delta \vv I_p+\frac{ c_0}{\|\theta\|\|\theta-\theta_0\|_{\Sigma}}\Sigma(\theta-\theta_0)\theta^{\top} \right]\left[ \delta \vv I_p+\frac{ c_0}{\|\theta\|\|\theta-\theta_0\|_{\Sigma}}\Sigma(\theta-\theta_0)\theta^{\top} \right]^{\top}\\
			&&+\Sigma - \frac{ c_0^2 }{\|\theta-\theta_0\|_{\Sigma}^2}\Sigma(\theta-\theta_0)(\theta-\theta_0)^{\top}\Sigma.
		\end{eqnarray*}
		Since $c_0=\sqrt{2/\pi}<1, $ for any vector $\vv a\neq \vv 0,$ and $\theta\neq \theta_0, \theta\neq \vv 0,$
		\begin{eqnarray*}
			\vv a^{\top}\left(\Sigma - \frac{ c_0^2 }{\|\theta-\theta_0\|_{\Sigma}^2}\Sigma(\theta-\theta_0)(\theta-\theta_0)^{\top}\Sigma\right)\vv a
			&>&\vv a^{\top} \vv M_1\vv a\ge 0.
		\end{eqnarray*}
	\end{proof}
	\subsection{Optimal solution}
	\begin{proof}[Proof of Proposition \ref{thm:opt} for $\theta^*$]
		We first consider the case where $\Sigma$ is a diagonal matrix. Recall that the gradient of $R_0(\theta,\delta)$ is
		\begin{eqnarray*}
			\nabla_{\theta} R_0(\theta,\delta)&=&2 \left[\Sigma (\theta-\theta_0)+\delta c_0 \frac{\|\theta-\theta_0\|_{\Sigma}}{\|\theta\|}\theta+\delta c_0 \frac{\|\theta\|}{\|\theta-\theta_0\|_{\Sigma}}\Sigma (\theta-\theta_0)+\delta^2 \theta\right]\\
			&=& 2 \left[\left(\vv I_p+\delta c_0 \frac{\|\theta\|}{\|\theta-\theta_0\|_{\Sigma}}\right)\Sigma (\theta-\theta_0)+\left(\delta c_0 \frac{\|\theta-\theta_0\|_{\Sigma}}{\|\theta\|}+\delta^2\right)\theta\right].
		\end{eqnarray*}
		Note the gradient $\nabla_{\theta} R_0(\theta,\delta)$ is well-defined in $\mathbb{R}^p/\{\vv 0, \theta_0\}$ and $R_0(\theta,\delta)\to +\infty$ as $\|\theta\|\to +\infty.$ Thus, the global minimizer $\theta^*$ of $R_0(\theta,\delta)$ should only be $\vv 0, \theta_0$ or the stationary point of $R_0(\theta,\delta).$
		Note if $\nabla_{\theta} R_0(\theta,\delta)=\bf 0,$ we have  $\eta\Sigma (\theta-\theta_0)=- \theta$ for some $\eta> 0,$ or equivalently,
		\begin{eqnarray*}
		\theta_{\eta}=(\eta\Sigma+\vv I_p)^{-1}\eta\Sigma\theta_0=\left[ \vv I_p-(\eta \Sigma+\vv I_p)^{-1}\right]\theta_0.
		\end{eqnarray*}
		Since when $\eta\to 0,$ $\theta_{\eta}\to \vv 0$ and when $\eta\to +\infty,$ $\theta_{\eta}\to \theta_0,$ the global minimizer of $R_0(\theta,\delta)$ should has the form as $\left[ I-(\eta \Sigma+\vv I_p)^{-1}\right]\theta_0$ for some $\eta\in[0,\infty].$
		Define
		\begin{eqnarray}
		r(\eta)&=&R_0(\theta_{\eta},\delta),\label{eqn:r}\\
		\theta_{\eta}&=&(\vv I_p-(\eta\Sigma+\vv I_p)^{-1})\theta_0,\label{eqn:eta}\\
		H(\eta)&=&\frac{\sqrt{\theta^{\top}_0( \frac{\Sigma}{\eta \Sigma+\vv I_p} )^2\theta_0}}{\sqrt{\theta^{\top}_0\frac{\Sigma}{(\eta \Sigma+\vv I_p)^2} \theta_0}},\label{eqn:H}\\
		g(\eta)&=&1-\frac{\delta c_0}{H(\eta)}+\eta(\delta c_0 H(\eta)-\delta^2).\label{eqn:g}
		\end{eqnarray}
		We have
		\begin{eqnarray}\label{eqn:partial_r}
		r'(\eta)=\frac{\partial}{\partial \eta}R_0(\theta_{\eta},\delta)&=&(\nabla_{\theta} R_0(\theta_{\eta}))^{\top} \frac{\partial}{\partial \eta}\theta_{\eta}\nonumber\\
		&=&-2 g(\eta) (\theta-\theta_0)^{\top}\Sigma(\eta\Sigma+\vv I_p)^{-1}\Sigma(\theta-\theta_0).
		\end{eqnarray}
		By Lemma \ref{lem:opt} below, if $\delta\le\delta_1,$ $g(\eta)>0.$ Thus, $r(\eta)$ is decreasing and the global minimizer of $R_0(\theta,\delta)$ is $\theta_{\eta=+\infty}=\theta_0$. If $\delta\ge\delta_2,$ $g(\eta)<0.$ Thus,  $r(\eta)$ is increasing and the global minimizer of $R_0(\theta,\delta)$ is $\theta_{\eta=0}=\vv 0$. If $\delta_1<\delta<\delta_2,$ there exists a unique positive number $\eta^*$ (as denoted as $\eta^*(\delta)$)  such that $g(\eta^*)=0$. Moreover, note
		\begin{eqnarray*}
			g(\eta)=\left(1+\delta c_0 \frac{\|\theta_{\eta}\|}{\|\theta_{\eta}-\theta_0\|_{\Sigma}}\right)-\eta\left(\delta c_0 \frac{\|\theta_{\eta}-\theta_0\|_{\Sigma}}{\|\theta_{\eta}\|}+\delta^2\right),
		\end{eqnarray*}
		thus, $\eta^*$ is the unique solution to (\ref{eqn:cond}). Finally, $g(\eta)>0$ when $\eta\in[0,\eta^*),$ and $g(\eta)<0$ when $\eta\in(\eta^*,\infty).$ Thus, by (\ref{eqn:partial_r}),  $r(\eta)$ is decreasing when $\eta\in[0,\eta^*),$ is increasing when $\eta\in (\eta^*,\infty).$ Thus, $R_0(\theta,\delta)$  gets the global minimum when $\theta=\theta_{\eta=\eta^*}.$

		For the general positive definite  matrix $\Sigma,$
		we consider the orthogonal decomposition of $\Sigma$ and let $\Sigma=\vv U^{\top} \vv D \vv U$ where $\vv D$ is a $p\times p$ diagonal matrix and $\vv U$ is an orthogonal matrix. Let $\theta=\vv U\theta, \theta_0=\vv U\theta_0.$ Then the adversarial prediction risk $R_0(\theta,\delta)$ in (\ref{eqn:risk}) becomes
		\begin{eqnarray}\label{eq:11}
		R_0(\theta,\delta)=R_0(U^{\top}\theta,\delta)=\|\theta-\theta_0\|^2_{\vv D} +2\delta  c_0 \|\theta\|\|\theta-\theta_0\|_{\vv D}+\delta^2\|\theta\|^2_2.
		\end{eqnarray}
		Note $\vv D$ is a diagonal matrix.  Applying the results from Proposition \ref{thm:opt} yields $\theta^*=(\vv I_p-(\eta^*\vv D+\vv I_p)^{-1})\theta_0.$  Therefore, since $\theta=\vv U\theta, \theta_0=\vv U\theta_0,$ $\theta^*=(\vv I_p-(\eta^*\Sigma+\vv I_p)^{-1})\theta_0,$
		which completes the proof.
	\end{proof}
	\begin{lemma}\label{lem:opt}
		Suppose $\Sigma$ is a $p$ by $p$ diagonal  matrix. Define functions $H(\eta)$ and $g(\eta)$ as in (\ref{eqn:H}) and (\ref{eqn:g}), then
		\begin{enumerate}
			\item If $\delta\ge \delta_2,$  $g(\eta)<0$ for all $\eta> 0.$
			\item If $\delta\le \delta_1,$  $g(\eta)>0$ for all $\eta\ge 0.$
			\item If $\delta_1<\delta< \delta_2,$ there exists a unique positive number $\eta^*$ such that $g(\eta)=0.$ Moreover, $g(\eta)>0$ when $\eta\in[0,\eta^*),$ and $g(\eta)<0$ when $\eta\in(\eta^*,\infty).$
		\end{enumerate}
		Here $\delta_1=\frac{ c_0 \|\theta_0\|}{\|\theta_0\|_{\Sigma^{-1}}}$ and $\delta_2=\frac{\|\theta_0\|_{\Sigma^2}}{ c_0  \|\theta_0\|_{\Sigma}}.$
	\end{lemma}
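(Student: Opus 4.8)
The plan is to exploit that $\Sigma=\mathrm{diag}(\lambda_1,\dots,\lambda_p)$ is diagonal and reduce the whole question to the scalar quantity $H(\eta)$. Writing $w_i(\eta):=\theta_{0,i}^2\lambda_i(\eta\lambda_i+1)^{-2}$, the definition (\ref{eqn:H}) reads $H(\eta)^2=(\sum_i w_i\lambda_i)/(\sum_i w_i)$, i.e.\ a weighted average of the eigenvalues; hence $\lambda_{\min}\le H(\eta)^2\le\lambda_{\max}$ and $H(\eta)>0$. Evaluating at the endpoints gives $H(0)=\|\theta_0\|_{\Sigma^2}/\|\theta_0\|_{\Sigma}=c_0\delta_2$ and $H(\infty)=\|\theta_0\|/\|\theta_0\|_{\Sigma^{-1}}=\delta_1/c_0$, so the thresholds are exactly $\delta_1=c_0H(\infty)$ and $\delta_2=H(0)/c_0$. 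First I would show $H$ is non-increasing. With $v_i:=\lambda_i/(\eta\lambda_i+1)$ one has $w_i'=-2v_iw_i$, and writing $N=\sum_i w_i\lambda_i,\ D=\sum_i w_i$, a symmetrization of $(H^2)'=(N'D-ND')/D^2$ yields $N'D-ND'=\sum_{i,j}w_iw_j(v_j-v_i)(\lambda_i-\lambda_j)$. Since $v$ is increasing in $\lambda$, every summand is $\le 0$, so $H'\le 0$ and therefore $H(\infty)\le H(\eta)\le H(0)$ for all $\eta\ge 0$.

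I would then dispatch the two extreme regimes directly from the form
\[
g(\eta)=\Big(1-\tfrac{\delta c_0}{H(\eta)}\Big)+\eta\delta\big(c_0H(\eta)-\delta\big).
\]
If $\delta\le\delta_1=c_0H(\infty)\le c_0H(\eta)$, then $c_0H(\eta)-\delta\ge 0$ makes the second term $\ge 0$, while $\delta c_0/H(\eta)\le c_0^2<1$ forces the first term $\ge 1-c_0^2>0$; hence $g>0$ on $[0,\infty)$ (this also covers the boundary $\delta=\delta_1$). Symmetrically, if $\delta\ge\delta_2=H(0)/c_0\ge H(\eta)/c_0$, then $\delta c_0/H(\eta)\ge 1$ makes the first term $\le 0$, and $c_0H(\eta)\le c_0H(0)=c_0^2\delta_2<\delta$ makes the second term $<0$ for $\eta>0$; hence $g<0$ on $(0,\infty)$. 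Both cases use only $c_0=\sqrt{2/\pi}<1$ together with the sandwich $H(\infty)\le H(\eta)\le H(0)$, so no monotonicity-in-$\eta$ of $g$ is needed here.

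The heart of the argument is the intermediate regime $\delta_1<\delta<\delta_2$. The boundary behaviour is immediate: $g(0)=1-\delta/\delta_2>0$, and since $c_0H(\eta)\to\delta_1<\delta$ the linear-in-$\eta$ term drives $g(\eta)\to-\infty$, so a zero exists by continuity. For \emph{uniqueness} I would prove a single-crossing property: at every positive zero $\eta^*$, $g'(\eta^*)<0$. Differentiating (\ref{eqn:g}) gives the identity, valid for all $\eta$,
\[
g'(\eta)=-\delta\big(\delta-c_0H(\eta)\big)+\delta c_0H'(\eta)\Big(\tfrac{1}{H(\eta)^2}+\eta\Big),
\]
where the second term is always $\le 0$ because $H'\le 0$. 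The first term is not sign-definite in general, but at a zero it is strictly negative: if a zero $\eta^*>0$ had $\delta\le c_0H(\eta^*)$, then $\delta c_0/H\le c_0^2<1$ would make the first bracket of $g$ strictly positive while the second (linear) bracket is $\le 0$, contradicting $g(\eta^*)=0$; thus $\delta>c_0H(\eta^*)$ and $g'(\eta^*)<0$. A continuous $C^1$ function that strictly decreases through each of its zeros can vanish at most once, so $\eta^*$ is unique, and combined with $g(0)>0$ this forces $g>0$ on $[0,\eta^*)$ and $g<0$ on $(\eta^*,\infty)$.

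The main obstacle is precisely this single-crossing step: because $c_0H(\eta)$ ranges over $[\delta_1,c_0^2\delta_2]$ with $\delta_1\le c_0^2\delta_2$, the quantity $\delta-c_0H(\eta)$ can genuinely change sign on $(0,\infty)$, so $g$ is \emph{not} globally monotone and a single differentiation does not settle the sign. The resolution is to note that the constraint $g(\eta^*)=0$ rules out $\delta\le c_0H(\eta^*)$, after which $H'\le 0$ makes the residual derivative unambiguously negative. The only remaining bookkeeping is the degenerate case where $\theta_0$ lies in a single eigenspace of $\Sigma$ (so $H$ is constant in $\eta$): there the bounds collapse, $\delta_1=\delta_2$ up to the $c_0^2$ factor, and the three cases reduce to the scalar computation, which I would check separately to confirm the stated conclusions persist.
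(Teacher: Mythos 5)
Your proof is correct and follows essentially the same route as the paper's: monotonicity of $H$ via a pairwise symmetrization (rearrangement) inequality, the endpoint values $H(0)=c_0\delta_2$ and $H(\infty)=\delta_1/c_0$, sign inspection of the two brackets of $g$ for the extreme regimes, and, for $\delta_1<\delta<\delta_2$, existence by the intermediate value theorem plus uniqueness resting on the fact that $\delta c_0H(\eta^*)-\delta^2<0$ at any zero $\eta^*$. The minor differences favor you: the paper proves the sandwich $\delta_1/c_0\le H(\eta)\le c_0\delta_2$ (Lemma \ref{lemma2}) and the monotonicity of $H$ (Lemma \ref{lemma4}) as two separate rearrangement computations, while you obtain the sandwich for free from monotonicity plus the endpoint evaluations; and your single-crossing formulation ($g'(\eta^*)<0$ at every zero) replaces, equivalently, the paper's argument that $g$ is decreasing on $[\eta^*,\infty)$ for the smallest zero $\eta^*$. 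One sign slip to fix: under your contradiction hypothesis $\delta\le c_0H(\eta^*)$, the second bracket $\delta\bigl(c_0H(\eta^*)-\delta\bigr)$ is $\ge 0$, not $\le 0$; combined with the strictly positive first bracket this yields $g(\eta^*)>0$, which is the contradiction you intend (the paper's own proof contains the mirror-image typo, asserting $1-\delta c_0/H(\eta^*)\le 0$ at a zero, though only the negativity of the second bracket is actually used there). Your closing caution about the degenerate case where $\theta_0$ lies in a single eigenspace is unnecessary: there $H'\equiv 0$ and your single-crossing argument applies verbatim.
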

	\begin{proof}[Proof of Lemma \ref{lem:opt}]
		By Lemma \ref{lemma2} below, we have for any $\eta\ge 0,$ $\delta_1/c_0\le H(\eta)\le \delta_2 c_0.$
		Therefore, $g(\eta)>0$ if  $\delta\le\delta_1$ and $g(\eta)<0$ if  $\delta\ge\delta_2.$

		Moreover, note $H(\eta=0)=\delta_2 c_0, H(\eta=\infty)=\delta_1/c_0.$  When $\delta_1<\delta<\delta_2,$ $g(\eta=0)=1-\delta/\delta_2>0$ and $g(\eta=\infty)=-\infty.$ There must exist a positive solution to $g(\eta)=0.$ Next, we will show the solution is unique. Assume $\eta^*$ is the smallest $\eta$ such that $g(\eta)=0.$ Then we claim $g(\eta)$ is decreasing as $\eta\ge \eta^*.$ In fact, if $g(\eta^*)=0,$ we have $1-\frac{\delta c_0}{H(\eta^*)}\le 0$ and $\delta c_0 H(\eta^*)-\delta^2<0.$ By Lemma \ref{lemma4}, $H(\eta)$ is a decreasing function when $\eta\ge 0$. Thus, $g(\eta)$ is decreasing as $\eta\ge \eta^*$ and $g(\eta)<g(\eta^*)=0$ for $\eta>\eta^*.$ Therefore, there is one unique  $\eta^*$ such that $g(\eta^*)=0.$ Moreover, $g(\eta)>0$ when $\eta\in[0,\eta^*),$ and $g(\eta)<0$ when $\eta\in(\eta^*,\infty).$
	\end{proof}

	\begin{lemma}\label{lemma2}
		If $\Sigma=$diag$(d_1,d_2,\cdots, d_p)$ is a diagonal matrix, where all $d_i>0,$ then for any $\eta\ge 0,$
		\begin{eqnarray}\label{eq:6}
		\left(\theta^{\top}_0\left( \frac{1}{ \Sigma}\right)\theta_0\right)\left(\theta^{\top}_0\left( \frac{\Sigma}{\eta \Sigma+\vv I_p} \right)^2\theta_0\right)&\ge& \left(\theta^{\top}_0 \frac{\Sigma}{(\eta \Sigma+\vv I_p)^2} \theta_0\right)\left(\theta^{\top}_0\theta_0\right)\\
		\left(\theta^{\top}_0\Sigma^2\theta_0\right)\left(\theta^{\top}_0 \frac{\Sigma}{(\eta \Sigma+\vv I_p)^2} \theta_0\right)&\ge& \left(\theta^{\top}_0 \Sigma\theta_0\right)\left(\theta^{\top}_0\left( \frac{\Sigma}{\eta \Sigma+\vv I_p} \right)^2\theta_0\right)\label{eq:6.2}
		\end{eqnarray}
	\end{lemma}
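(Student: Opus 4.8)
The plan is to exploit the fact that $\Sigma$ is diagonal, so that every quadratic form appearing in (\ref{eq:6}) and (\ref{eq:6.2}) collapses to a one-dimensional weighted sum. Writing $\theta_0=(t_1,\dots,t_p)^\top$, $\Sigma=\mathrm{diag}(d_1,\dots,d_p)$ with all $d_i>0$, and introducing the nonnegative weights $w_i:=t_i^2$, any expression of the form $\theta_0^\top f(\Sigma)\theta_0$ becomes $\sum_i w_i f(d_i)$ for the appropriate scalar function $f$. Both claimed inequalities then take the shape $\big(\sum_i w_iu_i\big)\big(\sum_j w_jv_j\big)\ge\big(\sum_i w_ip_i\big)\big(\sum_j w_jq_j\big)$, and I would prove each one through the elementary symmetrization identity
\begin{eqnarray*}
\Big(\sum_i w_iu_i\Big)\Big(\sum_j w_jv_j\Big)-\Big(\sum_i w_ip_i\Big)\Big(\sum_j w_jq_j\Big)=\tfrac12\sum_{i,j}w_iw_j\big(u_iv_j+u_jv_i-p_iq_j-p_jq_i\big),
\end{eqnarray*}
which reduces everything to checking that the bracketed pairwise expression has a definite sign for each pair $(i,j)$.

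For (\ref{eq:6}) I would recognize the statement as Chebyshev's sum inequality. Set $a_i:=1/d_i$ and $b_i:=d_i^2/(\eta d_i+1)^2=\big(d_i/(\eta d_i+1)\big)^2$; then the inner factor on the right is exactly $a_ib_i=d_i/(\eta d_i+1)^2$ and the last factor is the constant $1$, so (\ref{eq:6}) is precisely $\big(\sum_i w_ia_i\big)\big(\sum_i w_ib_i\big)\ge\big(\sum_i w_i\big)\big(\sum_i w_ia_ib_i\big)$. Applying the identity above with $u_i=a_i,\ v_j=b_j,\ p_i=a_ib_i,\ q_j=1$ gives the difference of the two sides as $-\tfrac12\sum_{i,j}w_iw_j(a_i-a_j)(b_i-b_j)$. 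Here $a_i$ is strictly decreasing in $d_i$, while $b_i$ is increasing in $d_i$ because $d/(\eta d+1)$ has derivative $(\eta d+1)^{-2}>0$; hence $(a_i-a_j)$ and $(b_i-b_j)$ always carry opposite signs, every summand $(a_i-a_j)(b_i-b_j)$ is $\le0$, and the difference is $\ge0$, which is (\ref{eq:6}).

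For (\ref{eq:6.2}) I would apply the same symmetrization with $u_i=d_i^2$, $v_j=d_j/(\eta d_j+1)^2$, $p_i=d_i$, $q_j=d_j^2/(\eta d_j+1)^2$. A direct computation gives $u_iv_j-p_iq_j=\frac{d_id_j}{(\eta d_j+1)^2}(d_i-d_j)$, and adding the $i\leftrightarrow j$ term yields
\begin{eqnarray*}
u_iv_j+u_jv_i-p_iq_j-p_jq_i=d_id_j\,(d_i-d_j)\left(\frac{1}{(\eta d_j+1)^2}-\frac{1}{(\eta d_i+1)^2}\right).
\end{eqnarray*}
Since $g(d):=(\eta d+1)^{-2}$ is decreasing for $\eta\ge0$, the factor $g(d_j)-g(d_i)$ carries the same sign as $d_i-d_j$, so each summand is nonnegative and (\ref{eq:6.2}) follows from the identity.

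The routine parts are the two monotonicity checks (the derivative of $d/(\eta d+1)$ and the monotonicity of $g$) together with the short algebra in the pairwise differences. I do not expect a genuine obstacle; the only place requiring care is choosing the grouping of terms so that each symmetrized summand factors cleanly into a product of same-sign quantities, which is exactly what makes the sign determination transparent and lets both inequalities be settled by the single Chebyshev-type argument.
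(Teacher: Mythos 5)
Your proof is correct and is essentially the paper's own argument in cleaner packaging: the paper likewise expands both quadratic forms into coordinate sums and compares the $(i,j)$ cross terms via the two-element rearrangement inequality, which is exactly your pairwise sign condition $(a_i-a_j)(b_i-b_j)\le 0$ organized through the symmetrization identity. Your Chebyshev formulation merely avoids the paper's explicit bookkeeping of the diagonal $(\theta_0^i)^4$ terms, but the key inequality checked for each pair is the same.
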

	\begin{proof}[Proof of Lemma \ref{lemma2}]
		To prove Lemma \ref{lemma2}, we expand all terms in 
		\begin{eqnarray*}
			&&\left(\theta^{\top}_0\left( \frac{1}{ \Sigma}\right)\theta_0\right)\left(\theta^{\top}_0\left( \frac{\Sigma}{\eta \Sigma+\vv I_p} \right)^2\theta_0\right)=\left(\sum_{i=1}^p\frac{1}{d_i}(\theta^i_0)^2\right)\left(\sum_{i=1}^p(\frac{d_i}{\eta d_i+1})^2(\theta^i_0)^2\right)\\
			&=& -\sum_{i=1}^p \frac{d_i}{(\eta d_i+1)^2}(\theta^i_0)^4+ \underset{1\le i\le j\le p}{\sum\sum}\left[ \frac{1}{d_i}(\theta^i_0)^2 (\frac{d_j}{\eta d_j+1})^2(\theta^j_0)^2+ \frac{1}{d_j}(\theta^j_0)^2 (\frac{d_i}{\eta d_i+1})^2(\theta^i_0)^2\right]\nonumber\\
			&=& -\sum_{i=1}^p \frac{d_i}{(\eta d_i+1)^2}(\theta^i_0)^4+ \underset{1\le i\le j\le p}{\sum\sum}\left[ \left(\frac{1}{d_i} (\frac{d_j}{\eta d_j+1})^2+\frac{1}{d_j} (\frac{d_i}{\eta d_i+1})^2\right)(\theta^i_0\theta^j_0)^2\right]\nonumber
		\end{eqnarray*}
		
		\begin{eqnarray*}
			&&\left(\theta^{\top}_0 \frac{\Sigma}{(\eta \Sigma+\vv I_p)^2} \theta_0\right)\left(\theta^{\top}_0\theta_0\right)=\left(\sum_{i=1}^p\frac{d_i}{(\eta d_i+1)^2}(\theta^i_0)^2\right)\left(\sum_{i=1}^p(\theta^i_0)^2\right)\\
			&=& -\sum_{i=1}^p \frac{d_i}{(\eta d_i+1)^2}(\theta^i_0)^4+\underset{1\le i\le j\le p}{\sum\sum}\left[ \frac{d_i}{(\eta d_i+1)^2}(\theta^i_0)^2 (\theta^j_0)^2+ \frac{d_j}{(\eta d_j+1)^2}(\theta^j_0)^2 (\theta^i_0)^2\right]\nonumber\\
			&=& -\sum_{i=1}^p \frac{d_i}{(\eta d_i+1)^2}(\theta^i_0)^4+ \underset{1\le i\le j\le p}{\sum\sum}\left[ \left( \frac{d_i}{(\eta d_i+1)^2}+\frac{d_j}{(\eta d_j+1)^2}\right)(\theta^i_0\theta^j_0)^2\right]\nonumber
		\end{eqnarray*}
		
		By  rearrangement inequality, for any $i$ and $j,$ we have
		\begin{eqnarray*}
			\left(\frac{1}{d_i} (\frac{d_j}{\eta d_j+1})^2+\frac{1}{d_j} (\frac{d_i}{\eta d_i+1})^2\right)\ge \left( \frac{d_i}{(\eta d_i+1)^2}+\frac{d_j}{(\eta d_j+1)^2}\right),
		\end{eqnarray*}
		which yields the inequality in (\ref{eq:6}). Similarly we can show (\ref{eq:6.2}).
	\end{proof}

	\begin{lemma}\label{lemma4}
		If $\Sigma=diag(d_1,d_2,\cdots, d_p)$ is a diagonal matrix, where all $d_i>0,$ then for any $\eta_1>\eta_2,$
		\begin{eqnarray*}
			\left(\theta^{\top}_0( \frac{\Sigma}{\eta_1 \Sigma+\vv I_p} )^2\theta_0\right)\left(\theta^{\top}_0( \frac{\Sigma}{(\eta_2 \Sigma+\vv I_p)^2} )\theta_0\right)<\left(\theta^{\top}_0( \frac{\Sigma}{\eta_2 \Sigma+\vv I_p} )^2\theta_0\right)\left(\theta^{\top}_0( \frac{\Sigma}{(\eta_1 \Sigma+\vv I_p)^2}. )\theta_0\right).
		\end{eqnarray*}
	\end{lemma}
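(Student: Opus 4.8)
The plan is to recognize that this inequality is precisely the statement that $H(\eta)$ defined in \eqref{eqn:H} is strictly decreasing. Writing $A(\eta):=\theta_0^\top(\Sigma/(\eta\Sigma+\vv I_p))^2\theta_0$ and $B(\eta):=\theta_0^\top\Sigma/(\eta\Sigma+\vv I_p)^2\theta_0$, we have $H(\eta)^2=A(\eta)/B(\eta)$, and the claim $A(\eta_1)B(\eta_2)<A(\eta_2)B(\eta_1)$ is equivalent to $A(\eta_1)/B(\eta_1)<A(\eta_2)/B(\eta_2)$ for $\eta_1>\eta_2$. I would prove the two-point inequality directly using the same symmetrization/rearrangement device already employed in the proof of Lemma \ref{lemma2}, since $\Sigma=\mathrm{diag}(d_1,\dots,d_p)$ turns every quadratic form into a single sum over coordinates.

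First I would expand both sides as double sums. With $w_i:=(\theta_0^i)^2\ge 0$ and $u_i:=1/(\eta_1 d_i+1)$, $v_i:=1/(\eta_2 d_i+1)$ (so $u_i<v_i$ because $\eta_1>\eta_2$ and $d_i>0$), the difference $A(\eta_2)B(\eta_1)-A(\eta_1)B(\eta_2)$ becomes $\sum_{i,j} d_i^2 d_j\,(v_i^2u_j^2-u_i^2v_j^2)\,w_iw_j$. Next I would pair the $(i,j)$ and $(j,i)$ terms; after combining, the paired contribution factors as $d_id_j(d_i-d_j)(v_i^2u_j^2-u_i^2v_j^2)w_iw_j$.

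The crux is a short algebraic identity: factoring $v_i^2u_j^2-u_i^2v_j^2=(v_iu_j-u_iv_j)(v_iu_j+u_iv_j)$ and computing the numerator of $v_iu_j-u_iv_j$ gives $(\eta_1 d_i+1)(\eta_2 d_j+1)-(\eta_2 d_i+1)(\eta_1 d_j+1)=(\eta_1-\eta_2)(d_i-d_j)$, so that $v_iu_j-u_iv_j$ carries the same sign as $d_i-d_j$. Consequently each paired term equals a manifestly nonnegative multiple of $(\eta_1-\eta_2)(d_i-d_j)^2$, and summing over $i<j$ yields $A(\eta_2)B(\eta_1)-A(\eta_1)B(\eta_2)\ge 0$. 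This is exactly the ``sum of squares after symmetrization'' structure of Lemma \ref{lemma2}, merely with a two-parameter kernel.

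The main obstacle is organizational rather than analytic: the bookkeeping of the double sum, the clean factorization of the symmetrized kernel, and the question of strictness. The inequality is strict precisely when there is a pair $i\ne j$ with $d_i\ne d_j$ and $w_i,w_j>0$; in the degenerate case where $\theta_0$ (in the eigenbasis of $\Sigma$) loads only on directions sharing a common eigenvalue, $H$ is constant and strictness fails, but this does not affect Lemma \ref{lem:opt} since the $-\delta^2$ term in $g$ already forces strict decrease past $\eta^*$. As an illuminating alternative one may note $H(\eta)^2=\mathbb{E}_{p(\eta)}[d]$ for the probability weights $p_i(\eta)\propto d_iw_i/(\eta d_i+1)^2$, whence $\tfrac{d}{d\eta}H(\eta)^2=-2\,\mathrm{Cov}_{p(\eta)}(d,\,d/(\eta d+1))<0$ because $d\mapsto d/(\eta d+1)$ is increasing; this delivers the same conclusion through positive correlation.
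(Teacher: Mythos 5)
Your proof is correct and follows essentially the same route as the paper's: expand both products of quadratic forms as double sums over coordinates, symmetrize over pairs $(i,j)$ so the difference reduces to the pairwise sign inequality $d_id_j(d_i-d_j)(v_i^2u_j^2-u_i^2v_j^2)w_iw_j\ge 0$, which the paper disposes of by invoking the rearrangement inequality and you by the explicit factorization $(v_iu_j-u_iv_j)(v_iu_j+u_iv_j)$ with numerator $(\eta_1-\eta_2)(d_i-d_j)$. Your remark on strictness is in fact a point where you are more careful than the paper: the strict ``$<$'' in the lemma fails in the degenerate case (e.g.\ $\Sigma=\vv I_p$, where both sides are equal), and your observation that only weak monotonicity of $H$ is needed downstream---because the $-\delta^2$ term already forces strict decrease of $g$ past $\eta^*$ in Lemma \ref{lem:opt}---correctly patches this gap.
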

	\begin{proof}[Proof of Lemma \ref{lemma4}]
		Using the same techniques as in the proof of Lemma \ref{lemma2}, it suffices to show that for any $d_i\neq d_j,$
		\begin{eqnarray*}
			&&\left(\frac{d_i}{\eta_1 d_i+1}\right)^2 \frac{d_j}{(\eta_2d_j+1)^2}+\left(\frac{d_j}{\eta_1 d_j+1}\right)^2 \frac{d_i}{(\eta_2d_i+1)^2}\\
			&<&\left(\frac{d_i}{\eta_2 d_i+1}\right)^2 \frac{d_j}{(\eta_1d_j+1)^2}+\left(\frac{d_j}{\eta_2 d_j+1}\right)^2 \frac{d_i}{(\eta_1d_i+1)^2},
		\end{eqnarray*}
		which is equivalent to
		\begin{eqnarray}\label{eq:10}
		\frac{d_i-d_j}{(\eta_1 d_i+1)^2(\eta_2d_j+1)^2}<\frac{d_i-d_j}{(\eta_1 d_j+1)^2(\eta_2d_i+1)^2}.
		\end{eqnarray}
		The last inequality (\ref{eq:10}) always hold no matter $d_i>d_j$ or $d_i<d_j$ by the rearrangement inequality, which completes the proof.
	\end{proof}
	
	\section{Proofs in Section \ref{sec:lower_bound} and \ref{sec:low}}\label{sec:appendix:low}
		\subsection{Theorem \ref{thm:minimax:dense}}
\begin{lemma}\label{lem:minimax:dense:theta}
Assume $R>c_1\sigma$ for some constant $c_1$. Also Assume $\lambda_{\max}(\Sigma)$ and $\lambda_{\min}(\Sigma)$ are bounded and bounded away from zero. When $(p\log^2 n)/n\rightarrow 0$,
	\begin{eqnarray}
	\inf_{\widehat{\theta}} \sup_{\delta, \sigma<\|\theta_0\|\leq \sqrt{R^2+\sigma^2}, \Sigma}\mathbb{E}\|\widehat{\theta}-\theta^*(\delta)\|^2=\Omega\left(\frac{\sigma^2p}{n} \right).
	\end{eqnarray}
\end{lemma}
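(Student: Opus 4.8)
The plan is to witness the lower bound by collapsing the adversarial estimation target onto the ordinary regression target and then invoking the classical Gaussian-design linear-regression minimax bound. I would restrict the supremum to the subfamily with $\Sigma=\vv I_p$ and a single fixed attack level $\delta$ in the no-attack regime. With $\Sigma=\vv I_p$ the threshold in Proposition \ref{thm:opt} becomes $\delta_1=c_0\|\theta_0\|/\|\theta_0\|_{\Sigma^{-1}}=c_0$, a constant independent of $\theta_0$; hence for any $\delta\le c_0$, part (1) of Proposition \ref{thm:opt} gives $\theta^*(\delta)=\theta_0$ simultaneously for every admissible $\theta_0$. This is the key structural observation: choosing $\Sigma=\vv I_p$ makes the regime boundary $\theta_0$-free, so one value of $\delta$ turns the adversarial target into the standard target uniformly over the whole hypothesis set, and since $\delta$ is then a known constant the estimator $\widehat\theta(\X,\y,\delta)$ carries no information beyond $(\X,\y)$.

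Having reduced to bounding $\mathbb{E}\|\widehat\theta-\theta_0\|^2$ for Gaussian-design regression, I would obtain the rate by a standard Fano (local Le Cam / Assouad) argument. Fix a center $\theta_c$ with $\|\theta_c\|^2\in(\sigma^2,R^2+\sigma^2)$, which is possible since $R>c_1\sigma$ leaves an annulus of positive width; apply the Varshamov--Gilbert bound to get $\{\omega^{(1)},\dots,\omega^{(M)}\}\subset\{0,1\}^p$ with $M\ge 2^{p/8}$ and pairwise Hamming distance at least $p/8$; and set $\theta^{(j)}=\theta_c+\gamma\,\omega^{(j)}$ with $\gamma^2=c\sigma^2/n$ for a small constant $c$. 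The pairwise separation then satisfies $\|\theta^{(i)}-\theta^{(j)}\|^2\ge \gamma^2 p/8=\Omega(\sigma^2 p/n)$, while $\|\gamma\,\omega^{(j)}\|\le \gamma\sqrt p=O(\sigma\sqrt{p/n})\to 0$ keeps each $\theta^{(j)}$ inside the annulus and, by the previous paragraph, keeps $\theta^*(\delta)=\theta^{(j)}$. Because the design law does not depend on the parameter, the joint Kullback--Leibler divergence is exactly $\tfrac{n}{2\sigma^2}\|\theta^{(i)}-\theta^{(j)}\|_{\Sigma}^2\le n\gamma^2 p/(2\sigma^2)$, which is at most $\tfrac12\log M$ once $c$ is small enough. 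Fano's inequality then gives $\inf_{\widehat\theta}\max_j\mathbb{E}\|\widehat\theta-\theta^{(j)}\|^2=\Omega(\gamma^2 p)=\Omega(\sigma^2 p/n)$, the claimed bound.

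The main obstacle is not the Fano mechanics but the uniform compatibility of the packing with the adversarial structure: one must ensure the single fixed $\delta$ stays below $\delta_1(\theta^{(j)})$ for \emph{every} hypothesis at once, so that $\theta^*(\delta)=\theta^{(j)}$ holds across the whole packing. This is exactly what the choice $\Sigma=\vv I_p$ buys, since it forces $\delta_1\equiv c_0$; were $\Sigma$ left general, $\delta_1$ would vary with $\theta_0$ and a common $\delta$ could push some hypotheses out of the no-attack regime, reintroducing the nonlinear map $\theta_0\mapsto\theta^*$ that the text flags as the core difficulty. I would also check the side conditions: that $\sigma\sqrt{p/n}\to 0$ so the packing fits inside the annulus (which follows from $p/n\to0$, a consequence of $(p\log^2 n)/n\to0$), and that $R>c_1\sigma$ indeed leaves room to place $\theta_c$. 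With these in hand the reduction is clean and the resulting bound coincides with the pure-noise minimax rate of standard linear regression.
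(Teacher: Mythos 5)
Your proposal is correct, but it takes a genuinely different route from the paper's own proof. The paper argues in a Bayesian way: it restricts to $\Sigma=\vv I_p$, places a Gaussian prior $N(\vv 0,(\sigma^2/(\alpha n))\vv I_p)$ on $\theta_0$, and lower-bounds the minimax risk by the posterior variance of $\theta^*(\delta)$, using the structural identity $\theta^*(\delta)=(1-\kappa(\delta))\theta_0$ (valid for \emph{all} $\delta$ when $\Sigma=\vv I_p$), so that this posterior variance equals $(1-\kappa(\delta))^2(\sigma^2/n)\,tr\bigl((\widehat{\Sigma}_n+\alpha \vv I_p)^{-1}\bigr)=\Theta(\sigma^2p/n)$ with high probability; the norm constraint $\sigma<\|\theta_0\|\le\sqrt{R^2+\sigma^2}$ is then reinstated by a truncation argument on the event $\Pi(c)$, which is exactly where $R>c_1\sigma$, the concentration of $\widehat\Sigma_n$, and $(p\log^2 n)/n\to 0$ are consumed. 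You instead exploit the stronger consequence of $\Sigma=\vv I_p$ that $\delta_1\equiv c_0$ is parameter-free, freeze one $\delta\le c_0$ so that $\theta^*(\delta)=\theta_0$ identically over the hypothesis class, and then run a standard Varshamov--Gilbert plus Fano argument localized at a center of the annulus. Your route is more elementary: the norm constraint is handled for free because the packing has diameter $O(\sigma\sqrt{p/n})=o(\sigma)$, well below the annulus width $\Omega(\sigma)$ guaranteed by $R>c_1\sigma$, and you only need $p/n\to 0$ rather than the trace concentration $tr(\widehat{\Sigma}_n^{-1})=\Theta(p)$ or the full strength of $(p\log^2 n)/n\to 0$. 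What the paper's route buys in exchange is that the factor $(1-\kappa(\delta))^2$ tracks the bound across all regimes of $\delta$, and its conditional-variance template is precisely what is reused in the companion lemma for the $\Sigma$-part (with an inverse-Wishart prior), where your exact reduction $\theta^*=\theta_0$ is unavailable. Two small points you should state explicitly: the identity $\mathrm{KL}=\tfrac{n}{2\sigma^2}\|\theta^{(i)}-\theta^{(j)}\|^2$ requires taking the noise to be Gaussian, a legitimate restriction for a lower bound (the paper's posterior computation makes the same implicit choice); and for small fixed $p$ the Varshamov--Gilbert step needs $p$ at least a fixed constant, so one falls back on a two-point Le Cam argument in that regime --- both are routine.
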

\begin{proof}[Proof of Lemma \ref{lem:minimax:dense:theta}]
We first consider a relaxation where $\|\theta_0\|$ is unbounded, then add back the condition on $\|\theta_0\|$ into the bound to show that these conditions does not change the rate of the lower bound.

Assume $\theta_0$ follows $N(0,\sigma^2/(\alpha n)\vv I_p)$ and $\alpha=o(1)$. Denote $\widehat{\Sigma}_n=\vv X^{\top}\vv X/n$, and $\widehat{\theta}_{n,\alpha}=(\widehat{\Sigma}_n+\alpha \vv I_p)^{-1}\vv X^{\top}\vv y/n$.  Given $(\vv X,\vv y)$, it follows that $\theta_0|(\vv X,\vv y)\sim N(\widehat{\theta}_{n,\alpha},(\sigma^2/n)(\widehat{\Sigma}_n+\alpha \vv I_p)^{-1})$,  and
\begin{eqnarray*}
	\inf_{\widehat{\theta}} \sup_{\delta,\theta_0,\Sigma}\mathbb{E}\|\widehat{\theta}-\theta^*(\delta)\|^2&\geq&
	\inf_{\widehat{\theta}} \sup_{\delta,\theta_0,\Sigma=\vv I_p}\mathbb{E}\|\widehat{\theta}-\theta^*(\delta)\|^2\\&\geq& \inf_{\widehat{\theta}}\sup_{\delta}\mathbb{E}\left[\mathbb{E}_{\theta_0|\vv X, \vv y,\Sigma=\vv I_p}\|\widehat{\theta}-\theta^*(\delta)\|^2\right].
\end{eqnarray*}

Observe that
\begin{eqnarray*}
	&&\inf_{\widehat{\theta}}\sup_{\delta}\mathbb{E}\left[\mathbb{E}_{\theta_0|\vv X, \vv y,\Sigma=\vv I_p}\|\widehat{\theta}-\theta^*(\delta)\|^2\right]\\
	&=&\inf_{\widehat{\theta}}\sup_{\delta}\mathbb{E}\left(\|\widehat{\theta}-\mathbb{E}_{\theta_0|\vv X, \vv y,\Sigma=\vv I_p}[\theta^*(\delta)]\|^2+\mathbb{E}_{\theta_0|\vv X, \vv y,\Sigma=\vv I_p}\|\mathbb{E}_{\theta_0|\vv X, \vv y,\Sigma=\vv I_p}[\theta^*(\delta)]-\theta^*(\delta)\|^2\right)\\
	&\geq& \sup_{\delta}\mathbb{E}\left[\mathbb{E}_{\theta_0|\vv X, \vv y,\Sigma=\vv I_p}\|\mathbb{E}_{\theta_0|\vv X, \vv y,\Sigma=\vv I_p}[\theta^*(\delta)]-\theta^*(\delta)\|^2\right].
\end{eqnarray*}
When $\Sigma=\vv I_p$, by Proposition \ref{thm:opt}, we know that $\theta^*(\delta)=(1-\kappa(\delta))\theta_0$ for some function $\kappa$ that only depends on $\delta$. In addition, based on equation (A.5) in Lemma A.6 of \cite{ing2011stepwise}, we have $\|\widehat{\Sigma}_n-\vv I_p\|=o(1)$ and $tr(\widehat{\Sigma}^{-1})=\Theta(p)$ with probability tending to 1. Thus for $\alpha=o(1)$,
\begin{eqnarray*}
\mathbb{E}_{\theta_0|\vv X,\vv y,\Sigma=\vv I_p}\|\theta^*(\delta)-\mathbb{E}_{\theta_0|\vv X,\vv y,\Sigma=\vv I_p}[\theta^*(\delta)]\|^2
&=& (1-\kappa(\delta))^2 \mathbb{E}_{\theta_0|\vv X,\vv y,\Sigma=\vv I_p}\|\theta_0-\widehat{\theta}_{n,\alpha}\|^2\\
&=&(1-\kappa(\delta))^2\frac{\sigma^2}{n} tr\left((\widehat{\Sigma}_n+\alpha \vv I_p)^{-1} \right)\\
&=&(1-\kappa(\delta))^2\frac{\sigma^2}{n} tr\left(\widehat{\Sigma}_n^{-1}-\alpha(\widehat{\Sigma}_n+\alpha \vv I_p)^{-1}\widehat{\Sigma}_{n}^{-1} \right)\\
&=&\left(1+ O\left(\frac{1}{ \lambda_{\max}(\hat{\Sigma}_n)/\alpha+1}\right)\right)(1-\kappa(\delta))^2\frac{\sigma^2}{n} tr\left(\widehat{\Sigma}_n^{-1}\right)\\
&=&(1+o(1))(1-\kappa(\delta))^2\frac{\sigma^2}{n} tr\left(\widehat{\Sigma}_n^{-1}\right).
\end{eqnarray*}

The above derivation is for unbounded $\theta_0$. Now we show that adding back the constraint $\|\theta_0\|\leq R$ does not change the order of this bound. 

Take $\alpha= (pR^2)/(n) $, and denote $\Pi(c)=\{ (\X,\y)\;|\; \|\widehat{\theta}_{n,\alpha}\|\in( \sigma(1+c),\sqrt{R^2+\sigma^2}(1-c) ],\; \|\widehat{\Sigma}_n-\vv I_p\|=o(1) \}$. Recall that $R\geq c_1\sigma$ for some constant $c_1>0$, thus there exists some small constant $c>0$, such that $P((\X,\y)\in\Pi(c))>c_2$ for some constant $c_2>0$.

For any $(\X,\y)\in \Pi(c)$, from the conditional distribution $\theta_0|\X,\y$ and the assumption that $(p\log^2 n)/n\rightarrow 0$, one can show that
\begin{eqnarray*}
&&\mathbb{E}_{\theta_0|\vv X, \vv y,\Sigma=\vv I_p}\left\|\mathbb{E}_{\theta_0|\vv X, \vv y,\Sigma=\vv I_p}\left[\theta^*(\delta)1_{\{\|\theta_0\|\in(\sigma,\sqrt{R^2+\sigma^2}]\}}\right]-\left[\theta^*(\delta)1_{\{\|\theta_0\|\in(\sigma,\sqrt{R^2+\sigma^2}]\}}\right]\right\|^2\\
&=&(1+o(1)) \mathbb{E}_{\theta_0|\vv X,\vv y,\Sigma=\vv I_p}\|\theta^*(\delta)-\mathbb{E}_{\theta_0|\vv X,\vv y,\Sigma=\vv I_p}[\theta^*(\delta)]\|^2.
\end{eqnarray*}
Consequently,
\begin{eqnarray*}
&&\inf_{\widehat{\theta}} \sup_{\delta,\sigma<\|\theta_0\|\leq \sqrt{R^2+\sigma^2}, \Sigma}\mathbb{E}\|\widehat{\theta}-\theta^*(\delta)\|^2\\
&=& \inf_{\widehat{\theta}} \sup_{\delta, \theta_0, \Sigma}\mathbb{E}\left[\|\widehat{\theta}-\theta^*(\delta)\|^21_{\{\|\theta_0\|\in(\sigma,\sqrt{R^2+\sigma^2}]\}}\right]\\
&\geq&\inf_{\widehat{\theta}}\sup_{\delta}\mathbb{E}\left[1_{\{ (\X,\y)\in \Pi(c) \}}\left(\mathbb{E}_{\theta_0|\vv X, \vv y,\Sigma=\vv I_p}\|\widehat{\theta}-\theta^*(\delta)\|^21_{\{\|\theta_0\|\in(\sigma,\sqrt{R^2+\sigma^2}]\}}\right)\right]\\
&\geq&(1+o(1))\sup_{\delta}\mathbb{E}\left[ 1_{\{ (\X,\y)\in \Pi(c) \}}\left(\mathbb{E}_{\theta_0|\vv X,\vv y,\Sigma=\vv I_p}\|\theta^*-\mathbb{E}_{\theta_0|\vv X,\vv y,\Sigma=\vv I_p}[\theta^*]\|^2\right)\right]\\
&=&\Omega\left( \frac{\sigma^2 p}{n} \right).
\end{eqnarray*}

\end{proof}

\begin{lemma}\label{lem:minimax:dense:Sigma}

	Assume $(p\log^2 n)/n\rightarrow 0$, then for any $\theta_0$, when $\lambda_{\min}(\Sigma)$ and $\lambda_{\max}(\Sigma)$ are both bounded and bounded away from zero, for any nonzero $\theta_0$,
	\begin{eqnarray}
	\inf_{\widehat{\theta}} \sup_{\delta,\Sigma}\mathbb{E}\|\widehat{\theta}-\theta^*(\delta)\|^2=\Omega\left(\frac{p\|\theta_0\|^2}{n}\right).
	\end{eqnarray}

\end{lemma}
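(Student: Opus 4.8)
The plan is to prove this second lower bound by a Bayesian reduction over the covariance matrix $\Sigma$, holding the (fixed, nonzero) $\theta_0$ in place, exactly paralleling the treatment of the noise in Lemma~\ref{lem:minimax:dense:theta}. Since the minimax risk dominates the Bayes risk for any prior, I would place a prior on $\Sigma$ supported on matrices whose eigenvalues stay in a fixed compact subinterval of $(0,\infty)$ bounded away from $0$ and $\infty$ — a small fluctuation around a fixed reference $\Sigma_0$ — so that the constraint $0<\lambda_{\min}(\Sigma)\le\lambda_{\max}(\Sigma)<\infty$ is automatic and a single $\delta$ can be chosen to lie strictly inside $(\delta_1(\Sigma),\delta_2(\Sigma))$ for every $\Sigma$ in the support. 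For such $\delta$, Proposition~\ref{thm:opt} guarantees that $\theta^*(\delta)=(\Sigma+\lambda^*(\delta)\vv I_p)^{-1}\Sigma\theta_0$ is a smooth function of $\Sigma$, and the Bayes risk is bounded below by the expected posterior variance $\E\big[\mathrm{tr}\,\mathrm{Var}(\theta^*(\delta)\mid \X)\big]$, so the whole task becomes showing this posterior spread is $\Omega(p\|\theta_0\|^2/n)$.

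The crux is that the dependence of $\theta^*$ on $\Sigma$ runs, to leading order, through the $p$-vector $\Sigma\theta_0$: writing $\theta^*=\theta_0-\lambda^*(\Sigma+\lambda^*\vv I_p)^{-1}\theta_0$, the map carrying $\Sigma\theta_0$ (equivalently the resolvent $(\Sigma+\lambda^*\vv I_p)^{-1}\theta_0$) to $\theta^*$ is bi-Lipschitz because $(\Sigma+\lambda^*\vv I_p)^{-1}$ has spectrum bounded away from $0$ and $\infty$. The problem thus reduces to estimating $\Sigma\theta_0$ from $n$ i.i.d.\ Gaussian design vectors, and a direct Isserlis computation gives, for $\x\sim N(\vv 0,\Sigma)$, $\E\|(\widehat\Sigma_n-\Sigma)\theta_0\|^2=\tfrac1n\E\|\x(\x^\top\theta_0)-\Sigma\theta_0\|^2=\Theta(p\|\theta_0\|^2/n)$, since the leading term is $\|\theta_0\|_\Sigma^2\,\mathrm{tr}(\Sigma)/n$. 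This both pins down the target rate and calibrates the prior, which I would tune (as in Lemma~\ref{lem:minimax:dense:theta}) so that the posterior spread of $\Sigma\theta_0$ matches this unbiased-estimator variance up to a constant. The assumption $(p\log^2 n)/n\to0$ enters exactly as before, to guarantee $\|\widehat\Sigma_n-\Sigma\|=o(1)$ and $\mathrm{tr}(\widehat\Sigma_n^{-1})=\Theta(p)$ on an event of probability tending to one, on which the posterior calculation is valid; the constraint $\|\theta_0\|\le R$ is free here because $\theta_0$ is fixed.

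To transfer the rate from $\Sigma\theta_0$ to $\theta^*$, I would argue that the variation of $\lambda^*(\Sigma)$ contributes only a lower-order term. Differentiating the stationarity condition~(\ref{eqn:cond}) shows $\lambda^*$ is a smooth bounded functional of $\Sigma$ whose first-order change enters $\theta^*$ along a single fixed direction, so it cannot cancel the full $p$-dimensional posterior spread inherited from $\Sigma\theta_0$. A Jacobian lower bound on the map $\Sigma\mapsto\theta^*$ then yields $\E\big[\mathrm{tr}\,\mathrm{Var}(\theta^*(\delta)\mid \X)\big]=\Omega(p\|\theta_0\|^2/n)$, completing the bound.

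The hardest step is precisely this Jacobian nondegeneracy: because $\theta^*=(\Sigma+\lambda^*\vv I_p)^{-1}\Sigma\theta_0$ is nonlinear in $\Sigma$ and $\lambda^*$ is an implicit function of $\Sigma$ through~(\ref{eqn:cond}), I must rule out any conspiracy between the resolvent variation and the $\lambda^*$ variation that would collapse $\theta^*$ onto a low-dimensional set. An alternative that sidesteps the posterior computation is an Assouad construction: index $2^p$ covariances $\Sigma_\omega=\Sigma_0+(c/\sqrt n)\sum_j\omega_j E_j$ by $\omega\in\{0,1\}^p$, with symmetric $E_j$ chosen so the first-order displacements $\partial_{E_j}\theta^*$ are nearly orthogonal of norm $\Theta(\|\theta_0\|)$. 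Neighboring covariances then have per-sample Kullback--Leibler divergence $O(1/n)$, hence $O(1)$ over the $n$ samples and indistinguishable for small $c$, while each flipped coordinate separates $\theta^*$ by $\gtrsim\|\theta_0\|^2/n$, so Assouad's lemma gives $\Omega(p\|\theta_0\|^2/n)$. In either route the essential technical point is the orthogonality/nondegeneracy of the induced map $\Sigma\mapsto\theta^*$.
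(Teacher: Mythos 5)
Your overall skeleton does match the paper's: the paper also proves this lemma by placing a prior on $\Sigma$ with $\theta_0$ fixed, lower-bounding the minimax risk by the Bayes risk, and lower-bounding that by the expected posterior variance of $\theta^*$; and your remark that $\delta$ must be chosen so that $\lambda^*$ stays strictly inside the interior regime is a point the paper also needs (if $\lambda^*=0$ then $\theta^*=\theta_0$ carries no information about $\Sigma$ and the bound fails). However, the two places where you stop are exactly the places where the paper does its real work, so what you have is a plan rather than a proof.

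The first gap is that you never exhibit the prior or carry out the posterior computation. The paper takes the conjugate inverse-Wishart prior $\Sigma\sim IW(\nu,\Lambda)$ with $\Lambda=(\nu-p-1)\vv I_p$, $\nu=n+p+1$, so that $\Sigma\mid\X\sim IW(n+\nu,\Lambda+n\widehat{\Sigma}_n)$ is explicit; it then freezes $\lambda$ at its value at the posterior-mean covariance, reduces the posterior variance of $\theta^*$ to $\lambda^2\psi(\lambda)$ with $\psi(\lambda)=\mathbb{E}_{\Sigma|\X}\|(\Sigma+\lambda\vv I_p)^{-1}\theta_0-\mathbb{E}_{\Sigma|\X}(\Sigma+\lambda\vv I_p)^{-1}\theta_0\|^2$, proves an auxiliary lemma that $\psi$ is decreasing and convex with $\psi'(0)=-\Theta(\psi(0))$ so that $\psi(\lambda)\asymp\psi(0)$ for $\lambda\in(0,\epsilon)$, and finally evaluates $\psi(0)=\theta_0^{\top}\vv V_n\theta_0=\Omega(p\|\theta_0\|^2/n)$ using the closed-form covariance of the entries of $\Sigma^{-1}$ under the inverse-Wishart posterior. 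Your substitute step --- ``calibrate the prior so that the posterior spread of $\Sigma\theta_0$ matches the variance $\Theta(p\|\theta_0\|^2/n)$ of the unbiased estimator $\widehat{\Sigma}_n\theta_0$'' --- asserts precisely what must be proved: the variance of some frequentist estimator is an upper-bound-side quantity and by itself implies no lower bound; you would need either a conjugate-prior computation as in the paper or a van Trees-type inequality, neither of which you invoke. (A smaller imprecision: $\theta^*$ does not depend on $\Sigma$ only through $\Sigma\theta_0$; to first order the displacement is $\lambda(\Sigma_0+\lambda\vv I_p)^{-1}E(\Sigma_0+\lambda\vv I_p)^{-1}\theta_0$, so the reduction is to estimating $E\theta_0'$ with $\theta_0'=(\Sigma_0+\lambda\vv I_p)^{-1}\theta_0$, which is fine up to constants but is not the ``bi-Lipschitz in $\Sigma\theta_0$'' statement you make.) The second gap is the one you yourself flag as hardest: the possible cancellation between the resolvent variation and the implicit variation of $\lambda^*(\Sigma)$ is never ruled out. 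This is not a minor detail, since the $\lambda^*$ fluctuation is driven by the same posterior fluctuation of $\Sigma$ as the signal. The paper disposes of it by keeping $\lambda$ fixed at the posterior mean, showing the $\lambda^*$-mismatch contributes only a remainder of negligible order under $(p\log^2 n)/n\to0$, and restricting $\delta$ so that $\lambda^*\in(0,\epsilon)$; your ``project orthogonally to one fixed direction'' heuristic can be made rigorous only after you know the main term's posterior covariance is spread over $p$ near-orthogonal directions, which is again the missing computation above.

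Your Assouad alternative is, by contrast, a genuinely different route from the paper's dense-case proof, and it is essentially the technique the paper uses for the \emph{sparse} lower bound (Theorem \ref{thm:minimax:sparse}, where $\vv I_p$ is perturbed along the first row and column and $\theta^*$ is tracked through the stationarity condition (\ref{eqn:cond})). It can be made to work, but not by appealing to a generic ``nondegeneracy'': choose $\theta_0\propto e_1$, $\Sigma_0=\vv I_p$, and $E_j\propto e_je_1^{\top}+e_1e_j^{\top}$ for $j\geq2$. Then the first-order displacements $\partial_{E_j}\theta^*\propto e_j$ are exactly orthogonal, and since $\theta_0^{\top}E_j\theta_0=0$ and $\Sigma_0$ is diagonal, the first-order variation of $\lambda^*$ along each $E_j$ vanishes, so the cancellation problem disappears by construction; one then checks that second-order corrections concentrate along $e_1$ and that neighboring vertices have total KL divergence $O(c^2)$ over $n$ samples. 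Had you supplied this concrete construction, the Assouad route would be a complete and arguably more elementary proof; as written, both of your routes leave their pivotal step as an assertion.
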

\begin{proof}[Proof of Lemma \ref{lem:minimax:dense:Sigma}]

We impose a prior distribution on $\Sigma$. Assume $\Sigma$ follows $IW(\nu,\Lambda)$ with $\Lambda=(\nu-p-1)\vv I_p$ and $\nu=n+p+1$. In this case, we have $\mathbb{E}\Sigma=\frac{\Lambda}{\nu-p-1}=\vv I_p$, and
	\begin{eqnarray*}
		\Sigma|\vv X\sim IW(n+\nu,\Lambda+n\widehat{\Sigma}_n).
	\end{eqnarray*}
	Similar as Lemma \ref{lem:minimax:dense:theta}, we first relax the condition on the eigenvalues on $\Sigma$ to obtain a bound, then add back the conditions back to the bound.
	
	  Based on the distribution of $\Sigma|\X$, we have
	\begin{eqnarray}\label{eqn:minimax:dense:matrix:5}
	\inf_{\widehat{\theta}}\sup_{\delta}\mathbb{E}\left[\mathbb{E}_{\Sigma|\X}\|\widehat{\theta}-\theta^*(\delta)\|^2\right]
	&=&\inf_{\widehat{\theta}}\sup_{\delta}\mathbb{E}\left(\|\widehat{\theta}-\mathbb{E}_{\Sigma|\X}\theta^*(\delta)\|^2+\mathbb{E}_{\Sigma|\vv X}\|\mathbb{E}_{\Sigma|\vv X}(\theta^*)-\theta^*\|^2\right)\\
	&\geq&\sup_{\delta}\mathbb{E}\left[\mathbb{E}_{\Sigma|\vv X}\|\mathbb{E}_{\Sigma|\vv X}(\theta^*)-\theta^*\|^2\right].\nonumber
	\end{eqnarray}
	Denote $\lambda=\lambda^*((n\widehat{\Sigma}_n+\Lambda)/(n+\nu-p-1) ,\theta_0,\delta)$. For any $\delta>0$,
	\begin{eqnarray}
		&&\mathbb{E}_{\Sigma|\vv X}\|\mathbb{E}_{\Sigma|\vv X}(\theta^*)-\theta^*\|^2\nonumber\\
		\nonumber&\geq& \frac{1}{2}\mathbb{E}_{\Sigma|\vv X}\|(\mathbb{E}_{\Sigma|\vv X}(\Sigma+\lambda \vv I_p )^{-1}\Sigma\theta_0)-(\Sigma+\lambda \vv I_p )^{-1}\Sigma\theta_0)\|^2\\\nonumber&&-\mathbb{E}_{\Sigma|\vv X}\left\|\mathbb{E}_{\Sigma|\vv X}(\theta^*)-\mathbb{E}_{\Sigma|\vv X}(\Sigma+\lambda \vv I_p )^{-1}\Sigma\theta_0-\theta^*+(\Sigma+\lambda \vv I_p )^{-1}\Sigma\theta_0\right\|^2\\
		\nonumber&=& \frac{1}{2}\mathbb{E}_{\Sigma|\vv X}\|(\mathbb{E}_{\Sigma|\vv X}(\Sigma+\lambda \vv I_p )^{-1}\Sigma\theta_0)-(\Sigma+\lambda \vv I_p )^{-1}\Sigma\theta_0)\|^2\\\nonumber&&-\mathbb{E}_{\Sigma|\vv X}\left\|\mathbb{E}_{\Sigma|\vv X} (\lambda^*-\lambda)(\Sigma+\lambda^* \vv I_p)^{-1}(\Sigma+\lambda\vv I_p)^{-1}\Sigma\theta_0 - (\lambda^*-\lambda)(\Sigma+\lambda^* \vv I_p)^{-1}(\Sigma+\lambda\vv I_p)^{-1}\Sigma\theta_0 \right\|^2\\
		&=&  \frac{1}{2}\mathbb{E}_{\Sigma|\vv X}\|(\mathbb{E}_{\Sigma|\vv X}(\Sigma+\lambda \vv I_p )^{-1}\Sigma\theta_0)-(\Sigma+\lambda \vv I_p )^{-1}\Sigma\theta_0)\|^2-O\left(\mathbb{E}\| \Sigma-\mathbb{E}_{\Sigma|\vv X}\Sigma \|^4\|\theta_0\|^2\right).\label{eqn:minimax:dense:sigma:1}
	\end{eqnarray}
	When $(p\log^2n)/n\rightarrow 0$, $\mathbb{E}\| \Sigma-\mathbb{E}_{\Sigma|\vv X}\Sigma \|^4=O(((p\log n)/n)^2)=o(p/n)$ based on equation (A.5) in Lemma A.6 of \cite{ing2011stepwise}. As will be shown later, the dominant term is in $\Theta(p\|\theta_0\|^2/n)$, therefore $\mathbb{E}\| \Sigma-\mathbb{E}_{\Sigma|\vv X}\Sigma \|^4\|\theta_0\|^2$ is only a remainder term.  Furthermore,
	\begin{eqnarray*}
		&&\mathbb{E}_{\Sigma|\vv X}\|(\mathbb{E}_{\Sigma|\vv X}(\Sigma+\lambda \vv I_p )^{-1}\Sigma\theta_0)-(\Sigma+\lambda \vv I_p )^{-1}\Sigma\theta_0)\|^2\\
		&=&\lambda^2\mathbb{E}_{\Sigma\vv |\vv X}\|(\mathbb{E}_{\Sigma|\vv X}(\Sigma+\lambda \vv I_p )^{-1}\theta_0)-(\Sigma+\lambda \vv I_p )^{-1}\theta_0\|^2\\
		&:=&\lambda^2\psi(\lambda).
	\end{eqnarray*}
	Based on Lemma \ref{lem:minimax:dense:2} below, when $\lambda\geq 0$,
	\begin{eqnarray*}
		&&\psi(0)=\theta_0^{\top}\vv V_n \theta_0,\\
		&&\frac{\partial \psi(\lambda)}{\partial \lambda}\leq 0,\\
		&&\frac{\partial \psi(\lambda)}{\partial \lambda}\bigg|_{\lambda=0}=-\Theta(\theta_0^{\top}\vv V_n \theta_0),\\
		&&\frac{\partial^2 \psi(\lambda)}{\partial \lambda^2}\geq 0,
	\end{eqnarray*}
	where
	\begin{eqnarray*}
		\vv V_{n,i,j}&=& \sum_{k=1}^p Cov_{\Sigma|X}(\Sigma_{i,k}^{-1},\Sigma_{k,j}^{-1}).
	\end{eqnarray*}
	From the formula in \cite{Nydick2012TheWA},
	\begin{eqnarray*}
		Cov_{\Sigma|X}(\Sigma_{i,k}^{-1},\Sigma_{k,j}^{-1})&=&(n+\nu)\left((\Lambda+n\widehat{\Sigma}_{n})^{-1}_{i,k}(\Lambda+n\widehat{\Sigma}_{n})^{-1}_{k,j}+(\Lambda+n\widehat{\Sigma}_{n})^{-1}_{i,j}(\Lambda+n\widehat{\Sigma}_{n})^{-1}_{k,k}\right).
	\end{eqnarray*}
	Consequently,  there exists some constant $\epsilon>0$ such that, when $\delta$ satisfies $0<\lambda<\epsilon$, 
	\begin{eqnarray*}
		\mathbb{E}_{\Sigma|\X}\|\mathbb{E}_{\theta_0,\Sigma|\X,\y}(\theta^*)-\mathbb{E}_{\theta_0|\vv X,\vv y}[\theta^*|\Sigma]\|^2=\Omega\left( \frac{tr(\widehat{\Sigma}_n^{-1})\lambda_{\min}(\widehat{\Sigma}^{-1})\|\theta_0\|^2}{n} \right).
	\end{eqnarray*}
	
	Note that the above bound does not have restriction on $\Sigma$.
	
	Now we add back the condition where $\lambda_{\min}(\Sigma)$ and $\lambda_{\max}(\Sigma)$ are both bounded and bounded away from zero. When $(p\log^2 n)/n\rightarrow 0$ and $c_1\leq \lambda_{\min}(\widehat\Sigma_n)\leq\lambda_{\max}(\widehat\Sigma_n)\leq c_2$, there exists some constant $c>0$ such that
	\begin{eqnarray}\nonumber
	&&\mathbb{E}_{\Sigma\vv |\vv X}\|(\mathbb{E}_{\Sigma|\vv X}\Sigma^{-1}\theta_01_{\{ \lambda_{\max}(\Sigma),\lambda_{\min}(\Sigma)\in(c_1-c,c_2+c) \}})-\Sigma^{-1}\theta_01_{\{ \lambda_{\max}(\Sigma),\lambda_{\min}(\Sigma)\in(c_1-c,c_2+c) \}}\|^2\\&=&(1+o(1)) \mathbb{E}_{\Sigma\vv |\vv X}\|(\mathbb{E}_{\Sigma|\vv X}\Sigma^{-1}\theta_0)-\Sigma^{-1}\theta_0\|^2.\nonumber
	\end{eqnarray}
	
	Furthermore, since with probability tending to 1, $\|\widehat{\Sigma}_n-\vv I_p\|=O(\sqrt{(p\log n)/n})$, we also have with probability tending to 1, $\lambda^*((n\widehat{\Sigma}_n+\Lambda)/(n+\nu-p-1) ,\theta_0,\delta)=(1+o(1)) \lambda^*(\vv I_p,\theta_0,\delta)$. Therefore, denote $\delta^*_1$ and $\delta^*_2$ be the $\delta$'s such that $\lambda^*(\vv I_p,\theta_0,\delta)=0^+$ and $\lambda^*(\vv I_p,\theta_0,\delta)=\epsilon$ respectively, then when $\delta\in(\delta_1^*+\varepsilon,\delta_2^*-\varepsilon)$ for some small $\varepsilon>0$, with probability tending to 1,
	\begin{eqnarray*}
	\lambda^*((n\widehat{\Sigma}_n+\Lambda)/(n+\nu-p-1) ,\theta_0,\delta)\in(0,\epsilon).
	\end{eqnarray*}
	Recall that the prior distribution of $\Sigma$ is $IW(\nu,\Lambda)$, so there exists some $(C_1,C_2,c)>0$ such that with probability tending to 1, $\lambda_{\min}(\widehat\Sigma)>C_1+c,\lambda_{\max}(\widehat\Sigma)<C_2-c$. Therefore, taking $0<C_1+c<1<C_2-c<\infty$,
	\begin{eqnarray}
	&&\inf_{\widehat{\theta}} \sup_{\delta,\lambda_{\min}(\Sigma)>C_1,\lambda_{\max}(\Sigma)<C_2} \mathbb{E}\|\widehat{\theta}-\theta^*(\delta)\|^2\label{eqn:minimax:dense:matrix:1}\\
	&=& \inf_{\widehat{\theta}} \sup_{\delta,\Sigma} \mathbb{E}\|\widehat{\theta}-\theta^*(\delta)\|^21_{\{\lambda_{\min}(\Sigma)>C_1,\lambda_{\max}(\Sigma)<C_2\}}\label{eqn:minimax:dense:matrix:2}\\
	&\geq& (1+o(1))\sup_{\delta}\mathbb{E}\left[1_{\{\lambda_{\min}(\widehat\Sigma)>C_1+c,\lambda_{\max}(\widehat\Sigma)<C_2-c\}} \mathbb{E}_{\Sigma|\X}\|\mathbb{E}_{\theta_0,\Sigma|\X,\y}(\theta^*)-\mathbb{E}_{\theta_0|\vv X,\vv y}[\theta^*|\Sigma]\|^2\right]\label{eqn:minimax:dense:matrix:3}\\
	&\geq& (1+o(1))\sup_{\delta\in(\delta_1^*+\varepsilon,\delta_2^*-\varepsilon)}\mathbb{E}\bigg[1_{\{  \lambda^*((n\widehat{\Sigma}_n+\Lambda)/(n+\nu-p-1) ,\theta_0,\delta)\in(0,\epsilon)\}}1_{\{\lambda_{\min}(\widehat\Sigma)>C_1+c,\lambda_{\max}(\widehat\Sigma)<C_2-c\}}\nonumber\\&&\qquad\qquad\qquad\qquad\qquad\qquad\qquad\qquad\times \mathbb{E}_{\Sigma|\X}\|\mathbb{E}_{\theta_0,\Sigma|\X,\y}(\theta^*)-\mathbb{E}_{\theta_0|\vv X,\vv y}[\theta^*|\Sigma]\|^2\bigg]\label{eqn:minimax:dense:matrix:4}\\
	&=&\Omega\left( \frac{p\|\theta_0\|^2}{n} \right).\nonumber
	\end{eqnarray}
	From (\ref{eqn:minimax:dense:matrix:1}) to (\ref{eqn:minimax:dense:matrix:2}), we use the fact that the exact choice of $\Sigma$ in (\ref{eqn:minimax:dense:matrix:1}) will automatically leads to $1_{\{\lambda_{\min}(\Sigma)>C_1,\lambda_{\max}(\Sigma)<C_2\}}=1$, thus moving the eigenvalue conditions from $\sup$ to indicator function does not change the result. 
	
	From (\ref{eqn:minimax:dense:matrix:2}) to (\ref{eqn:minimax:dense:matrix:3}), we change from ``choosing the exact $\Sigma$" to ``$\Sigma$ satisfies a prior distribution", so the equality becomes inequality. Further, since under our choice of prior distribution of $\Sigma$, $\Sigma|\widehat\Sigma\rightarrow\widehat\Sigma$, we replace $1_{\{\lambda_{\min}(\Sigma)>C_1,\lambda_{\max}(\Sigma)<C_2\}}$ to $1_{\{\lambda_{\min}(\widehat\Sigma)>C_1+c,\lambda_{\max}(\widehat\Sigma)<C_2-c\}}$. The estimator $\widehat\theta$ is eliminated due to (\ref{eqn:minimax:dense:matrix:5}).
	
	From (\ref{eqn:minimax:dense:matrix:3}) to (\ref{eqn:minimax:dense:matrix:4}), we restrict the choice of $\delta$ into a certain range.
\end{proof}
\begin{lemma}\label{lem:minimax:dense:2}
	When $(p\log n)/n\rightarrow 0$, and all eigenvalues of $\widehat{\Sigma}_n$ are finite and bounded away from zero,
	\begin{eqnarray*}
		&&\psi(0)=\theta_0^{\top}\vv V_n \theta_0,\\
		&&\frac{\partial \psi(\lambda)}{\partial \lambda}\leq 0,\quad\frac{\partial \psi(\lambda)}{\partial \lambda}\bigg|_{\lambda=0}=-\Theta(\theta_0^{\top}\vv V_n \theta_0),\\
		&&\frac{\partial^2 \psi(\lambda)}{\partial \lambda^2}\geq 0,
	\end{eqnarray*}
\end{lemma}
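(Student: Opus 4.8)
The plan is to first rewrite $\psi$ as the trace of the covariance of a single random vector. Writing $M_\lambda := (\Sigma+\lambda\vv I_p)^{-1}$, the quantity inside the norm is $M_\lambda\theta_0 - \E_{\Sigma|\vv X}[M_\lambda\theta_0]$, so that, using symmetry of $M_\lambda$,
\begin{eqnarray*}
\psi(\lambda) = \E_{\Sigma|\vv X}\big\|M_\lambda\theta_0 - \E_{\Sigma|\vv X}[M_\lambda\theta_0]\big\|^2 = \theta_0^{\top}\Big(\E_{\Sigma|\vv X}[M_\lambda^2] - (\E_{\Sigma|\vv X}M_\lambda)^2\Big)\theta_0 \ge 0 .
\end{eqnarray*}
The identity $\psi(0)=\theta_0^{\top}\vv V_n\theta_0$ is then immediate and exact: at $\lambda=0$ we have $M_0=\Sigma^{-1}$, and expanding coordinatewise, $\mathrm{Var}\big((\Sigma^{-1}\theta_0)_i\big)=\sum_{k,l}\theta_{0,k}\theta_{0,l}\,\mathrm{Cov}_{\Sigma|\vv X}(\Sigma^{-1}_{i,k},\Sigma^{-1}_{i,l})$; summing over $i$ and using $\Sigma^{-1}_{i,k}=\Sigma^{-1}_{k,i}$ reproduces exactly $\sum_{k,l}\theta_{0,k}\theta_{0,l}\vv V_{n,k,l}$ with $\vv V_{n,k,l}=\sum_i\mathrm{Cov}_{\Sigma|\vv X}(\Sigma^{-1}_{k,i},\Sigma^{-1}_{i,l})$.

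Next I would differentiate in $\lambda$. Since $\tfrac{d}{d\lambda}M_\lambda=-M_\lambda^2$, differentiating the covariance form gives
\begin{eqnarray*}
\psi'(\lambda) &=& -2\,\mathrm{Cov}^{\mathrm{tr}}_{\Sigma|\vv X}\!\big(M_\lambda\theta_0,\,M_\lambda^2\theta_0\big),\\
\psi''(\lambda) &=& 2\,\mathrm{Var}_{\Sigma|\vv X}\!\big(M_\lambda^2\theta_0\big)+4\,\mathrm{Cov}^{\mathrm{tr}}_{\Sigma|\vv X}\!\big(M_\lambda\theta_0,\,M_\lambda^3\theta_0\big),
\end{eqnarray*}
where $\mathrm{Cov}^{\mathrm{tr}}$ denotes the trace of the cross-covariance of two random vectors. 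Thus both the monotonicity claim $\psi'\le 0$ and the convexity claim $\psi''\ge 0$ reduce to a single positive-association statement: for integers $a<b$ one has $\mathrm{Cov}^{\mathrm{tr}}_{\Sigma|\vv X}(M_\lambda^a\theta_0,\,M_\lambda^b\theta_0)\ge 0$, the variance term in $\psi''$ being automatically nonnegative.

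Establishing this sign is the main obstacle, because the trace cross-covariance of two vector-valued comonotone functions of $\Sigma$ is \emph{not} nonnegative by abstract positive-semidefiniteness alone (the eigenvectors of $\Sigma$ fluctuate jointly with its eigenvalues). My plan is to exploit posterior concentration: under $(p\log n)/n\to 0$, equation (A.5) in Lemma A.6 of \cite{ing2011stepwise} gives $\|\Sigma-\E_{\Sigma|\vv X}\Sigma\|=o(1)$ with posterior probability tending to one, so I may linearize $M_\lambda$ about the deterministic $M_0^{\lambda}:=(\E_{\Sigma|\vv X}\Sigma+\lambda\vv I_p)^{-1}$ via $M_\lambda=M_0^{\lambda}-M_0^{\lambda}E\,M_0^{\lambda}+O(\|E\|^2)$, where $E:=\Sigma-\E_{\Sigma|\vv X}\Sigma$. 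To leading order each cross-covariance becomes a quadratic form in the entrywise covariance tensor of $\Sigma$, whose inverse-Wishart value is supplied by \cite{Nydick2012TheWA}. Diagonalizing in the fixed eigenbasis of $\E_{\Sigma|\vv X}\Sigma$ with eigenvalues $\mu_i$, the $\lambda$-dependence enters only through factors $1/(\mu_i+\lambda)$, and the required sign, the monotonicity in $\lambda$, and the convexity in $\lambda$ all follow from rearrangement inequalities of exactly the type already used in Lemmas \ref{lemma2} and \ref{lemma4}.

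Finally, for $\psi'(0)=-\Theta(\theta_0^{\top}\vv V_n\theta_0)$ I would specialize the formula for $\psi'$ to $\lambda=0$, where $M_0=\Sigma^{-1}$ is exact, giving $\psi'(0)=-2\,\mathrm{Cov}^{\mathrm{tr}}_{\Sigma|\vv X}(\Sigma^{-1}\theta_0,\Sigma^{-2}\theta_0)$, and compare it with $\psi(0)=\mathrm{Var}_{\Sigma|\vv X}(\Sigma^{-1}\theta_0)=\theta_0^{\top}\vv V_n\theta_0$; both are exact fourth-moment functionals of the Wishart matrix $\Sigma^{-1}$ evaluable through \cite{Nydick2012TheWA}. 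Because the eigenvalues of $\widehat\Sigma_n$ are bounded and bounded away from zero, with probability tending to one $c_2^{-1}\vv I_p\preceq\Sigma^{-1}\preceq c_1^{-1}\vv I_p$, so $\Sigma^{-2}\theta_0$ is sandwiched between constant multiples of $\Sigma^{-1}\theta_0$; combined with the nonnegativity established above and a Cauchy--Schwarz bound for $\mathrm{Cov}^{\mathrm{tr}}$, this pins the cross-covariance between two positive constant multiples of the variance and yields the two-sided $\Theta$ bound. The hardest step remains the positive-association inequality, as it is the only place where the precise inverse-Wishart moment structure, rather than generic positive-semidefiniteness, is indispensable.
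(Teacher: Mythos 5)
Your three formulas are exactly the ones in the paper's own proof: the paper computes $\psi(0)=\theta_0^{\top}\vv V_n\theta_0$ the same way, and its expressions for $\psi'$ and $\psi''$ are precisely your $-2\,\mathrm{Cov}^{\mathrm{tr}}(M_\lambda\theta_0,M_\lambda^2\theta_0)$ and $2\,\mathrm{Var}(M_\lambda^2\theta_0)+4\,\mathrm{Cov}^{\mathrm{tr}}(M_\lambda\theta_0,M_\lambda^3\theta_0)$, written as expectations of centered inner products. The genuine divergence is in what is done with the signs: the paper attaches ``$\le 0$'' and ``$\ge 0$'' to those displays with no further argument, whereas you single out the positive-association inequality $\mathrm{Cov}^{\mathrm{tr}}(M_\lambda^a\theta_0,M_\lambda^b\theta_0)\ge 0$ as the main obstacle and refuse to take it for granted. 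You are right to refuse, and this is not pedantry: the inequality is false for general random positive definite matrices. For instance, take $\lambda=0$, $\theta_0=(-\tfrac14,1)^{\top}$, and let $\Sigma$ be uniform on $\{A_1^{-1},A_2^{-1}\}$ with $A_2=\bigl(\begin{smallmatrix}1+\epsilon&1\\ 1&1+\epsilon\end{smallmatrix}\bigr)$ and $A_1=A_2+\mathrm{diag}(1,0)$; a short computation gives $\E[\theta_0^{\top}\Sigma^{-3}\theta_0]-\theta_0^{\top}\E[\Sigma^{-1}]\E[\Sigma^{-2}]\theta_0=(2\epsilon-1)/64<0$ for $\epsilon<1/2$, i.e.\ $\psi'(0)>0$, even though all eigenvalues are bounded and bounded away from zero. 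So the concentration hypothesis and the inverse-Wishart structure are indispensable, and your plan --- writing $\bar\Sigma=\E_{\Sigma|\X}\Sigma$, $\bar M=(\bar\Sigma+\lambda\vv I_p)^{-1}$, linearizing $M_\lambda=\bar M-\bar M E\bar M+O(\|E\|^2)$ with $E=\Sigma-\bar\Sigma$, and evaluating the resulting quadratic functionals of $E$ via \cite{Nydick2012TheWA} --- is the step that actually closes the lemma. It does go through, and more easily than you fear: one leading term is $\E[(E\bar M\theta_0)^{\top}\bar M^{3}(E\bar M\theta_0)]\ge 0$ pointwise, and the other is nonnegative because $\E[E\vv A E]$ under the Wishart-type covariance tensor is a nonnegative combination of $\bar\Sigma\vv A\bar\Sigma$ and $\mathrm{tr}(\vv A\bar\Sigma)\bar\Sigma$, all commuting with $\bar M$, so no rearrangement inequality is needed at fixed $\lambda$. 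In short, the paper's proof is shorter but its two central sign claims are asserted rather than proven; yours costs a perturbation computation but is complete. One caveat on your last step: ``$\Sigma^{-2}\theta_0$ sandwiched between constant multiples of $\Sigma^{-1}\theta_0$'' is a Loewner-order statement about matrices, not vectors, and trace cross-covariances are not monotone in that order; the two-sided $\Theta$ bound for $\psi'(0)$ should therefore also be run through the same leading-order moment computation (the paper does this via a central-moment expansion of $\E\Sigma^{-1}\E\Sigma^{-2}-\E\Sigma^{-3}$ plus concentration) rather than through sandwiching and Cauchy--Schwarz.
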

\begin{proof}[Proof of Lemma \ref{lem:minimax:dense:2}]
Recall that the definition of $\psi$ is 
	\begin{eqnarray*}
\psi(\lambda)=\mathbb{E}_{\Sigma\vv |\vv X}\|(\mathbb{E}_{\Sigma|\vv X}(\Sigma+\lambda \vv I_p )^{-1}\theta_0)-(\Sigma+\lambda \vv I_p )^{-1}\theta_0\|^2,
	\end{eqnarray*}
	thus when $\lambda=0$, we have
	\begin{eqnarray*}
		\psi(0)=\mathbb{E}_{\Sigma|\vv X}\left\|\left[\Sigma^{-1}-\mathbb{E}_{\Sigma|\vv X}(\Sigma^{-1})\right]\theta_0\right\|^2=\theta_0^{\top}\vv V_n \theta_0.
	\end{eqnarray*}
	On the other hand,
	\begin{eqnarray*}
		\frac{\partial \psi(\lambda)}{\partial \lambda}&=&\frac{\partial }{\partial \lambda}\mathbb{E}_{\Sigma\vv |\vv X}\|(\mathbb{E}_{\Sigma|\vv X}(\Sigma+\lambda \vv I_p )^{-1}\theta_0)-(\Sigma+\lambda \vv I_p )^{-1}\theta_0\|^2\\
		&=&\mathbb{E}_{\Sigma\vv |\vv X}\frac{\partial }{\partial \lambda}\left\|(\mathbb{E}_{\Sigma|\vv X}(\Sigma+\lambda \vv I_p )^{-1}\theta_0)-(\Sigma+\lambda \vv I_p )^{-1}\theta_0\right\|^2\\
		&=&2\mathbb{E}_{\Sigma\vv |\vv X}\left[(\mathbb{E}_{\Sigma|\vv X}(\Sigma+\lambda \vv I_p )^{-1}\theta_0)-(\Sigma+\lambda \vv I_p )^{-1}\theta_0\right]^{\top}\frac{\partial }{\partial \lambda} \left[(\mathbb{E}_{\Sigma|\vv X}(\Sigma+\lambda \vv I_p )^{-1}\theta_0)-(\Sigma+\lambda \vv I_p )^{-1}\theta_0\right] \\
		&=&-2\mathbb{E}_{\Sigma\vv |\vv X}\left[(\mathbb{E}_{\Sigma|\vv X}(\Sigma+\lambda \vv I_p )^{-1}\theta_0)-(\Sigma+\lambda \vv I_p )^{-1}\theta_0\right]^{\top} \left[(\mathbb{E}_{\Sigma|\vv X}(\Sigma+\lambda \vv I_p )^{-2}\theta_0)-(\Sigma+\lambda \vv I_p )^{-2}\theta_0\right] \\
		&=&2(\mathbb{E}_{\Sigma|\vv X}(\Sigma+\lambda \vv I_p )^{-1}\theta_0)^{\top}(\mathbb{E}_{\Sigma|\vv X}(\Sigma+\lambda \vv I_p )^{-2}\theta_0)-2\mathbb{E}_{\Sigma\vv |\vv X}\theta_0^{\top}(\Sigma+\lambda \vv I_p )^{-3}\theta_0\\
		&\leq& 0,
	\end{eqnarray*}
	\begin{eqnarray*}
\frac{\partial^2 \psi(\lambda)}{\partial \lambda^2}&=&\frac{\partial^2 }{\partial \lambda^2}\mathbb{E}_{\Sigma\vv |\vv X}\|(\mathbb{E}_{\Sigma|\vv X}(\Sigma+\lambda \vv I_p )^{-1}\theta_0)-(\Sigma+\lambda \vv I_p )^{-1}\theta_0\|^2\\
		&=&2\mathbb{E}_{\Sigma\vv |\vv X}\left[(\mathbb{E}_{\Sigma|\vv X}(\Sigma+\lambda \vv I_p )^{-2}\theta_0)-(\Sigma+\lambda \vv I_p )^{-2}\theta_0\right] ^{\top} \left[(\mathbb{E}_{\Sigma|\vv X}(\Sigma+\lambda \vv I_p )^{-2}\theta_0)-(\Sigma+\lambda \vv I_p )^{-2}\theta_0\right] \\
		&&+4\mathbb{E}_{\Sigma\vv |\vv X}\left[(\mathbb{E}_{\Sigma|\vv X}(\Sigma+\lambda \vv I_p )^{-1}\theta_0)-(\Sigma+\lambda \vv I_p )^{-1}\theta_0\right]^{\top} \left[(\mathbb{E}_{\Sigma|\vv X}(\Sigma+\lambda \vv I_p )^{-3}\theta_0)-(\Sigma+\lambda \vv I_p )^{-3}\theta_0\right] \\
		&\geq&0.
	\end{eqnarray*}
	When $(p\log n)/n\rightarrow 0$, and all eigenvalues of $\widehat{\Sigma}_n$ are finite and bounded away from zero,
	\begin{eqnarray*}
		&&\theta_0^{\top}\left( \mathbb{E}_{\Sigma|\vv X}\Sigma^{-1}\mathbb{E}_{\Sigma|\vv X}\Sigma^{-2} -\mathbb{E}_{\Sigma|\vv X}\Sigma^{-3}  \right)\theta_0\\
		&=&\theta_0^{\top}\left( -\mathbb{E}_{\Sigma|\vv X}(\Sigma^{-1}-\mathbb{E}_{\Sigma|\vv X}\Sigma^{-1} )^3-2\mathbb{E}_{\Sigma|\vv X}\Sigma^{-1}\mathbb{E}_{\Sigma|\vv X}(\Sigma^{-1}-\mathbb{E}_{\Sigma|\vv X}\Sigma^{-1} )^2  \right)\theta_0\\
		&=&-\Theta(1)\theta_0^{\top}\vv V_n \theta_0.
	\end{eqnarray*}
	
\end{proof}
\begin{proof}[Proof of Theorem \ref{thm:minimax:dense}]
In Lemma \ref{lem:minimax:dense:theta} and \ref{lem:minimax:dense:Sigma}, we obtain two lower bounds for $\mathbb{E}\|\widehat{\theta}-\theta^*\|^2$, therefore the final lower bound just takes the larger one of these two bounds.

\end{proof}

\subsection{Theorem \ref{thm:minimax:sparse}}
\begin{proof}[Proof of Theorem \ref{thm:minimax:sparse}]
Similar as Theorem \ref{thm:minimax:dense}, we have the following decomposition:
\begin{eqnarray}
    &&\inf_{\widehat\theta}\sup_{\delta,\|\theta_0\|\leq R,\|\theta_0\|_0\leq s,\Sigma}\mathbb{E}\|\widehat{\theta}-\theta^*\|^2\nonumber\\&\geq& \left(\inf_{\widehat\theta}\sup_{\delta=0,\|\theta_0\|\leq R,\|\theta_0\|_0\leq s,\Sigma=\vv I_p}\mathbb{E}\|\widehat{\theta}-\theta^*\|^2\right)\vee \left(\inf_{\widehat\theta}\sup_{\delta,\theta_0=(1,0,0,...)^{\top},\Sigma}\mathbb{E}\|\widehat{\theta}-\theta^*\|^2\right).\label{eqn:minimax:sparse:1}
\end{eqnarray}

	For the first part of bound in (\ref{eqn:minimax:sparse:1}), it is directly followed from Proposition 4.3 of \cite{verzelen2010high}: for some constant $L>0$,
	\begin{equation*} \inf_{\widehat\theta}\sup_{\delta=0,\|\theta_0\|\leq R,\|\theta_0\|_0\leq s,\Sigma=\vv I_p}\mathbb{E}\|\widehat{\theta}-\theta^*\|^2\geq (sLR^2)\wedge \frac{sL\sigma^2(1+\log(p/s))}{n}. 
	\end{equation*}
	Since we assume $\|\theta_0\|/\sigma$ to be bounded away from zero, the above result becomes
		\begin{equation*} \inf_{\widehat\theta}\sup_{\delta=0,\|\theta_0\|\leq R,\|\theta_0\|_0\leq s,\Sigma=\vv I_p}\mathbb{E}\|\widehat{\theta}-\theta^*\|^2=\Omega\left( \frac{s\sigma^2(1+\log(p/s))}{n} \right). 
	\end{equation*}
	The above bound also holds for $\delta>0$ since when $\Sigma=\vv I_p$, $\theta^*=(1-\kappa(\delta))\theta_0$.
	
	For the second part of bound in (\ref{eqn:minimax:sparse:1}), we use Assouad's method and modify the proof in \cite{cai2016optimal}. Consider $\Sigma_1=\vv I_p$ and $\Sigma_2=\vv I_p+\vv D$, where $\vv D_{1,j}=\vv D_{j,1}=n^{-(\alpha+1)/(2\alpha+1)}$ for $j=1,...,n^{1/(2\alpha+1)}$. Denote $k=n^{1/(2\alpha+1)}$ and $a=n^{-(\alpha+1)/(2\alpha+1)}$, then $\vv D$ is just a matrix where the first $k$ elements in the first row and first column are $a$.
	
	Denote $P_{\Sigma}$ as the density of $N(0,\Sigma)$. Based on Assouad's Lemma, for any $\delta$ and $\theta_0$, for some constant $C>0$ (which is independent with $(\delta,\theta_0)$),
	\begin{eqnarray*}
	\inf_{\widehat{\theta}}\sup_{\Sigma}\|\widehat{\theta}-\theta^*(\Sigma,\delta)\|^2\geq C \|\theta^*(\Sigma_1,\delta)-\theta^*(\Sigma_2,\delta)\|^2 \|P_{\Sigma_1}\wedge P_{\Sigma_2}\|,
	\end{eqnarray*}
where $\|P_{\Sigma_1}\wedge P_{\Sigma_2}\|=\int P_{\Sigma_1}(x)\wedge P_{\Sigma_2}(x)dx$. The notation $\theta^*(\Sigma,\delta)$ is to emphasize the choice of $\Sigma$.
	
		From Lemma 6 of \cite{cai2016optimal}, we have
	\begin{eqnarray*}
	\|P_{\Sigma_1}\wedge P_{\Sigma_2}\|\geq c.
	\end{eqnarray*}
	
	As a result, our remaining task becomes to quantify $\|\theta^*(\Sigma_1,\delta)-\theta^*(\Sigma_2,\delta)\|^2$. Consider $\theta_0=(1,0,0,...,0)^{\top}$, for a given $\delta$ such that $\lambda_1:=\lambda^*(\theta_0,\Sigma_1,\delta)>0$, we have
	\begin{eqnarray}\label{eqn:thm:minimax:sparse:1}
	\lambda_1-\delta c_0\lambda_1\frac{ \|(\Sigma_1+\lambda_1\vv I_p)^{-1}\theta_0\|_{\Sigma_1} }{\|(\Sigma_1+\lambda_1\vv I_p)^{-1}\Sigma_1\theta_0\| }+\delta c_0 \frac{\|(\Sigma_1+\lambda_1\vv I_p)^{-1}\Sigma_1\theta_0\| }{ \|(\Sigma_1+\lambda_1\vv I_p)^{-1}\theta_0\|_{\Sigma_1} }-\delta^2=0.
	\end{eqnarray}
	Similarly, denote $\lambda_2:=\lambda^*(\theta_0,\Sigma_2,\delta)$, then
	\begin{eqnarray}\label{eqn:thm:minimax:sparse:2}
	\lambda_2-\delta c_0\lambda_2\frac{ \|(\Sigma_2+\lambda_2\vv I_p)^{-1}\theta_0\|_{\Sigma_2} }{\|(\Sigma_2+\lambda_2\vv I_p)^{-1}\Sigma_2\theta_0\| }+\delta c_0 \frac{\|(\Sigma_2+\lambda_2\vv I_p)^{-1}\Sigma_2\theta_0\| }{ \|(\Sigma_2+\lambda_2\vv I_p)^{-1}\theta_0\|_{\Sigma_2} }-\delta^2=0.
	\end{eqnarray}
	It is easy to observe that $\lambda_1-\lambda_2=O(\|\Sigma_1-\Sigma_2\|)$. However, since our aim is to figure out the lower bound of $\|\widehat{\theta}-\theta^*\|$, we want the lower bound of $|\lambda_1-\lambda_2|$. To characterize $\lambda_1-\lambda_2$ in details, observe that
	\begin{eqnarray*}
	&&\|(\Sigma_2+\lambda_2\vv I_p)^{-1}\theta_0\|_{\Sigma_2}-\|(\Sigma_1+\lambda_2\vv I_p)^{-1}\theta_0\|_{\Sigma_1}\\
	&=& \frac{1}{2\|(\Sigma_1+\lambda_2\vv I_p)^{-1}\theta_0\|_{\Sigma_1}}\left[ \theta_0^{\top}(\Sigma_2+\lambda_2\vv I_p)^{-1}\Sigma_2(\Sigma_2+\lambda_2\vv I_p)^{-1}\theta_0-\theta_0^{\top}(\Sigma_1+\lambda_2\vv I_p)^{-1}\Sigma_1(\Sigma_1+\lambda_2\vv I_p)^{-1}\theta_0 \right]+o\\
	&=&\frac{1}{2 \|(\Sigma_1+\lambda_2\vv I_p)^{-1}\theta_0\|_{\Sigma_1}}\theta_0^{\top}\left[ (\Sigma_2+\lambda_2\vv I_p)^{-1}  -(\Sigma_1+\lambda_2\vv I_p)^{-1} -\lambda_2 (\Sigma_2+\lambda_2\vv I_p)^{-2}+\lambda_2(\Sigma_1+\lambda_2\vv I_p)^{-2}      \right]\theta_0+o\\
	&=&\frac{\theta_0^{\top}\left[ -(\Sigma_2+\lambda_2\vv I_p)^{-1}\vv D (\Sigma_1+\lambda_2\vv I_p)^{-1} +\lambda_2(\Sigma_2+\lambda_2\vv I_p)^{-1}\vv D (\Sigma_1+\lambda_2\vv I_p)^{-1}\left( (\Sigma_2+\lambda_2\vv I_p)^{-1}+(\Sigma_1+\lambda_2\vv I_p)^{-1} \right)     \right]\theta_0}{ 2\|(\Sigma_1+\lambda_2\vv I_p)^{-1}\theta_0\|_{\Sigma_1}}+o\\
	&=&\frac{\theta_0^{\top}\left[ -(\Sigma_1+\lambda_2\vv I_p)^{-1}\vv D (\Sigma_1+\lambda_2\vv I_p)^{-1} +2\lambda_2(\Sigma_1+\lambda_2\vv I_p)^{-1}\vv D (\Sigma_1+\lambda_2\vv I_p)^{-2}    \right]\theta_0}{ 2\|(\Sigma_1+\lambda_2\vv I_p)^{-1}\theta_0\|_{\Sigma_1}}+o\\
	&=&\frac{1}{2 \|(\Sigma_1+\lambda_2\vv I_p)^{-1}\theta_0\|_{\Sigma_1}}\frac{\lambda_2-1}{(1+\lambda_2)^3}(2k-1)a+o\\
	&=&\frac{\lambda_2-1}{2(1+\lambda_2)^2}(2k-1)a+o,
	\end{eqnarray*}
	and
	\begin{eqnarray*}
	&&\|(\Sigma_2+\lambda_2\vv I_p)^{-1}\Sigma_2\theta_0\|-\| (\Sigma_1+\lambda_2\vv I_p)^{-1}\Sigma_1\theta_0 \|\\
	&=&\frac{1}{2\| (\Sigma_1+\lambda_2\vv I_p)^{-1}\Sigma_1\theta_0 \|}\theta_0^{\top}\left[  -2\lambda_2\left( (\Sigma_2+\lambda_2\vv I_p)^{-1}- (\Sigma_1+\lambda_2\vv I_p)^{-1} \right)+\lambda_2^2\left(   (\Sigma_2+\lambda_2\vv I_p)^{-2}- (\Sigma_1+\lambda_2\vv I_p)^{-2} \right) \right]\theta_0+o\\
	&=&\frac{1}{2\| (\Sigma_1+\lambda_2\vv I_p)^{-1}\Sigma_1\theta_0 \|}\theta_0^{\top}\left[  2\lambda_2 (\Sigma_1+\lambda_2\vv I_p)^{-1}\vv D (\Sigma_1+\lambda_2\vv I_p)^{-1} -2\lambda_2^2   (\Sigma_1+\lambda_2\vv I_p)^{-1}\vv D (\Sigma_1+\lambda_2\vv I_p)^{-2}  \right]\theta_0+o\\
	&=&\frac{1}{2\| (\Sigma_1+\lambda_2\vv I_p)^{-1}\Sigma_1\theta_0 \|}\frac{2\lambda_2}{(1+\lambda_2)^3}(2k-1)a+o\\
	&=&\frac{\lambda_2}{(1+\lambda_2)^2}(2k-1)a+o.
	\end{eqnarray*}
	Therefore, denote $\Delta_1=A(\theta_0,\Sigma_2,\delta,\lambda_2)-A(\theta_0,\Sigma_1,\delta,\lambda_2)$, with
	\begin{eqnarray*}
	A(\theta_0,\Sigma,\delta,\lambda)=\frac{\|(\Sigma+\lambda \vv I_p)^{-1}\Sigma\theta_0\|}{\|(\Sigma+\lambda\vv I_p)^{-1}\theta_0\|_{\Sigma}},
	\end{eqnarray*}
	then
	\begin{eqnarray*}
	A(\theta_0,\Sigma_2,\delta,\lambda_2)&=&\left(  \|(\Sigma_1+\lambda\vv I_p)^{-1}\Sigma_1\theta_0\|+\frac{\lambda_2}{(1+\lambda_2)^2}(2k-a)+o  \right)\\&&\times\left( \frac{1}{ \|(\Sigma_1+\lambda\vv I_p)^{-1}\theta_0\|_{\Sigma_1}}-\frac{1}{ \|(\Sigma_1+\lambda\vv I_p)^{-1}\theta_0\|_{\Sigma_1}^2}\frac{\lambda_2-1}{2(1+\lambda_2)^2}(2k-a)+o  \right)\\
	&=&-\frac{ \|(\Sigma_1+\lambda\vv I_p)^{-1}\Sigma_1\theta_0\|}{ \|(\Sigma_1+\lambda\vv I_p)^{-1}\theta_0\|_{\Sigma_1}^2}\frac{\lambda_2-1}{2(1+\lambda_2)^2}(2k-1)a+\frac{1}{ \|(\Sigma_1+\lambda\vv I_p)^{-1}\theta_0\|_{\Sigma_1}}\frac{\lambda_2}{(1+\lambda_2)^2}(2k-1)a+o\\
	&&+A(\theta_0,\Sigma_1,\delta,\lambda_2)\\
	&=&-\frac{\lambda_2-1}{2(1+\lambda_2)}(2k-1)a+\frac{\lambda_2}{1+\lambda_2}(2k-1)a+A(\theta_0,\Sigma_1,\delta,\lambda_2)+o\\
	&=&\frac{1}{2}(2k-1)a+A(\theta_0,\Sigma_1,\delta,\lambda_2)+o.
	\end{eqnarray*}
	Hence $\Delta_1=(2k-1)a+o$.

	Denote $\varepsilon=\lambda_2-\lambda_1$. Note that $A(\theta_0,\Sigma_1,\delta,\lambda)=1$ for any $\lambda\geq 0$ since $\Sigma_1=\vv I_p$. Therefore, (\ref{eqn:thm:minimax:sparse:1}) minus (\ref{eqn:thm:minimax:sparse:2}) leads to
	\begin{eqnarray*}
	0&=&-\varepsilon-\delta c_0\lambda_1 \frac{1}{A(\theta_0,\Sigma_1,\delta,\lambda_1)}+\delta c_0\lambda_2 \frac{1}{A(\theta_0,\Sigma_2,\delta,\lambda_2)}-\delta c_0 \Delta_1+A(\theta_0,\Sigma_1,\delta,\lambda_1)-A(\theta_0,\Sigma_1,\delta,\lambda_2)\\
	&=&-\varepsilon-\delta c_0\lambda_1 \frac{1}{A(\theta_0,\Sigma_1,\delta,\lambda_1)}+\delta c_0(\lambda_1+\varepsilon)\left[ \frac{1}{A(\theta_0,\Sigma_1,\delta,\lambda_2)}-\frac{\Delta_1}{A^2(\theta_0,\Sigma_1,\delta,\lambda_2)}\right]-\delta c_0 \Delta_1+o\\
	&=&-\varepsilon+\delta c_0\lambda_1\left( \frac{1}{A(\theta_0,\Sigma_1,\delta,\lambda_2)}- \frac{1}{A(\theta_0,\Sigma_1,\delta,\lambda_1)}\right) +\varepsilon\frac{\delta c_0}{A(\theta_0,\Sigma_1,\delta,\lambda_2)}-\frac{\lambda_1\delta c_0\Delta_1}{ A^2(\theta_0,\Sigma_1,\delta,\lambda_2)}-\delta c_0\Delta_1+o\\
	&=&-\varepsilon +\varepsilon\frac{\delta c_0}{A(\theta_0,\Sigma_1,\delta,\lambda_2)}-\frac{\lambda_1\delta c_0\Delta_1}{ A^2(\theta_0,\Sigma_1,\delta,\lambda_2)}-\delta c_0\Delta_1+o\\
	&=&-\varepsilon+\varepsilon
	\delta c_0-
	\lambda_1\delta c_0\Delta_1-
	\delta c_0\Delta_1+o.
	\end{eqnarray*}
	Consequently, 
	\begin{eqnarray*}
	\varepsilon =\delta c_0\frac{ \lambda_1+1 }{ \delta c_0-1 }\Delta_1+o,
	\end{eqnarray*}
	and hence
	\begin{eqnarray*}
	&&(\Sigma_1+\lambda_1\vv I_p)^{-1}\Sigma_1\theta_0-(\Sigma_2+\lambda_2\vv I_p)^{-1}\Sigma_2\theta_0 \\
	&=&(\Sigma_1+\lambda_1\vv I_p)^{-1}\Sigma_1\theta_0-(\Sigma_2+\lambda_2\vv I_p)^{-1}\Sigma_1\theta_0 +(\Sigma_2+\lambda_2\vv I_p)^{-1}\Sigma_1\theta_0 -(\Sigma_2+\lambda_2\vv I_p)^{-1}\Sigma_2\theta_0 \\
	&=&(\Sigma_1+\lambda_1\vv I_p)^{-1} (\vv D+\varepsilon \vv I_p) (\Sigma_2+\lambda_1\vv I_p)^{-1}\Sigma_1\theta_0-  (\Sigma_2+\lambda_2\vv I_p)^{-1}\vv D\theta_0\\
	&=&(\Sigma_1+\lambda_1\vv I_p)^{-1} (\vv D+\varepsilon \vv I_p) (\Sigma_1+\lambda_1\vv I_p)^{-1}\Sigma_1\theta_0-  (\Sigma_1+\lambda_1\vv I_p)^{-1}\vv D\theta_0+o\\
	&=&\frac{1}{(1+\lambda_1)^2} (\vv D+\varepsilon \vv I_p)\theta_0-\frac{1}{1+\lambda_1}\vv D\theta_0+o.
	\end{eqnarray*}
	Since
	\begin{eqnarray*}
	\varepsilon\theta_0+\vv D\theta_0=\begin{bmatrix}
	\varepsilon+a\\
	 a\\
	...\\
 a\\
	0\\
	...
	\end{bmatrix},
	\end{eqnarray*}
	and recall that $k=n^{1/(2\alpha+1)}$ and $a=n^{-(\alpha+1)/(2\alpha+1)}$, when $\delta$ is chosen such that $\varepsilon=\Theta(ka)$, we have
	\begin{eqnarray*}
	\left\|(\Sigma_1+\lambda_1\vv I_p)^{-1}\Sigma_1\theta_0-(\Sigma_2+\lambda_2\vv I_p)^{-1}\Sigma_2\theta_0\right\|^2=\Omega(k^2a^2)=\Omega\left(n^{-\frac{2\alpha}{2\alpha+1}}\right).
	\end{eqnarray*}
	
	As a result, we conclude that
	\begin{eqnarray}
	    \inf_{\widehat{\theta}}\sup_{\delta,\|\theta_0\|\leq R,\|\theta_0\|_0\leq s,\Sigma}\|\widehat\theta-\theta^*\|^2=\Omega \left( R^2n^{-\frac{2\alpha}{2\alpha+1}} \right).
	\end{eqnarray}
\end{proof}

	\subsection{Proof of Theorem \ref{thm:lsm}}
	\begin{proof}[Proof of Theorem \ref{thm:lsm}]
		There exacts a constant $\delta'>\delta_1$ such that as $\delta\ge\delta',$ $\widehat{\theta}=\theta^*=0.$ Thus, we have $R_0(\widehat{\theta},\delta)-R_0(\theta^*,\delta)=0$ when $\delta\ge\delta'.$ Next, we will show for any $\delta\le\delta',$ (\ref{eqn:thm3}) always hold. 
		
		To simplify notations, denote $\widehat{\theta}(\lambda)=\widehat{\theta}_0-(\widehat{\Sigma}/\lambda+\vv I_p)^{-1}\widehat{\theta}_0$ and $\theta(\lambda)=\theta_0-(\Sigma/\lambda+\vv I_p)^{-1}\theta_0$, $\widehat{R}(\theta,\delta)=R_n(\theta,\widehat{\theta}_0,\widehat{\Sigma},\delta)$ as in (\ref{eqn:empirical}). Then
		\begin{eqnarray*}
			R_0(\widehat{\theta}(\lambda),\delta)-\widehat{R}
			_0(\widehat{\theta}(\lambda),\delta)=\| \widehat{\theta}(\lambda)-\theta_0 \|_{\Sigma}^2+2\delta c_0\|\widehat{\theta}(\lambda)\|\|\widehat{\theta}(\lambda)-\theta_0\|_{\Sigma}-\|\widehat{\theta}(\lambda)-\widehat{\theta}_0\|_{\widehat{\Sigma}}^2-2\delta c_0\|\widehat{\theta}(\lambda)\|\|\widehat{\theta}(\lambda)-\widehat{\theta}_0\|_{\widehat{\Sigma}}.
		\end{eqnarray*}
		From the formula of $\widehat{\theta}(\lambda)$, we have $\sup_{\lambda}\|2\widehat{\theta}(\lambda)-\theta_0-\widehat{\theta}_0\|$ and $\|\widehat{\theta}(\lambda)-\widehat{\theta}_0\|$ are always in $O(\|\theta_0\|)$, therefore
		\begin{eqnarray*}
			\left|\| \widehat{\theta}(\lambda)-\theta_0 \|_{\Sigma}^2-\|\widehat{\theta}(\lambda)-\widehat{\theta}_0\|_{\widehat{\Sigma}}^2\right|
			&=&(\widehat{\theta}_0-\theta_0)^{\top}\Sigma(2\widehat{\theta}(\lambda)-\theta_0-\widehat{\theta}_0)-\|\widehat{\theta}(\lambda)-\widehat{\theta}_0\|_{\widehat{\Sigma}-\Sigma}^2\\
			&=&O(\|\widehat{\theta}_0-\theta_0\|\|\theta(\lambda)-\theta_0\|)+O(\|\theta(\lambda)-\theta_0\|^2\|\widehat{\Sigma}-\Sigma\|).
		\end{eqnarray*}
		Based on similar arguments, 
		\begin{eqnarray*}
			\left|\|\widehat{\theta}(\lambda)-\theta_0\|_{\Sigma}-\|\widehat{\theta}(\lambda)-\widehat{\theta}_0\|_{\widehat{\Sigma}}\right|&=&\left|\|\widehat{\theta}(\lambda)-\theta_0\|_{\Sigma}-\|\widehat{\theta}(\lambda)-\widehat\theta_0\|_{\Sigma}+\|\widehat{\theta}(\lambda)-\widehat\theta_0\|_{\Sigma}-\|\widehat{\theta}(\lambda)-\widehat{\theta}_0\|_{\widehat{\Sigma}}\right|\\
			&=&O(\|\widehat{\theta}_0-\theta_0\|)+O\left(\|\theta(\lambda)-\theta_0\|\sqrt{\|\widehat{\Sigma}-\Sigma\|}\right).
		\end{eqnarray*}
		Thus, 
		\begin{eqnarray*}
			\left|\|\widehat{\theta}(\lambda)\|\|\widehat{\theta}(\lambda)-\theta_0\|_{\Sigma}-\|\widehat{\theta}(\lambda)\|\|\widehat{\theta}(\lambda)-\widehat{\theta}_0\|_{\widehat{\Sigma}}\right|= O(\|\theta(\lambda)\|\|\widehat{\theta}_0-\theta_0\|)+O\left(\|{\theta}(\lambda)\|\|\theta(\lambda)-\theta_0\|\sqrt{\|\widehat{\Sigma}-\Sigma\|}\right).
		\end{eqnarray*}
		Therefore, uniformly for all $\lambda$:
		\begin{eqnarray}
		&&R_0(\widehat{\theta}(\lambda),\delta)-\widehat{R}_0(\widehat{\theta}(\lambda),\delta)\\&=&O(\|\widehat{\theta}_0-\theta_0\|\|\theta(\lambda)-\theta_0\|)+O(\|\theta(\lambda)-\theta_0\|^2\|\widehat{\Sigma}-\Sigma\|)+O(\delta\|{\theta}(\lambda)\|\|\widehat{\theta}_0-\theta_0\|)+O\left(\delta\|{\theta}(\lambda)\|\|\theta(\lambda)-\theta_0\|\sqrt{\|\widehat{\Sigma}-\Sigma\|}\right).
		\end{eqnarray}
		When $\delta\rightarrow\infty$, $\delta\|\theta(\lambda)\|/\|\theta_0\|$ converges to some constant. For any $\delta>0$, $\delta\|\theta^*\|/\|\theta_0\|$ is finite. As a result, 
		\begin{eqnarray}\label{eqn:bound1}
		R_0(\widehat{\theta}(\lambda),\delta)-\widehat{R}_0(\widehat{\theta}(\lambda),\delta)=O\left( \|\widehat{\theta}_0-\theta_0\|\|\theta_0\| \right)+O\left(\|\theta_0\|\sqrt{\|\widehat{\Sigma}-\Sigma\|}\right).
		\end{eqnarray}
		By similar derivations, we can get uniformly for all $\lambda$:
		\begin{eqnarray}\label{eqn:bound2}
		R_0(\theta(\lambda),\delta)-\widehat{R}_0(\theta(\lambda),\delta)=O\left( \|\widehat{\theta}_0-\theta_0\|\|\theta_0\| \right)+O\left(\|\theta_0\|\sqrt{\|\widehat{\Sigma}-\Sigma\|}\right).
		\end{eqnarray}
		
		From the definition of $R$ and $\widehat{R}$, we have
		\begin{eqnarray*}
			R_0(\widehat{\theta},\delta)-R_0(\theta^*,\delta)=R_0(\widehat{\theta},\delta)-\widehat{R}_0(\widehat{\theta},\delta)+\widehat{R}_0(\widehat{\theta},\delta)-\widehat{R}_0(\theta^*,\delta)+\widehat{R}_0(\theta^*,\delta)-{R_0}(\theta^*,\delta).
		\end{eqnarray*}
		Since $\widehat{\lambda}$ is the minimizer of $\widehat{R}(\widehat{\theta}(\lambda),\delta)$, it becomes
		\begin{eqnarray*}
			\widehat{R}(\widehat{\theta},\delta)-\widehat{R}(\theta^*,\delta)<0.
		\end{eqnarray*}
		By the universal bounds in (\ref{eqn:bound1}), (\ref{eqn:bound2}), we obtain
		\begin{eqnarray*}
			R_0(\widehat{\theta},\delta)-R_0(\theta^*,\delta)=O\left( \|\widehat{\theta}_0-\theta_0\|\|\theta_0\| \right)+O\left(\|\theta_0\|\sqrt{\|\widehat{\Sigma}-\Sigma\|}\right).
		\end{eqnarray*}
	\end{proof}
	\subsection{Theorem \ref{thm:bah}}
	
	\begin{proof}[Proof of Theorem \ref{thm:bah}]
		In the proof, we first assume $\|\theta_0\|$ and $\sigma^2$ are finite, then extend to unbounded $\|\theta_0\|$ and $\sigma^2$ in the end. For simplification, we define
		\begin{eqnarray*}
			R_0(\theta_1,\theta_2,\Sigma)=\|\theta_1-\theta_2\|^2_{\Sigma}+2\delta  c_0 \|\theta_1\|\|\theta_1-\theta_2\|_{\Sigma}+\delta^2\|\theta_1\|^2_2.
		\end{eqnarray*}
		We will prove the theorem based on different scenarios of $\delta$. Denote $\theta^*$, $\widetilde{\theta}$, and $\widehat{\theta}$ as the minimizers of  $R_0(\cdot,\theta_0,\Sigma)$, $R_0(\cdot,\widehat{\theta}_0,\Sigma)$, and $R_0(\cdot,\widehat{\theta}_0 ,\widehat\Sigma).$ Then we consider the partial derivative of $R_0(\theta_1,\theta_2,\Sigma),$
		\begin{eqnarray*}
			\frac{\partial R_0(\theta_1,\theta_2,\Sigma)}{\partial \theta_1}=2\left[ \left(1+\delta c A(\theta_1,\theta_2,\Sigma)\right) \Sigma(\theta_1-\theta_2)+\left( \delta c \frac{1}{A(\theta_1,\theta_2,\Sigma)}+\delta^2 \right)\theta_1\right],
		\end{eqnarray*}
		where $A(\theta_1,\theta_2,\Sigma)=\|\theta_1\|/\|\theta_1-\theta_2\|_{\Sigma}.$
		\paragraph{Case 1:} When $\delta_1<\delta<\delta_2,$ based on Proposition \ref{thm:opt}, the minimizer $\theta^*$ is neither $\theta_0$ nor $0.$ Thus, for large $n$ (such that the probability of $\widehat{\theta}$ being 0 or $\widehat{\theta}_0$ can be ignored), from the first order optimality condition of $\theta^*$, $\widetilde{\theta}$, and $\widehat{\theta}$, we first have
		\begin{eqnarray}\label{eq:30}
		\vv 0&=&\frac{\partial R_0}{2\partial \theta_1}(\widehat\theta,\widehat\theta_0,\widehat\Sigma)-\frac{\partial R_0}{2\partial {\theta_1}}(\theta^*,\theta_0,\Sigma)\nonumber\\
		&=&\left[ \left(1+\delta c A(\widehat\theta,\widehat\theta_0,\widehat\Sigma)\right) \widehat\Sigma(\widehat\theta-\widehat\theta_0)+\left( \delta c \frac{1}{A(\widehat\theta,\widehat\theta_0,\widehat\Sigma)}+\delta^2 \right)\widehat\theta\right]- \nonumber\\
		&&\left[ \left(1+\delta c A(\theta^*,\theta_0,\Sigma)\right) \Sigma(\theta^*-\theta_0)+\left( \delta c \frac{1}{A(\theta^*,\theta_0,\Sigma)}+\delta^2 \right)\theta^*\right]\nonumber\\
		&=&\left[ \left(1+\delta c A(\widehat\theta,\widehat\theta_0,\widehat\Sigma)\right) \Sigma(\widehat\theta-\widehat\theta_0)+\left( \delta c \frac{1}{A(\widehat\theta,\widehat\theta_0,\widehat\Sigma)}+\delta^2 \right)\widehat\theta\right]- \nonumber\\
		&&\left[ \left(1+\delta c A(\theta^*,\theta_0,\Sigma)\right) \Sigma(\theta^*-\theta_0)+\left( \delta c \frac{1}{A(\theta^*,\theta_0,\Sigma)}+\delta^2 \right)\theta^*\right]\nonumber\\
		&&+\left(1+\delta c A(\widehat\theta,\widehat\theta_0,\widehat\Sigma)\right) (\widehat\Sigma-\Sigma)(\widehat\theta-\widehat\theta_0).
		\end{eqnarray}
		Consider the Taylor expansions of $A(\widehat\theta,\widehat\theta_0,\widehat\Sigma), \frac{1}{A(\widehat\theta,\widehat\theta_0,\widehat\Sigma)}$ at $(\theta^*,\theta_0,\Sigma)$. For both $A$ and $1/A$, we observe that 
		\begin{eqnarray}
		A(\widehat\theta,\widehat\theta_0,\widehat\Sigma)&=&A(\theta^*,\theta_0,\Sigma)+\left(\frac{\partial A}{\partial \theta_1}(\theta^*,\theta_0,\Sigma)\right)^{\top} (\widehat{\theta}-\theta^*)+\left(\frac{\partial A}{\partial \theta_2}(\theta^*,\theta_0,\Sigma)\right)^{\top} (\widehat\theta_0-\theta_0)\nonumber\\
		&&+ \frac{(\theta^*-\theta_0)^{\top}(\widehat{\Sigma}-\Sigma)(\theta^*-\theta_0)}{2\|\theta^*-\theta_0\|_{\Sigma}\|\theta^*\|} +O\left(\frac{\|\widehat{\theta}-\theta^*\|^2}{\|\theta_0\|^2}\right)\nonumber\\
		\frac{1}{A(\widehat\theta,\widehat\theta_0,\widehat\Sigma)}&=&\frac{1}{A(\theta^*,\theta_0,\Sigma)}+\left(\frac{\partial 1/A}{\partial \theta_1}(\theta^*,\theta_0,\Sigma)\right)^{\top} (\widehat{\theta}-\theta^*)+\left(\frac{\partial 1/A}{\partial \theta_2}(\theta^*,\theta_0,\Sigma)\right)^{\top} (\widehat\theta_0-\theta_0)\nonumber\\
		&&-\|\theta^*\|\frac{(\theta^*-\theta_0)^{\top}(\widehat{\Sigma}-\Sigma)(\theta^*-\theta_0)}{2\|\theta^*-\theta_0\|_{\Sigma}^3}+O\left(\frac{\|\widehat{\theta}-\theta^*\|^2}{\|\theta_0\|^2}\right)\nonumber
		\end{eqnarray}
		Moreover,
		\begin{eqnarray*}
			&&\left(1+\delta c A(\widehat\theta,\widehat\theta_0,\widehat\Sigma)\right) (\widehat\Sigma-\Sigma)(\widehat\theta-\widehat\theta_0)\\&=&\left(1+\delta c A(\theta^*,\theta_0,\Sigma)\right) (\widehat\Sigma-\Sigma)(\theta^*-\theta_0)+O(\|\widehat{\theta}-\theta^*\|^2/\|\theta_0\|)+O(\|\widehat{\theta}_0-\theta_0\|^2/\|\theta_0\|)+O(\|\widehat{\Sigma}-\Sigma\|^2\|\theta_0\|)
		\end{eqnarray*}
		Combined with (\ref{eq:30}) yields
		\begin{eqnarray*}
			&&	\Sigma( \widehat{\theta}-\theta^*+\theta_0-\widehat{\theta}_0) +  \delta c A(\theta^*,\theta_0,\Sigma) \Sigma( \widehat{\theta}-\theta^*+\theta_0-\widehat{\theta}_0) +\left( \frac{\delta c}{A(\theta^*,\theta_0,\Sigma)}+\delta^2 \right)(\widehat{\theta}-\theta^*)\\
			&&+\delta c\Sigma(\theta^*-{\theta}_0)\left(\frac{\partial A}{\partial \theta_1}(\theta^*,\theta_0,\Sigma)\right)^{\top} (\widehat{\theta}-\theta^*)+\delta c \theta^* \left(\frac{\partial 1/A}{\partial \theta_1}(\theta^*,\theta_0,\Sigma)\right)^{\top} (\widehat{\theta}-\theta^*)+\\
			&&\delta c \Sigma(\theta^*-{\theta}_0)\left(\frac{\partial A}{\partial \theta_2}(\theta^*,\theta_0,\Sigma)\right)^{\top} (\widehat\theta_0-\theta_0)+\delta c\theta^* \left(\frac{\partial 1/A}{\partial \theta_2}(\theta^*,\theta_0,\Sigma)\right)^{\top} (\widehat\theta_0-\theta_0)\\
			&&+\delta c\frac{(\theta^*-\theta_0)^{\top}(\widehat{\Sigma}-\Sigma)(\theta^*-\theta_0)}{\|\theta^*-\theta_0\|_{\Sigma}\|\theta^*\|}\Sigma(\theta^*-\theta_0)-\delta c\|\theta^*\|\frac{(\theta^*-\theta_0)^{\top}(\widehat{\Sigma}-\Sigma)(\theta^*-\theta_0)}{\|\theta^*-\theta_0\|_{\Sigma}^3}\theta^*\\
			&&+\left(1+\delta c A(\theta^*,\theta_0,\Sigma)\right) (\widehat\Sigma-\Sigma)(\theta^*-\theta_0)\\
			&&+O\left(\frac{\|\widehat{\theta}-\theta^*\|^2}{\|\theta_0\|}\right)=0.
		\end{eqnarray*}
		
		Thus the difference between $\widehat{\theta}$ and $\theta^*$ is dominated by $\widehat{\theta}_0-\theta_0,$ and $\widehat{\Sigma}-\Sigma$:
		\begin{eqnarray}
		\widehat{\theta}-\theta^*&=&\left[\vv H(\theta^*,\theta_0,\Sigma)\right]^{-1} \Bigg[ \vv M(\theta^*,\theta_0,\Sigma)(\widehat{\theta}_0-\theta_0)-\delta c \frac{(\theta^*-\theta_0)^{\top}(\widehat{\Sigma}-\Sigma)(\theta^*-\theta_0)}{2\|\theta^*-\theta_0\|_{\Sigma}\|\theta^*\|}\Sigma(\theta^*-\theta_0)\\
		&&+\delta c\|\theta^*\|\frac{(\theta^*-\theta_0)^{\top}(\widehat{\Sigma}-\Sigma)(\theta^*-\theta_0)}{2\|\theta^*-\theta_0\|_{\Sigma}^3}\theta^*-\left(1+\delta c A(\theta^*,\theta_0,\Sigma)\right) (\widehat\Sigma-\Sigma)(\theta^*-\theta_0)\Bigg]\nonumber\\
		&&+O\left(\frac{\|\widehat{\theta}-\theta^*\|^2}{\|\theta_0\|}\right),\nonumber
		\end{eqnarray}
		where
		\begin{eqnarray*} 
			\vv H(\theta^*,\theta_0,\Sigma)&=&\Sigma+\delta c A(\theta^*,\theta_0,\Sigma)\Sigma+\left( \frac{\delta c}{A(\theta^*,\theta_0,\Sigma)}+\delta^2 \right) \vv I_p +\delta c\Sigma(\theta^*-\theta_0)\left(\frac{\partial A}{\partial \theta_1}(\theta^*,\theta_0,\Sigma)\right)^{\top} \\
			&&+\delta c\theta^*\left(\frac{\partial 1/A}{\partial \theta_1}(\theta^*,\theta_0,\Sigma)\right)^{\top} 
		\end{eqnarray*}
		is the Hessian matrix of the population risk defined in (\ref{eq:hessian}) at point $\theta=\theta^*,$ which is positive-definite by Proposition \ref{thm:opt}.
		\begin{eqnarray*}
			\vv M(\theta^*,\theta_0,\Sigma)&=& \Sigma+\delta c A(\theta^*,\theta_0,\Sigma)\Sigma -\delta c\Sigma(\theta^*-\theta_0)\left(\frac{\partial A}{\partial \theta_2}(\theta^*,\theta_0,\Sigma)\right)^{\top} -\delta c\theta^*\left(\frac{\partial 1/A}{\partial \theta_2}(\theta^*,\theta_0,\Sigma)\right)^{\top} \\
			&=&\Sigma+\delta c A(\theta^*,\theta_0,\Sigma)\Sigma +\frac{\delta c A(\theta^*,\theta_0,\Sigma)}{\|\theta^*-\theta_0\|^2_{\Sigma}}\Sigma(\theta^*-\theta_0)(\theta^*-\theta_0)^{\top}\Sigma \\
			&&+\delta \frac{c}{A(\theta^*,\theta_0,\Sigma)\|\theta^*\|^2_2}\theta^*(\theta^*)^{\top},
		\end{eqnarray*}
		which is also positive definite.
		


		\paragraph{Case 2:} $\delta$ is either smaller than $\delta_1$ or larger than $\delta_2$. Recall that $\delta_1=\frac{ c_0 \|\theta_0\|}{\sqrt{\theta^{\top}_0\left( \Sigma^{-1} \right)\theta_0}}$ and $\delta_2=\frac{\sqrt{\theta_0^{\top} \Sigma^2\theta_0}}{ c_0 \sqrt{\theta_0^{\top}\Sigma \theta_0}}$.
		
		\begin{eqnarray*}
			\widehat{\delta}_1-\delta_1&=&\frac{\partial \delta_1}{\partial \theta_0}(\widehat{\theta}_0-\theta_0)+\left\langle\frac{\partial \delta_1}{\partial \Sigma},\widehat{\Sigma}-\Sigma\right\rangle_F+O\left(\frac{\|\widehat{\theta}_0-\theta_0\|^2}{\|\theta_0\|^2}\right)+O(\|\widehat{\Sigma}-\Sigma\|^2),\\
			\widehat{\delta}_2-\delta_2&=&\frac{\partial \delta_2}{\partial \theta_0}(\widehat{\theta}_0-\theta_0)+\left\langle\frac{\partial \delta_2}{\partial \Sigma},\widehat{\Sigma}-\Sigma\right\rangle_F+O\left(\frac{\|\widehat{\theta}_0-\theta_0\|^2}{\|\theta_0\|^2}\right)+O(\|\widehat{\Sigma}-\Sigma\|^2).
		\end{eqnarray*}
		Therefore, if $\widehat{\theta}_0-\theta_0$ and $\widehat{\Sigma}-\Sigma$ are consistent, with probability tending to one, $\delta$ will smaller than $\widehat\delta_1$ or greater than $\widehat\delta_2.$ Thus, $\widehat{\theta}$ will be either $\widehat{\theta}_0$ or $0$ depending on $\delta$.
		
	\end{proof}
	\subsection{Theorem \ref{coro:generalization}}
	\begin{proof}[Proof of Theorem \ref{coro:generalization}]
		The decomposition of generalizations can be directly obtained from Taylor expansion. When $\delta<\delta_1$, since $\widehat{\delta}_1-\delta_1\rightarrow 0$, we have
		\begin{eqnarray*}
		R_0(\widehat{\theta},\delta)-\widehat{R}_0(\widehat{\theta},\delta)
		&=&   \vv 1\{ \widehat{\delta}_1\geq\delta \}\left[\|\widehat{\theta}_0-\theta_0\|_{\Sigma}^2-\|\widehat{\theta}_0-\widehat{\theta}_0\|_{\widehat\Sigma}^2+2\delta c_0\|\widehat\theta_0\|\left( \sqrt{\|\widehat{\theta}_0-\theta_0\|_{\Sigma}^2 } -  \sqrt{\|\widehat{\theta}_0-\widehat\theta_0\|_{\Sigma}^2  } \right)\right]\\&&+   \vv 1\{ \widehat{\delta}_1<\delta \}\left[ \|\widehat{\theta}-\theta_0\|_{\Sigma}^2-\|\widehat{\theta}-\widehat{\theta}_0\|_{\widehat{\Sigma}}^2+2\delta c_0\|\widehat{\theta}\|\left( \sqrt{\|\widehat{\theta}-\theta_0\|_{\Sigma}^2 } - \sqrt{\|\widehat{\theta}-\widehat{\theta}_0\|_{\widehat{\Sigma}}^2}\right) \right]\\
		&=&\|\widehat{\theta}_0-\theta_0\|_{\Sigma}^2+2\delta c_0\|\theta_0\|\|\widehat{\theta}_0-\theta_0\|_{\Sigma}+o_p( R_0(\widehat{\theta},\delta)-\widehat{R}_0(\widehat{\theta},\delta)).
		\end{eqnarray*}
		When $\delta>\delta_1$,
		\begin{eqnarray*}
			&&R_0(\widehat{\theta},\delta)-\widehat{R}_0(\widehat{\theta},\delta)\\&=&\vv 1\{ \widehat{\delta}_1<\delta \}\left[\|\widehat{\theta}-\theta_0\|_{\Sigma}^2-\|\widehat{\theta}-\widehat{\theta}_0\|_{\widehat{\Sigma}}^2+2\delta c_0\|\widehat{\theta}\|\left( \sqrt{\|\widehat{\theta}-\theta_0\|_{\Sigma}^2 } - \sqrt{\|\widehat{\theta}-\widehat{\theta}_0\|_{\widehat{\Sigma}}^2}\right)\right]\\
			&&+\vv 1\{ \widehat{\delta}_1\geq \delta \}\left[\|\widehat{\theta}_0-\theta_0\|_{\Sigma}^2-\|\widehat{\theta}_0-\widehat{\theta}_0\|_{\widehat{\Sigma}}^2+2\delta c_0\|\widehat{\theta}_0\|\left( \sqrt{\|\widehat{\theta}_0-\theta_0\|_{\Sigma}^2 } - \sqrt{\|\widehat{\theta}_0-\widehat{\theta}_0\|_{\widehat{\Sigma}}^2}\right)\right]\\
			&=&\|\widehat{\theta}-\theta_0\|_{\Sigma}^2-\|\widehat{\theta}-\theta_0\|_{\widehat\Sigma}^2+\|\widehat{\theta}-\theta_0\|_{\widehat\Sigma}^2-\|\widehat{\theta}-\widehat{\theta}_0\|_{\widehat{\Sigma}}^2\\
			&&+2\delta c_0\|\theta^*\|\left( \sqrt{\|\widehat{\theta}-\theta_0\|_{\Sigma}^2 } -  \sqrt{\|\widehat{\theta}-\widehat\theta_0\|_{\Sigma}^2  } \right) +2\delta c_0\|\theta^*\|\left( \sqrt{\|\widehat{\theta}-\widehat\theta_0\|_{\Sigma}^2  } -\sqrt{\|\widehat{\theta}-\widehat{\theta}_0\|_{\widehat{\Sigma}}^2}\right)\\&&+o_p(R(\widehat{\theta},\delta)-\widehat{R}(\widehat{\theta},\delta))\\
			&=&(\theta^*-\theta_0)^{\top}(\Sigma-\widehat{\Sigma} )(\theta^*-\theta_0)+2(\widehat{\theta}_0-\theta_0)^{\top}\Sigma(\theta^*-\theta_0)+2\delta c_0\|\theta^*\|  \frac{ (\widehat{\theta}_0-\theta_0)^{\top}\Sigma(\theta^*-\theta_0) }{\sqrt{\|\theta^*-\theta_0\|_{\Sigma}^2+\sigma^2  }}\\&&+o_p(R(\widehat{\theta},\delta)-\widehat{R}(\widehat{\theta},\delta)).
		\end{eqnarray*}

		Next we present the statement ``$\|\theta^*-\theta_0\|_{\Sigma}+c_0\delta \|\theta^*\|$ is an increasing function in $\delta$ for any $\Sigma$ and $\theta_0$".
		
		From (\ref{eqn:cond}) in Proposition \ref{thm:opt}, the first-order optimality condition to minimize population adversarial loss is
		\begin{eqnarray*}
		\lambda\left(1+\delta c_0 \frac{\|\theta(\lambda)\|_2}{\|\theta(\lambda)-\theta_0\|_{\Sigma}}\right)= \left(\delta   c_0 \frac{\|\theta(\lambda)-\theta_0\|_{\Sigma}}{\|\theta(\lambda)\|_2}+\delta^2\right),
		\end{eqnarray*}
		which is a quadratic function of $\delta$ (take $A=\|\theta(\lambda)\|/\|\theta(\lambda)-\theta_0\|_{\Sigma}$):
		\begin{eqnarray*}
		\delta^2+\delta\left( \frac{c_0}{A}- \lambda c_0 A \right)-\lambda=0.
		\end{eqnarray*}
		Therefore, $\delta$ can be written as a function of $\lambda$:
		\begin{eqnarray*}
		\delta=\frac{1}{2}\left[ \lambda c_0 A-\frac{c_0}{A}+\sqrt{\left( \lambda c_0 A-\frac{c_0}{A}\right)^2+4\lambda  } \right],
		\end{eqnarray*}
		thus
		\begin{eqnarray*}
		c_0\delta\|\theta(\lambda)\|&=& c_0\delta A\|\theta(\lambda)-\theta_0\|_{\Sigma}\\
		&=&\frac{c_0\|\theta(\lambda)-\theta_0\|_{\Sigma}}{2}\left[ \lambda c_0 A^2-c_0+\sqrt{\left( \lambda c_0 A^2-c_0\right)^2+4\lambda A^2 } \right].
		\end{eqnarray*}
		Therefore,
		\begin{eqnarray*}
		&&\frac{\partial}{\partial \lambda}\left( \|\theta(\lambda)-\theta_0\|_{\Sigma}+c_0\delta\|\theta(\lambda)\| \right)\\
		&=&\frac{\partial}{\partial \lambda}\|\theta(\lambda)-\theta_0\|_{\Sigma}\left\{1+ \frac{c_0}{2}\left[ \lambda c_0 A^2-c_0+\sqrt{\left( \lambda c_0 A^2-c_0\right)^2+4\lambda A^2 } \right]\right\}\\
		&=&\left\{1+ \frac{c_0}{2}\left[ \lambda c_0 A^2-c_0+\sqrt{\left( \lambda c_0 A^2-c_0\right)^2+4\lambda A^2 } \right]\right\}\left(\frac{\partial}{\partial \lambda}\|\theta(\lambda)-\theta_0\|_{\Sigma}\right)\\
		&&+\|\theta(\lambda)-\theta_0\|_{\Sigma}\frac{\partial}{\partial \lambda}\left\{1+ \frac{c_0}{2}\left[ \lambda c_0 A^2-c_0+\sqrt{\left( \lambda c_0 A^2-c_0\right)^2+4\lambda A^2 } \right]\right\}.
		\end{eqnarray*}
		The derivatives becomes
		\begin{eqnarray}\label{eqn:theta0}
		\frac{\partial}{\partial \lambda}\|\theta(\lambda)-\theta_0\|_{\Sigma}=\frac{1}{2\|\theta(\lambda)-\theta_0\|_{\Sigma}}\frac{\partial \|\theta(\lambda)-\theta_0\|_{\Sigma}^2}{\partial \lambda}
		\end{eqnarray}
		and
		\begin{eqnarray*}
		&&\frac{\partial}{\partial \lambda}\left\{1+ \frac{c_0}{2}\left[ \lambda c_0 A^2-c_0+\sqrt{\left( \lambda c_0 A^2-c_0\right)^2+4\lambda A^2 } \right]\right\}\\
		&=& \frac{c_0}{2}\left[ c_0 A^2+\frac{ 2c_0 A^2(\lambda c_0 A^2-c_0)+4A^2  }{ 2\sqrt{\left( \lambda c_0 A^2-c_0\right)^2+4\lambda A^2 }  } \right]+ \frac{c_0}{2}\left[ \lambda c_0 +\frac{ 2\lambda c_0(\lambda c_0A^2-c_0)+4\lambda }{  2\sqrt{\left( \lambda c_0 A^2-c_0\right)^2+4\lambda A^2 }  } \right]\frac{\partial A^2}{\partial \lambda}\\
		&=& \frac{c_0}{2}\left[ c_0 A^2+\frac{ c_0 A^2(\lambda c_0 A^2-c_0)+2A^2  }{ \sqrt{\left( \lambda c_0 A^2-c_0\right)^2+4\lambda A^2 }  } \right]+ \frac{c_0}{2}\left[ \lambda c_0 +\frac{ \lambda c_0(\lambda c_0A^2-c_0)+2\lambda }{  \sqrt{\left( \lambda c_0 A^2-c_0\right)^2+4\lambda A^2 }  } \right]\frac{\partial A^2}{\partial \lambda},
		\end{eqnarray*}
		where
		\begin{eqnarray}\label{eqn:A2}
		\frac{\partial A^2}{\partial \lambda}=\frac{1}{\|\theta(\lambda)-\theta_0\|_{\Sigma}^2}\frac{\partial \|\theta(\lambda)\|^2}{\partial \lambda} -\frac{ \|\theta(\lambda)\|^2 }{\|\theta(\lambda)-\theta_0\|_{\Sigma}^4}   \frac{\partial \|\theta(\lambda)-\theta_0\|_{\Sigma}^2}{\partial \lambda}.
		\end{eqnarray}
		For any $\lambda\geq0$, one can check that
		\begin{eqnarray*}
		\frac{c_0}{2}\left[ c_0 A^2+\frac{ 2c_0 A^2(\lambda c_0 A^2-c_0)+4A^2  }{ 2\sqrt{\left( \lambda c_0 A^2-c_0\right)^2+4\lambda A^2 }  } \right]\geq 0,
		\end{eqnarray*}
		and
		\begin{eqnarray}
		\|\theta(\lambda)-\theta_0\|_{\Sigma}\frac{c_0}{2}\left[ c_0 A^2+\frac{ 2c_0 A^2(\lambda c_0 A^2-c_0)+4A^2  }{ 2\sqrt{\left( \lambda c_0 A^2-c_0\right)^2+4\lambda A^2 }  } \right]=\frac{\|\theta(\lambda)-\theta_0\|_{\Sigma}^2}{\|\theta(\lambda)-\theta_0\|}\frac{c_0}{2}\left[ c_0 A^2+\frac{ c_0 A^2(\lambda c_0 A^2-c_0)+2A^2  }{ 2\sqrt{\left( \lambda c_0 A^2-c_0\right)^2+4\lambda A^2 }  } \right].
		\end{eqnarray}

		The coefficient w.r.t $\partial \|\theta(\lambda)-\theta_0\|_{\Sigma}^2/\partial \lambda$ is
		\begin{eqnarray*}
		&&\frac{1}{2\|\theta(\lambda)-\theta_0\|_{\Sigma}} \left\{1+ \frac{c_0}{2}\left[ \lambda c_0 A^2-c_0+\sqrt{\left( \lambda c_0 A^2-c_0\right)^2+4\lambda A^2 } \right]\right\}\\
		&&-\frac{A^2}{\|\theta(\lambda)-\theta_0\|_{\Sigma}}  \frac{c_0}{2}\left[ \lambda c_0 +\frac{ \lambda c_0(\lambda c_0A^2-c_0)+2\lambda }{  \sqrt{\left( \lambda c_0 A^2-c_0\right)^2+4\lambda A^2 }  } \right]\\
		&=&\frac{1}{2\|\theta(\lambda)-\theta_0\|_{\Sigma}} \left\{ 1+ \frac{c_0}{2}\left[ \lambda c_0 A^2-c_0+\sqrt{\left( \lambda c_0 A^2-c_0\right)^2+4\lambda A^2 } \right]- c_0A^2 \left[ \lambda c_0 +\frac{ \lambda c_0(\lambda c_0A^2-c_0)+2\lambda }{  \sqrt{\left( \lambda c_0 A^2-c_0\right)^2+4\lambda A^2 }  } \right]  \right\}.
		\end{eqnarray*}
		The coefficient w.r.t $\partial \|\theta(\lambda)\|^2/\partial \lambda$  is
		\begin{eqnarray*}
		\frac{1}{\|\theta(\lambda)-\theta_0\|_{\Sigma}}\frac{c_0}{2}\left[ \lambda c_0 +\frac{ \lambda c_0(\lambda c_0A^2-c_0)+2\lambda }{  \sqrt{\left( \lambda c_0 A^2-c_0\right)^2+4\lambda A^2 }  } \right].
		\end{eqnarray*}
		Decompose $\Sigma$ as $PDP^{\top}$ and take $\beta_0=P^{\top}\theta_0$, then
		\begin{eqnarray*}
		\frac{\partial \|\theta(\lambda)-\theta_0\|_{\Sigma}^2}{\partial \lambda}=\frac{ \partial }{\partial \lambda}\beta_0^{\top}\left( \frac{\lambda^2 D}{(D+\lambda \vv I_p)^2} \right)\beta_0=2\lambda\beta_0^{\top}\left(\frac{D^2}{(D+\lambda \vv I_p)^3}\right)\beta_0,
		\end{eqnarray*}
		and
		\begin{eqnarray*}
		\frac{\partial \|\theta(\lambda)\|^2}{\partial \lambda}=\frac{\partial }{\partial \lambda} \beta_0^{\top}\left(\frac{D^2}{(D+\lambda \vv I_p)^2}\right)\beta_0=2\beta_0^{\top}\left(\frac{D^2}{(D+\lambda \vv I_p)^3}\right)\beta_0,
		\end{eqnarray*}
		\begin{eqnarray*}
		\|\theta(\lambda)-\theta_0\|_{\Sigma}^2=\beta_0^{\top}\left( \frac{\lambda^2D^2+\lambda^3 \vv I_p}{(D+\lambda \vv I_p)^3} \right)\beta_0.
		\end{eqnarray*}
		Combining all the above results, we have
		\begin{eqnarray*}
		&&\frac{\partial}{\partial \lambda}\left( \|\theta(\lambda)-\theta_0\|_{\Sigma}+c_0\delta\|\theta(\lambda)\| \right)\\
		&=& \frac{\beta_0^{\top}\left(\frac{\lambda D^2}{(D+\lambda \vv I_p)^3}\right)\beta_0}{\|\theta(\lambda)-\theta_0\|_{\Sigma}}\left\{  1+ \frac{c_0}{2}\left[ \lambda c_0 A^2-c_0+\sqrt{\left( \lambda c_0 A^2-c_0\right)^2+4\lambda A^2 } \right]\right.\\&&\qquad\left.- c_0 A^2\left[ \lambda c_0 +\frac{ \lambda c_0(\lambda c_0A^2-c_0)+2\lambda }{  \sqrt{\left( \lambda c_0 A^2-c_0\right)^2+4\lambda A^2 }  } \right]  -c_0 \left[ c_0 +\frac{  c_0(\lambda c_0A^2-c_0)+2}{  \sqrt{\left( \lambda c_0 A^2-c_0\right)^2+4\lambda A^2 }  } \right]    \right\}\\
		&&+\frac{\beta_0^{\top}\left(\frac{\lambda^2D^2+\lambda^3D }{(D+\lambda \vv I_p)^3}\right)\beta_0}{\|\theta(\lambda)-\theta_0\|_{\Sigma}} \frac{c_0}{2}\left[ c_0 A^2+\frac{ c_0 A^2(\lambda c_0 A^2-c_0)+2A^2  }{ \sqrt{\left( \lambda c_0 A^2-c_0\right)^2+4\lambda A^2 }  } \right]\\
		&=& \frac{\beta_0^{\top}\left(\frac{\lambda D^2}{(D+\lambda \vv I_p)^3}\right)\beta_0}{\|\theta(\lambda)-\theta_0\|_{\Sigma}}\left\{  1+ \frac{c_0}{2}\left[ \lambda c_0 A^2-c_0+\sqrt{\left( \lambda c_0 A^2-c_0\right)^2+4\lambda A^2 } \right]\right.\\&&\qquad\left.- \frac{c_0}{2} A^2\left[ \lambda c_0 +\frac{ \lambda c_0(\lambda c_0A^2-c_0)+2\lambda }{  \sqrt{\left( \lambda c_0 A^2-c_0\right)^2+4\lambda A^2 }  } \right]  -c_0 \left[ c_0 +\frac{  c_0(\lambda c_0A^2-c_0)+2}{  \sqrt{\left( \lambda c_0 A^2-c_0\right)^2+4\lambda A^2 }  } \right]    \right\}\\
		&&+\frac{\beta_0^{\top}\left(\frac{\lambda^3D }{(D+\lambda \vv I_p)^3}\right)\beta_0}{\|\theta(\lambda)-\theta_0\|_{\Sigma}} \frac{c_0}{2}\left[ c_0 A^2+\frac{ c_0 A^2(\lambda c_0 A^2-c_0)+2A^2  }{ \sqrt{\left( \lambda c_0 A^2-c_0\right)^2+4\lambda A^2 }  } \right]\\
		&=&\frac{\beta_0^{\top}\left(\frac{\lambda D^2}{(D+\lambda \vv I_p)^3}\right)\beta_0}{\|\theta(\lambda)-\theta_0\|_{\Sigma}}\left\{  1-\frac{3c_0^2}{2}+ \frac{c_0}{2}\sqrt{\left( \lambda c_0 A^2-c_0\right)^2+4\lambda A^2 }  -\left(\frac{c_0}{2} A^2\lambda +c_0\right) \left[ \frac{  c_0(\lambda c_0A^2-c_0)+2}{  \sqrt{\left( \lambda c_0 A^2-c_0\right)^2+4\lambda A^2 }  } \right]    \right\}\\
		&&+\frac{\beta_0^{\top}\left(\frac{\lambda^3D }{(D+\lambda \vv I_p)^3}\right)\beta_0}{\|\theta(\lambda)-\theta_0\|_{\Sigma}} \frac{c_0}{2}\left[ c_0 A^2+\frac{ c_0 A^2(\lambda c_0 A^2-c_0)+2A^2  }{ \sqrt{\left( \lambda c_0 A^2-c_0\right)^2+4\lambda A^2 }  } \right]\\
		&\geq&\frac{\beta_0^{\top}\left(\frac{\lambda D^2}{(D+\lambda \vv I_p)^3}\right)\beta_0}{\|\theta(\lambda)-\theta_0\|_{\Sigma}}\left\{  1-\frac{3c_0^2}{2}+ \frac{c_0}{2}\sqrt{\left( \lambda c_0 A^2-c_0\right)^2+4\lambda A^2 }  -\left(\frac{c_0}{2} A^2\lambda +c_0\right) \left[ \frac{  c_0(\lambda c_0A^2-c_0)+2}{  \sqrt{\left( \lambda c_0 A^2-c_0\right)^2+4\lambda A^2 }  } \right]    \right\}\\
		&&+\frac{\beta_0^{\top}\left(\frac{\lambda D^2 }{(D+\lambda \vv I_p)^3}\right)\beta_0}{\|\theta(\lambda)-\theta_0\|_{\Sigma}} \frac{c_0}{2}\left[ c_0 +\frac{ c_0 (\lambda c_0 A^2-c_0)+2 }{ \sqrt{\left( \lambda c_0 A^2-c_0\right)^2+4\lambda A^2 }  } \right]\\
		&=&\frac{\beta_0^{\top}\left(\frac{\lambda D^2}{(D+\lambda \vv I_p)^3}\right)\beta_0}{\|\theta(\lambda)-\theta_0\|_{\Sigma}}\left\{  1-c_0^2+ \frac{c_0}{2}\sqrt{\left( \lambda c_0 A^2-c_0\right)^2+4\lambda A^2 }  -\left(\frac{c_0}{2} A^2\lambda+\frac{c_0}{2} \right) \left[ \frac{  c_0(\lambda c_0A^2-c_0)+2}{  \sqrt{\left( \lambda c_0 A^2-c_0\right)^2+4\lambda A^2 }  } \right]    \right\}.
		\end{eqnarray*}
		Further,
		\begin{eqnarray*}
		&& \sqrt{\left( \lambda c_0 A^2-c_0\right)^2+4\lambda A^2 }\left\{1-c_0^2+ \frac{c_0}{2}\sqrt{\left( \lambda c_0 A^2-c_0\right)^2+4\lambda A^2 }  -\left(\frac{c_0}{2} A^2\lambda+\frac{c_0}{2} \right) \left[ \frac{  c_0(\lambda c_0A^2-c_0)+2}{  \sqrt{\left( \lambda c_0 A^2-c_0\right)^2+4\lambda A^2 }  } \right]\right\}\label{eqn:last}\\
		&\geq& \left(1-c_0^2\right)\sqrt{\left( \lambda c_0 A^2-c_0\right)^2+4\lambda A^2 } +\frac{c_0}{2}\left({\left( \lambda c_0 A^2-c_0\right)^2+4\lambda A^2 }\right)-\frac{c_0A^2\lambda}{2}\left(c_0(\lambda c_0A^2-c_0)+2\right)\\
		&&-\frac{c_0}{2}\left(c_0(\lambda c_0A^2-c_0)+2\right)\\
		&=& \left(1-c_0^2\right)\sqrt{\left( \lambda c_0 A^2-c_0\right)^2+4\lambda A^2 } +\frac{\lambda^2 c_0^3A^4}{2} -c_0^3\lambda A^2+\frac{c_0^3}{2}+2c_0\lambda A^2-\frac{c_0^3\lambda^2 A^4}{2} +\frac{c_0^3\lambda A^2}{2}-c_0A^2\lambda\\
		&&-\frac{c_0^3\lambda A^2}{2}+\frac{c_0^3}{2}-c_0\\
		&=&\left(1-c_0^2\right)\sqrt{\left( \lambda c_0 A^2-c_0\right)^2+4\lambda A^2 }+\frac{c_0^3}{2}-\frac{c_0^3}{2}\lambda A^2+c_0\lambda A^2-\frac{c_0^3\lambda A^2}{2}+\frac{c_0^3}{2}-c_0\\
		&=&\left(1-\frac{3c_0^2}{2}\right)\sqrt{\left( \lambda c_0 A^2-c_0\right)^2+4\lambda A^2 }+c_0^3-c_0^3\lambda A^2+c_0\lambda A^2-c_0\\
		&\geq& \left(1-c_0^2\right)\sqrt{\left( \lambda c_0 A^2-c_0\right)^2+4\lambda A^2 }+c_0^3-c_0.
		\end{eqnarray*}
		Recall that $c_0=\sqrt{2/\pi}$, so when $\lambda A^2>0$,
		\begin{eqnarray*}
			\sqrt{\left( \lambda c_0 A^2-c_0\right)^2+4\lambda A^2 }^2-c_0^2&=& \lambda^2c_0^2A^4+c_0^2-2\lambda c_0^2A^2+4\lambda A^2-c_0^2\\
			&=&\lambda^2 c_0^2A^4-2\lambda c_0^2A^2+4\lambda A^2\\
			&=&A^2\lambda(\lambda c_0^2-2c_0^2+4) > 0.
		\end{eqnarray*}
		Therefore, uniformly for all $\delta$, $\Sigma$, and $\theta_0$, 
		\begin{eqnarray*}
			\frac{\partial}{\partial \lambda}\left( \|\theta(\lambda)-\theta_0\|_{\Sigma}+c_0\delta\|\theta(\lambda)\| \right)\geq 0.
		\end{eqnarray*}
	\end{proof}

	\section{Additional numerical experiments}\label{sec:appendix:numerical}

\paragraph{Effectiveness of the two-stage estimator.} Here we present the performance of the proposed two-stage estimator. We also provide some other methods for references. 
	
	In this experiment, we set $p=10$ and $n=1000$ and take  $r=0.1$. From Theorem \ref{thm:bah}, the adversarial risk of our proposed estimator  is close to $R_0(\theta^*,\delta)$. We compare the performance of several estimators:
	\begin{enumerate}
	    \item \textit{emp}: take $\widehat{\theta}_0=(\X^{\top}\X)^{-1}\X^{\top}\y$ and $\widehat{\Sigma}=\frac{1}{n}\sum_{i=1}^n \x_i\x_i^{\top}$. Denote as $\widehat{\theta}_{emp}$.
\item \textit{mag}:  $\widehat{\theta}_{mag}$ is obtained through taking $\widehat{\Sigma}$ as $\alpha\vv I_p$, where $\alpha=\|\widehat\Sigma\|$. 
\item\textit{adv\_train(y)}: minimize $\min_{\theta}\frac{1}{n}\sum_{i=1}^n\max_{\|\vv x-\vv x_i\|\le \delta}[\x^{\top}\theta-y_i]^2$. Denote as $\widehat\theta_{y}$.
\item \textit{true}: $\theta^*$, for reference.
\item \textit{theta0}: $\theta_0$, for reference.
\item \textit{zero}: $\theta=\vv 0$, for reference.
	\end{enumerate}

	The results are shown in Figure \ref{fig:low_dim_1}. In the left panel, one can see that $R_0(\widehat{\theta}_{emp},\delta)$ is close to $R_0(\theta^*,\delta)$. In addition, comparing $\widehat{\theta}_{emp}$ and $\widehat{\theta}_{mag}$, it is important to consider the effect of $\Sigma$, and one may not assume $\Sigma\propto \vv I_p$ and use $\widehat\theta_{mag}$. On the other hand, for $\widehat\theta_y$, when $\sigma^2\rightarrow 0$, it is expected to converge to $\theta^*$ since the adversarial risk and adversarial prediction are the same when $\delta=0$. However, when $\sigma^2$ gets increasing, its performance in reducing adversarial risk is not as good as $\widehat{\theta}_{emp}$. In terms of $R_0(\theta,0)$ on the right penal, for both $\theta^*$ and $\widehat{\theta}_{emp}$, their standard risk increases in $\delta$ until reaches $R_0(\vv 0, 0)$.
	
	\begin{figure*}[!ht]
		\centering\vspace{-0.15in}
		\includegraphics[scale=0.7]{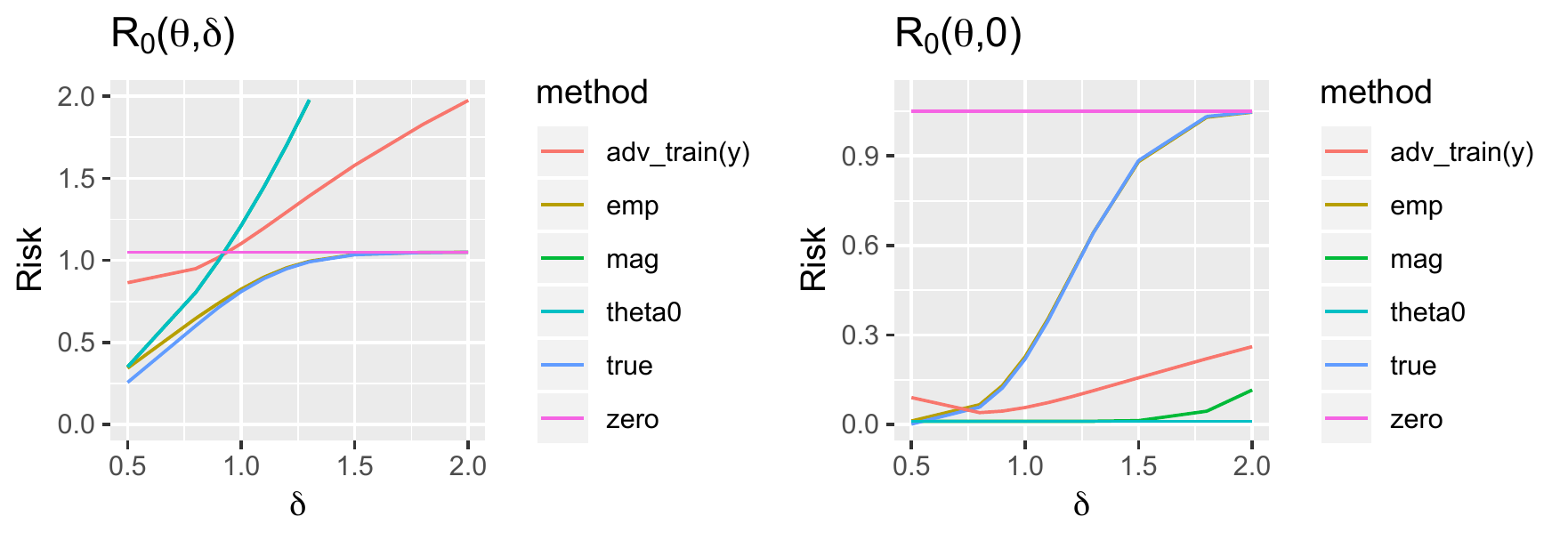}\vspace{-0.15in}
		\caption{Comparison among Estimators under low dimensional case.  Left panel: $R_0(\theta^*,\delta)$ minimizes adversarial risk, and  $R_0(\widehat{\theta}_{emp},\delta)$ is close to $R_0(\theta^*,\delta)$. Right panel: both $R_0(\theta^*,0)$ and $R_0(\widehat{\theta},0)$ increases in $\delta$ until $R_0(\vv 0,0)$.}
		\label{fig:low_dim_1}
	\end{figure*}
	 We present some more results for other choices of $(r,\sigma^2)$ from Figure 5 to Figure 11 in Appendix \ref{sec:appendix:numerical}. Detailed values of $R_0(\theta^*,\delta)$, $R_0(\widehat{\theta}_{emp},\delta)$, and $Std( R_0(\widehat{\theta}_{emp},\delta)-R_0(\theta^*,\delta) )$ are summarized in Table \ref{tab:detail_delta}, and similarly the details for $Std( R_0(\widehat{\theta}_{emp},0)-R_0(\theta^*,0) )$ in Table \ref{tab:detail_0} in Appendix \ref{sec:appendix:numerical}.
	\begin{figure*}[!ht]
	    \centering
	    \includegraphics[scale=0.7]{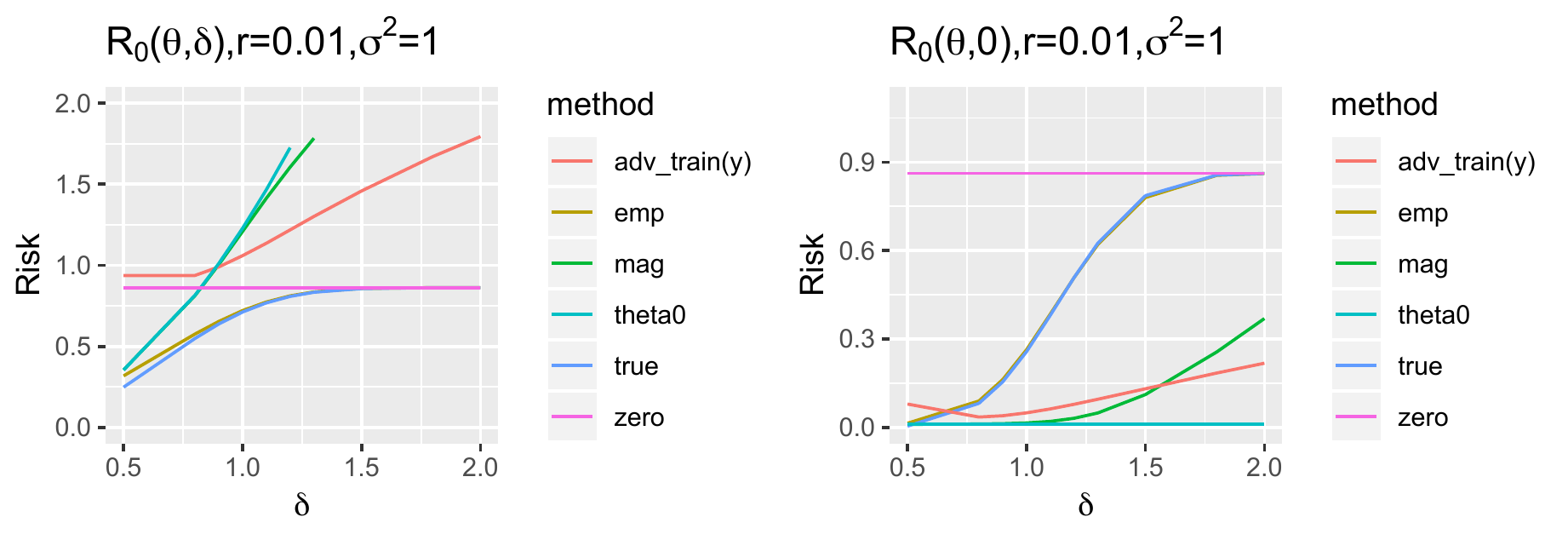}
	    \caption{Performance of Two-Stage Estimator}
	\end{figure*}
	\begin{figure*}[!ht]
	    \centering
	    \includegraphics[scale=0.7]{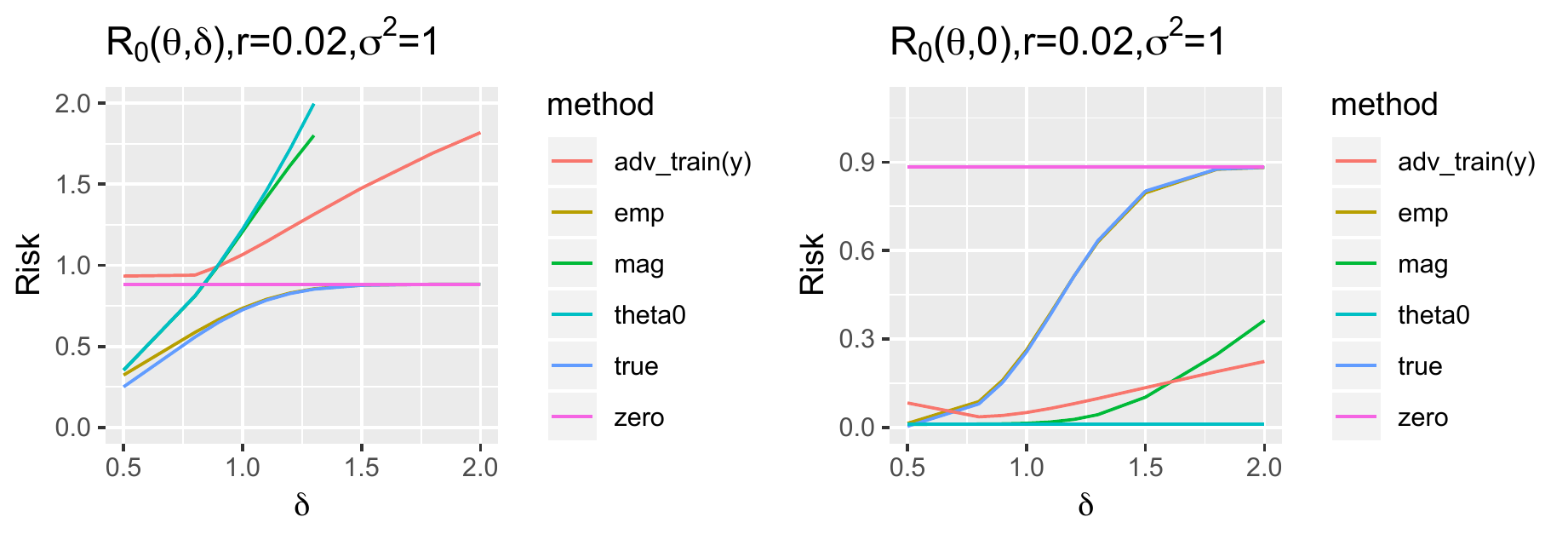}
	    \caption{Performance of Two-Stage Estimator}
	\end{figure*}
	\begin{figure*}[!ht]
	    \centering
	    \includegraphics[scale=0.7]{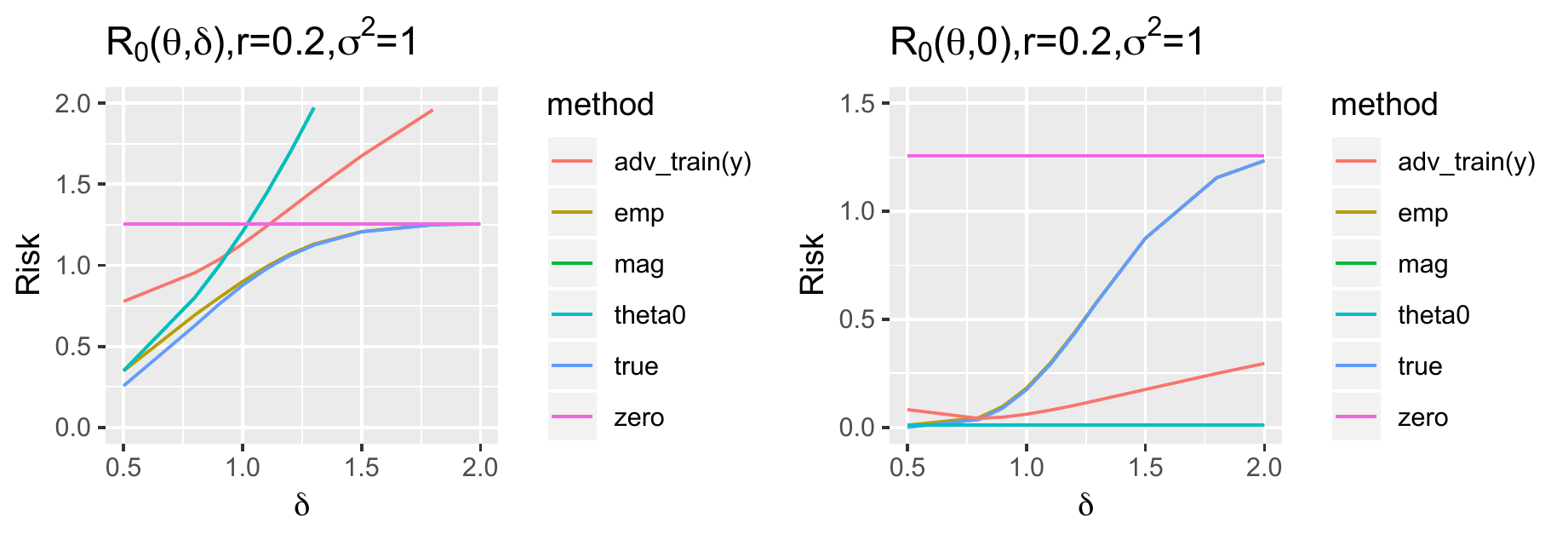}
	    \caption{Performance of Two-Stage Estimator}
	\end{figure*}
	\begin{figure*}[!ht]
	    \centering
	    \includegraphics[scale=0.7]{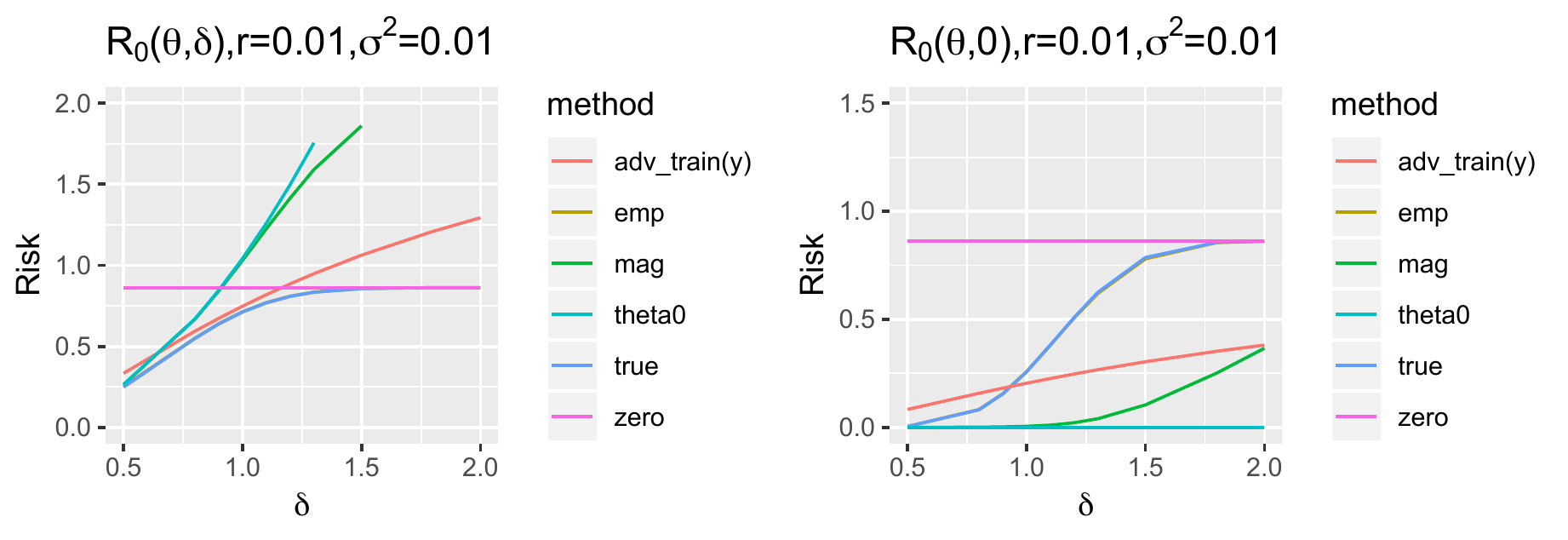}
	    \caption{Performance of Two-Stage Estimator}
	\end{figure*}
	\begin{figure*}[!ht]
	    \centering
	    \includegraphics[scale=0.7]{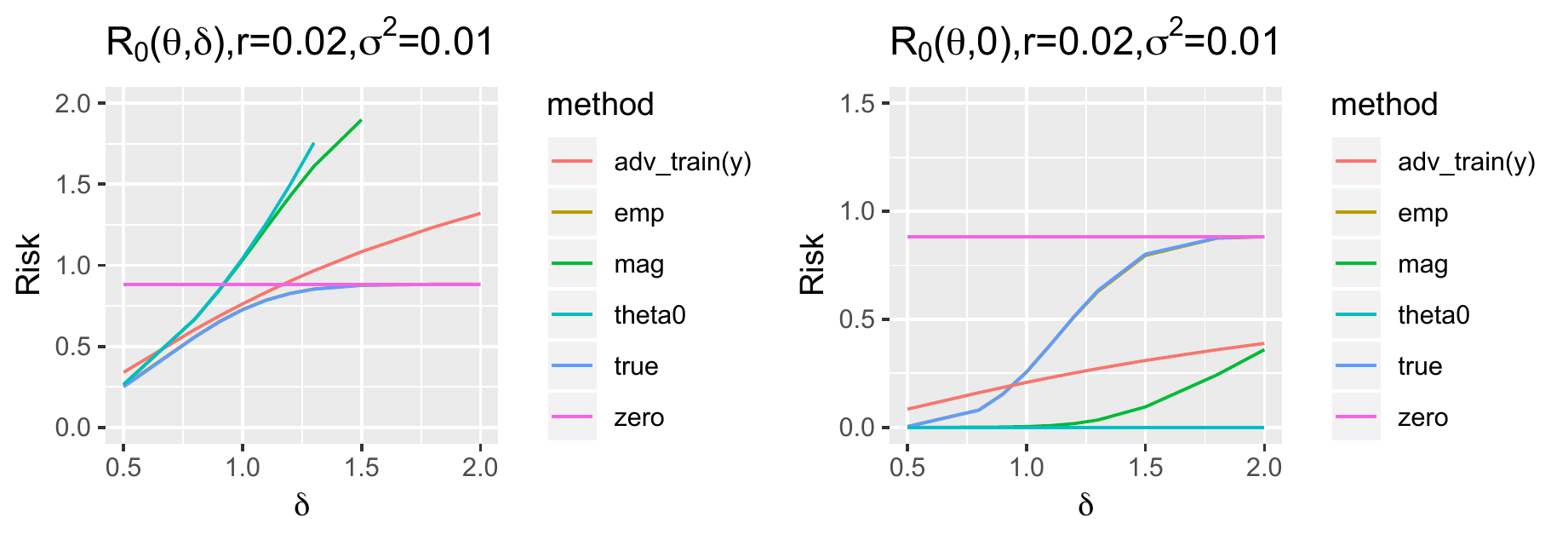}
	    \caption{Performance of Two-Stage Estimator}
	\end{figure*}
	\begin{figure*}[!ht]
	    \centering
	    \includegraphics[scale=0.7]{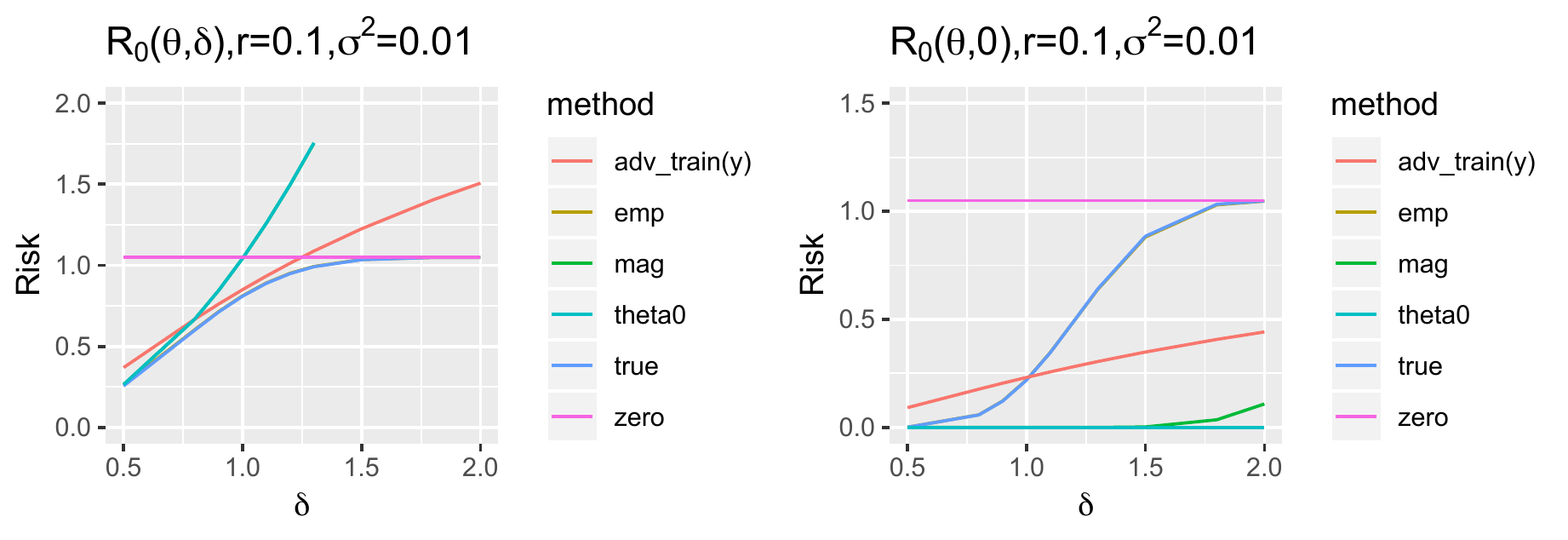}
	    \caption{Performance of Two-Stage Estimator}
	\end{figure*}
	\begin{figure*}[!ht]
	    \centering
	    \includegraphics[scale=0.7]{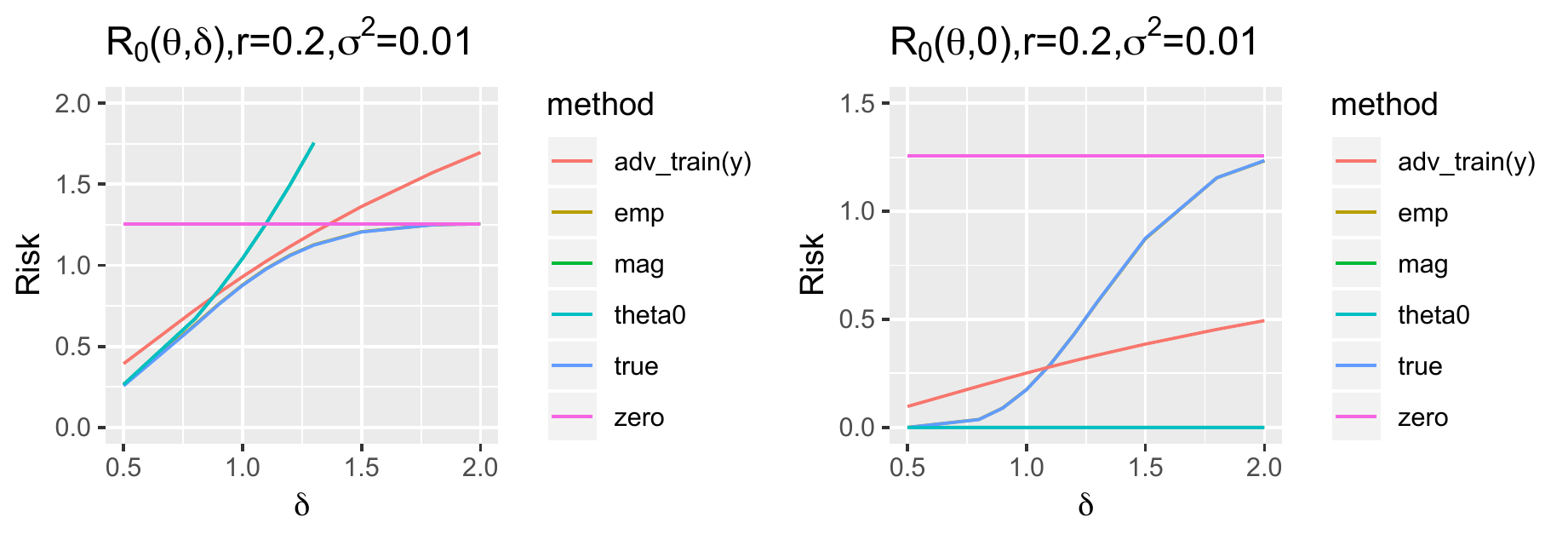}
	    \caption{Performance of Two-Stage Estimator}
	\end{figure*}
	\begin{table*}[!ht]
	\caption{Details of $R_0(\theta,\delta)$. ``sd" represents $Std(R_0(\widehat\theta_{emp},\delta)-R_0(\theta^*,\delta))$.}\label{tab:detail_delta}
\begin{tabular}{|l|l|llllllllll|}
   \hline  $r$& $\delta$     & 0.5    & 0.8    & 0.9    & 1      & 1.1    & 1.2    & 1.3    & 1.5    & 1.8    & 2      \\ \hline 
0.01 & true & 0.2485 & 0.5488 & 0.6381 & 0.7120 & 0.7685 & 0.8082 & 0.8338 & 0.8565 & 0.8622 & 0.8624 \\
     & emp  & 0.3182 & 0.5761 & 0.6539 & 0.7210 & 0.7740 & 0.8117 & 0.8361 & 0.8573 & 0.8622 & 0.8625 \\
     & sd   & 0.0398 & 0.0364 & 0.0242 & 0.0117 & 0.0061 & 0.0044 & 0.0035 & 0.0023 & 0.0005 & 0.0003 \\ \hline 
0.02 & true & 0.2505 & 0.5575 & 0.6494 & 0.7258 & 0.7845 & 0.8259 & 0.8528 & 0.8769 & 0.8829 & 0.8832 \\
     & emp  & 0.3233 & 0.5864 & 0.6662 & 0.7353 & 0.7902 & 0.8296 & 0.8551 & 0.8777 & 0.8830 & 0.8832 \\
     & sd   & 0.0396 & 0.0378 & 0.0257 & 0.0126 & 0.0064 & 0.0045 & 0.0036 & 0.0024 & 0.0005 & 0.0003 \\ \hline 
0.1  & true & 0.2558 & 0.6010 & 0.7125 & 0.8097 & 0.8889 & 0.9489 & 0.9911 & 1.0345 & 1.0484 & 1.0491 \\
     & emp  & 0.3430 & 0.6462 & 0.7387 & 0.8247 & 0.8979 & 0.9547 & 0.9950 & 1.0360 & 1.0486 & 1.0491 \\
     & sd   & 0.0367 & 0.0495 & 0.0365 & 0.0216 & 0.0103 & 0.0060 & 0.0047 & 0.0032 & 0.0010 & 0.0004 \\ \hline 
0.2  & true & 0.2567 & 0.6288 & 0.7585 & 0.8765 & 0.9777 & 1.0603 & 1.1245 & 1.2055 & 1.2497 & 1.2557 \\
     & emp  & 0.3485 & 0.6944 & 0.7991 & 0.8999 & 0.9917 & 1.0694 & 1.1309 & 1.2087 & 1.2505 & 1.2559 \\
     & sd   & 0.0327 & 0.0593 & 0.0487 & 0.0338 & 0.0192 & 0.0096 & 0.0064 & 0.0046 & 0.0023 & 0.0011\\ \hline 
\end{tabular}

\end{table*}

\begin{table*}[!ht]
\caption{Details of $R_0(\theta,0)$. The minimal $R_0(\theta,0)$ is 0 through taking $\theta=\theta_0$.}\label{tab:detail_0}
\begin{tabular}{|l|l|llllllllll|}
    \hline  $r$ & $\delta$     & 0.5    & 0.8    & 0.9    & 1      & 1.1    & 1.2    & 1.3    & 1.5    & 1.8    & 2      \\ \hline 
0.01 & true & 0.0045 & 0.0817 & 0.1546 & 0.2568 & 0.3805 & 0.5085 & 0.6251 & 0.7863 & 0.8561 & 0.8618 \\
     & emp  & 0.0143 & 0.0897 & 0.1622 & 0.2633 & 0.3845 & 0.5082 & 0.6208 & 0.7799 & 0.8553 & 0.8613 \\
     & sd   & 0.0060 & 0.0217 & 0.0313 & 0.0416 & 0.0499 & 0.0538 & 0.0562 & 0.0498 & 0.0132 & 0.0088 \\ \hline 
0.02 & true & 0.0040 & 0.0798 & 0.1528 & 0.2559 & 0.3817 & 0.5132 & 0.6334 & 0.8019 & 0.8765 & 0.8825 \\
     & emp  & 0.0139 & 0.0878 & 0.1605 & 0.2625 & 0.3862 & 0.5135 & 0.6293 & 0.7956 & 0.8755 & 0.8819 \\
     & sd   & 0.0059 & 0.0216 & 0.0315 & 0.0419 & 0.0508 & 0.0548 & 0.0576 & 0.0512 & 0.0140 & 0.0091 \\ \hline 
0.1  & true & 0.0011 & 0.0579 & 0.1218 & 0.2188 & 0.3460 & 0.4929 & 0.6412 & 0.8849 & 1.0326 & 1.0476 \\
     & emp  & 0.0114 & 0.0661 & 0.1298 & 0.2263 & 0.3526 & 0.4968 & 0.6422 & 0.8811 & 1.0297 & 1.0468 \\
     & sd   & 0.0050 & 0.0185 & 0.0292 & 0.0415 & 0.0536 & 0.0618 & 0.0671 & 0.0666 & 0.0291 & 0.0117 \\ \hline 
0.2  & true & 0.0000 & 0.0364 & 0.0893 & 0.1745 & 0.2909 & 0.4312 & 0.5846 & 0.8754 & 1.1550 & 1.2345 \\
     & emp  & 0.0104 & 0.0452 & 0.0973 & 0.1819 & 0.2977 & 0.4369 & 0.5878 & 0.8753 & 1.1560 & 1.2332 \\
     & sd   & 0.0043 & 0.0149 & 0.0248 & 0.0370 & 0.0497 & 0.0618 & 0.0702 & 0.0797 & 0.0613 & 0.0394\\ \hline 
\end{tabular}

\end{table*}
\end{document}